\pgfplotsset{compat=1.12}
\definecolor{DarkGreen}{rgb}{0.1,0.5,0.1}
\definecolor{DarkRed}{rgb}{0.5,0.1,0.1}
\definecolor{DarkBlue}{rgb}{0.1,0.1,0.5}
\definecolor{Gray}{rgb}{0.2,0.2,0.2}
\definecolor{c1}{RGB}{38, 70, 83}
\definecolor{c2}{RGB}{42, 157, 143}
\definecolor{c3}{RGB}{233, 196, 106}
\definecolor{c5}{RGB}{231, 111, 81}
\definecolor{c4}{RGB}{244, 162, 97}
\definecolor{c1}{RGB}{38, 70, 83}
\definecolor{c2}{RGB}{42, 157, 143}
\definecolor{c3}{RGB}{233, 196, 106}
\definecolor{c5}{RGB}{231, 111, 81}
\definecolor{c4}{RGB}{244, 162, 97}
\lstdefinestyle{mystyle}{
    commentstyle=\color{DarkBlue},
    keywordstyle=\color{DarkRed},
    numberstyle=\tiny\color{Gray},
    stringstyle=\color{DarkGreen},
    basicstyle=\footnotesize,
    breakatwhitespace=false,         
    breaklines=true,                 
    captionpos=b,                    
    keepspaces=true,                 
    numbers=left,                    
    numbersep=5pt,                  
    showspaces=false,                
    showstringspaces=false,
    showtabs=false,                  
    tabsize=2
}
\def\draft{1}
\def\submit{0}
\newcommand{\forsubmit}[1]{#1}
\newcommand{\forreals}[1]{}
\newcommand{\forreals}[1]{#1}
\newcommand{\forsubmit}[1]{}
\newtheorem{theorem}{Theorem}[section]
\newtheorem{lemma}[theorem]{Lemma}
\newtheorem{corollary}[theorem]{Corollary}
\theoremstyle{definition}
\newtheorem{definition}[theorem]{Definition}
\newtheorem{example}[theorem]{Example}
\newcommand{\chapterref}[1]{\hyperref[ch:#1]{Chapter~\ref{ch:#1}}}
\newcommand{\claimref}[1]{\hyperref[claim:#1]{Claim~\ref{claim:#1}}}
\newcommand{\corollaryref}[1]{\hyperref[cor:#1]{Corollary~\ref{cor:#1}}}
\newcommand{\definitionref}[1]{\hyperref[def:#1]{Definition~\ref{def:#1}}}
\newcommand{\equationref}[1]{\hyperref[eq:#1]{Equation~\ref{eq:#1}}}
\newcommand{\factref}[1]{\hyperref[fact:#1]{Fact~\ref{fact:#1}}}
\newcommand{\figureref}[1]{\hyperref[fig:#1]{Figure~\ref{fig:#1}}}
\newcommand{\tableref}[1]{\hyperref[tab:#1]{Table~\ref{tab:#1}}}
\newcommand{\itemref}[1]{\hyperref[item:#1]{Item~(\ref{item:#1})}}
\newcommand{\lemmaref}[1]{\hyperref[lem:#1]{Lemma~\ref{lem:#1}}}
\newcommand{\propref}[1]{\hyperref[prop:#1]{Proposition~\ref{prop:#1}}}
\newcommand{\propositionref}[1]{\hyperref[prop:#1]{Proposition~\ref{prop:#1}}}
\newcommand{\remarkref}[1]{\hyperref[rem:#1]{Remark~\ref{rem:#1}}}
\newcommand{\sectionref}[1]{\hyperref[sec:#1]{Section~\ref{sec:#1}}}
\newcommand{\theoremref}[1]{\hyperref[thm:#1]{Theorem~\ref{thm:#1}}}
\newcommand{\E}{\mathbb{E}}
\renewcommand{\hat}{\widehat}
\newcommand{\cE}{{\cal E}}
\newcommand{\cF}{{\cal F}}
\newcommand{\cN}{{\cal N}}
\newcommand{\cS}{{\cal S}}
\newcommand{\cX}{{\cal X}}
\newcommand{\defeq}{\stackrel{\small \mathrm{def}}{=}}
\renewcommand{\leq}{\leqslant}
\renewcommand{\geq}{\geqslant}
\newcommand{\R}{\mathbb{R}}
\newcommand{\Ind}{\mathbb I}
\newcommand{\ignore}[1]{}
\DeclareMathOperator*{\argmin}{arg\,min}
\DeclareMathOperator*{\argmax}{arg\,max}
\renewcommand{\epsilon}{\varepsilon}
\newcommand{\remove}[1]{}
\def\mydatewithyear{\leavevmode\hbox{\the\year/\twodigits\month/\twodigits\day}}
\def\twodigits#1{\ifnum#1<10 0\fi\the#1}
\def\twodigits#1{\ifnum#1<10 0\fi\the#1}
\newtheorem*{rep@theorem}{\rep@title}
\newcommand{\newreptheorem}[2]{%
\newenvironment{rep#1}[1]{%
 \def\rep@title{#2 \ref{##1}}%
 \begin{rep@theorem}}%
 {\end{rep@theorem}}}
\newcommand{\LFD}{L^{\rm FD}}
\newcommand{\LCE}{L^{\rm CE}}
\newcommand{\LPE}{L^{\rm PE}}
\newcommand{\lFD}{\ell^{\rm FD}}
\newcommand{\lCE}{\ell^{\rm CE}}
\newcommand{\lPE}{\ell^{\rm PE}}
\newcommand{\lSI}{\ell^{\rm SI}}
\newcommand{\thetahatmleit}{\widehat{\theta}_{t,i}^{\MLE}}
\newcommand{\CEset}{{\rm CE}}
\newcommand{\MLE}{{\rm MLE}}
\newcommand{\bPE}{b^{\rm PE}}
\newcommand{\RR}{\mathbb{R}}
\newcommand{\EE}{\mathbb{E}}
\newcommand{\PP}{\mathbb{P}}
\newcommand{\xit}{x_{t,i}}
\newcommand{\xti}{x_{t,i}}
\newcommand{\xt}{x_{t}}
\newcommand{\yit}{y_{t,i}}
\newcommand{\yti}{y_{t,i}}
\newcommand{\pt}{p_{t}}
\newcommand{\xonet}{x_{t,1}}
\newcommand{\xnt}{x_{t,n}}
\newcommand{\xeq}{x^\star}
\newcommand{\xeqi}{x_i^\star}
\newcommand{\xeqij}{x_{i,j}^\star}
\newcommand{\peq}{p^\star}
\newcommand{\zerov}{{\bf 0}}
\newcommand{\thetatruei}{\theta^*_{i}}
\newcommand{\vx}{x}
\newcommand{\vp}{p}
\newcommand{\vy}{y}
\newcommand{\ve}{e}
\newcommand{\vone}{\mathbf 1}
\newcommand{\LCal}{\mathcal{L}}
\newcommand{\ba}{\begin{array}}
\newcommand{\ea}{\end{array}}
\newcommand{\XCal}{\mathcal{X}}
\newcommand{\cx}{\mathcal{X}}
\newcommand{\NCal}{\mathcal{N}}
\newcommand{\util}{u}
\newcommand{\utilii}[1]{\util_{#1}}
\newcommand{\utili}{\utilii{i}}
\newcommand{\utilj}{\utilii{j}}
\newcommand{\PE}{\mathcal{PE}}
\newcommand{\thetabar}{\bar{\theta}}
\newcommand{\thetatrue}{\theta^\ast}
\renewcommand{\algorithmiccomment}[1]{\bgroup\hfill\textcolor{blue}{//~#1}\egroup}
\newcommand\numberthis{\addtocounter{equation}{1}\tag{\theequation}}
\newcommand{\Pcal}{\mathcal{P}}
\newcommand{\PCal}{\mathcal{P}}
\newcommand{\Xcal}{\mathcal{X}}
\begin{document}

\begin{center}

  {\bf{\LARGE{Learning Competitive Equilibria in Exchange Economies with Bandit Feedback}}}

\vspace*{.4in}

{\large{
\begin{tabular}{c}
Wenshuo Guo$^{\dagger}$, Kirthevasan Kandasamy$^{\dagger}$, \\Joseph E Gonzalez$^{\dagger}$, Michael I. Jordan$^{\dagger, \ddag}$, Ion Stoica$^{\dagger}$
\end{tabular}
}}
\vspace*{.3in}

\begin{tabular}{c}
$^\dagger$Department of Electrical Engineering and Computer Sciences,\\ 
$^\ddag$Department of Statistics, 
\\University of California, Berkeley\\
\end{tabular}

\vspace*{.3in}

\today

\vspace*{.2in}
\end{center}

The sharing of scarce resources among multiple rational agents is one of the classical problems in economics. In exchange economies, which are used to model such situations, agents begin with an
initial endowment of resources and exchange them in a way that is mutually beneficial until they reach a competitive equilibrium (CE). The allocations at a CE are Pareto efficient and fair. Consequently, they are used widely in designing mechanisms for fair division. However, computing CEs requires the knowledge of agent preferences which are unknown in several applications of interest. In this work, we explore a new online learning mechanism, which, on each round, allocates resources to the agents and collects stochastic feedback on their experience in using that allocation.
Its goal is to learn the agent utilities via this feedback and imitate the allocations at a CE in the long run.
We quantify CE behavior via two losses and
propose a randomized algorithm which achieves sublinear loss
under a parametric class of utilities. Empirically, we demonstrate the effectiveness of this mechanism through numerical simulations.

\section{Introduction}
\label{sec:intro}

An exchange economy (EE) is a classical micro-economic construct used to model situations where
multiple rational agents share a finite set of scarce resources.
Such scenarios arise frequently for applications in operations management, urban planning,
crowd sourcing, wireless networks, and sharing resources in
data centers~\citep{cohen1965theory, harris1982asymmetric,%
simonsen2018citizen, dissanayake2015task, georgiadis2006resource, hussain2013survey}.
% wish to share a finite set of scare resources among multiple ra
In an EE, agents share a set of resources consisting of multiple resource types.
They begin with an initial endowment and then exchange these resources among themselves based
on a price system. This exchange process allows two agents to trade different resource types if they
find it mutually beneficial to do so.
Under certain conditions, continually trading in this manner results in a \emph{competitive
equilibrium} (CE), where the allocations have  desirable
% with those that are more preferred with other agents, and naturally adjusts the overall allocation of resources towards one that is more efficient and fair. Indeed, when an EE arrives at a \emph{competitive equilibrium} (CE), the resulting allocation of
Pareto-efficiency and fairness properties.
% Moreover, CE allocations have built-in transparency: prices can be published so the agents can verify they are being treated
% fairly~\citep{varian1973equity}.
EEs have attracted much research attention,
historically since they are tractable models to study human behavior and price determination in real-world markets, and more recently for designing multi-resource fair division mechanisms~\citep{debreu1982existence, crockett2008learning,
tiwari2009competitive, budish2017course, babaioff2019fair, babaioff2021competitive}.

One of the most common use cases for fair division, which will be especially pertinent in this work,
occurs in the context of shared computational resources.
For instance, in a data center shared by an organization, we wish to allocate
resources such as CPUs, memory, and GPUs to different users who wish to share this cluster in a way that is Pareto-efficient (so that the resources are put into good use) and
fair (for long-term user satisfaction).
Here, unlike in real world economies where agents might trade with each other until they reach an
equilibrium, the equilibrium is computed using a
central mechanism (e.g. a cluster manager) based on the preferences submitted by
the agents to obtain an allocation with the above properties.
Indeed, fair division mechanisms are a staple in many popular multi-tenant cluster management frameworks used in practice, such as Mesos~\citep{hindman2011mesos}, Quincy~\citep{isard2009quincy},
Kubernetes~\citep{kubernetes}, and Yarn~\citep{yarn}.
Due to this strong practical motivation, a recent line of work has studied such fair division mechanisms for
resource sharing in a compute cluster%
~\citep{chen2018scheduling,ghodsi2013choosy,parkes2015beyond,ghodsi2011dominant},
with some of them based on exchange economies and their variants~\citep{zahedi2018amdahl,gutman2012,lai2005tycoon,varian1973equity}.

However, prior work on EEs and fair division typically assumes knowledge of the agent preferences, in the form of a
utility function which maps an allocation of the $m$ resource types to the value the agent
derives from the allocation.
For instance, in the above example, an application developer needs to quantify how well her application performs for each allocation of CPU/memory/GPU she receives.
At best, doing so requires the laborious and often erroneous task of profiling their
application~\citep{delimitrou2013paragon,misra2021rubberband},  and at worst, it can be infeasible due to practical constraints~\citep{venkataraman2016ernest,rzadca2020autopilot}.
However, having received an allocation, application developers find it easier to
report feedback about the utilities based on the performance they achieved. Moreover, in many
real-world systems, this feedback scheme can often be automated~\citep{hindman2011mesos}.

\subsection{Contributions \& summary of results}
% \paragraph{Contributions.}

We study a multi-round mechanism for computing CE in an exchange economy so as
to generate fair and efficient allocations when
the exact utilities are \textit{unknown} a priori. A central mechanism is used to learn the user utilities over time via feedback from the agents.
At the beginning of
each round, the mechanism generates allocations; at the end of the round,
agents report feedback on the allocation they received.
The mechanism then uses this information to better learn the preferences.
In particular, we focus on applications for fair division where a centralized mechanism can compute
an allocation of these resources on each round, say, by estimating the utilities and finding their equilibria.

% We make this process formal with an online learning
% framework, and demonstrate its advantages via bounds on the loss and empirical simulations.
% In particular:
% We use a multi-round online learning formalism, where on each round, we allocate resources to
% each agent, t
% We consider an online learning set up, where a
% We formally characterize this problem through an online learning
% framework, and propose a new mechanism which allocates resources over multiple rounds and learns the
% agents utilities while simultaneously converging to achieve a CE allocation. In particular:
% \begin{enumerate}

% \emph{(i)}
In this pursuit,
we first formalize this online learning task and construct two loss functions: the first $L^{\rm
CE}$ directly builds on the definition of a CE, while the latter $L^{\rm PE}$ is motivated by the
fairness and Pareto-efficiency considerations that arise in fair division.
To make the learning problem tractable, we focus on a parametric class of utilities which include
the constant elasticity of substitution (CES) utilities which feature prominently in the
econometric literature and other application-specific utilities used in the systems literature.

We develop a randomized online mechanism which efficiently learns utilities over rounds of
allocations while simultaneously striving to achieve Pareto-efficient and fair allocations.
We show that this mechanism achieves $\widetilde{O}(\sqrt{T})$
loss for the two loss functions with both in-expectation and
high-probability upper bounds (Theorems~\ref{thm:TS-anytime} and~\ref{thm:TS-finitehorizon}), under
a general family of utility functions.
% 
% In particular, we considered a general parametric class of
% utility functions, which captures most of the classes of utilities studied in fair division,
% including the CES utilities which feature prominently in the econometric literature (details in
% Section~\ref{sec:background-ee}).
% 
To the best of our knowledge, this is the first work that studies CE without knowledge of user
utilities; as such different analysis techniques are necessary. For instance, finding a CE is
distinctly different from a vanilla optimization task, and common strategies in bandit optimization
such as upper-confidence-bound (UCB) based algorithms
do not apply (details in~\ref{sec:alg}). Instead, our algorithm uses a
sampling procedure to balance the exploration-exploitation trade-off.
We develop new techniques both to bound the losses and to analyse the algorithm.
% and requires novel analysis techniques to bound the losses.
Finally, we corroborate these theoretical insights with empirical simulations.

\vspace{-1mm}
\subsection{Related work}
\label{sec:related_work}
\vspace{-2mm}

Our work builds on a rich line of literature at the intersection of microeconomics and machine
learning. This richness is not surprising: many real world systems are economic and multi-agent in nature, where decisions taken by or for one agent are weighed against the considerations of others,
especially when these agents have competing goals such as in resource allocation, matching
markets, and in auction-like settings.

As in this work, several works have studied online learning formulations to handle situations where the agents' preferences are not known a priori, but can be learned from repeated
interactions~\citep{dudik2017oracle,kakade2010optimal,balcan2016sample,babaioff2013multi,athey2013efficient,kandasamy2020mechanism}.
Our setting departs from these as we wish to learn agent preferences in an exchange
economy, with a focus on designing fair division mechanisms.

% \kk{We should include a commentary on how this melds ideas from microeconomics and machine
% learning. We can also cite some papers on auctions etc. to point to a long trend in doing so.
% Hence, our work is in line with this literature, but we are considering a slightly different
% problem.}

Since the seminal work of~\citet{varian1973equity}, fair division of multiple resource types has
received significant attention in the game theory, economics, and computer systems literature. One
of the most common perspectives on this problem is as an exchange economy (or as a Fisher market,
which is a special case of an EE). Moreover, fair allocation mechanisms have been deployed in many
practical resource allocation tasks when compute resources are shared by multiple users.
Due to space constraints, we defer a more detailed overview on this line of works in Appendix~\ref{app:EE-background}.

% ~\citep{varian1973equity,mas1995microeconomic,
% gutman2012, crockett2008learning,
% tiwari2009competitive, budish2017course, babaioff2019fair, babaioff2021competitive}.

% In addition to this theoretical literature, fair allocation mechanisms have been deployed in many
% practical resource allocation tasks when compute resources are shared by multiple users~\citep{zahedi2018amdahl}. There have also been applications of other market-based resource allocation schemes for data centers and power grids~\citep{lai2005tycoon,wolski2001analyzing, hindman2011mesos, kubernetes, yarn}. 
% One line of work in this setting studied fair division when the resources in question are
% perfect complements; some examples include dominant resource fairness
% and its variants~\citep{ghodsi2011dominant,parkes2015beyond, gutman2012,ghodsi2013choosy, li2013egalitarian,dolev2012no}.
% Although the assumption of perfect complement resource types leads to computationally simple mechanisms, in many practical applications, there is ample substitutability between resources, and hence the above mechanisms can be inappropriate.
% For example, in compute clusters, CPUs and GPUs are often interchangeable for many jobs, albeit
% with different performance characteristics.

Notably, in all of the above cases, an important requirement for the mechanism is that agent utilities be known ahead of time. Some work has attempted to lift this limitation by making  explicit assumptions on the utility, but it is not clear that if these assumptions hold in
practice~\citep{le2020allox,zahedi2018amdahl}. Recently,~\citet{kandasamy2020online} provides a general method for learning agent utilities for fair division using feedback. However, they only study a \textit{single-resource} setting and do not do not explore multiple resource types.
Crucially, in the multi-resource setting, one agent can exchange a resource of one type for a
different type of resource from another user, so that both are better off after the exchange. 
Thus, learning in a \textit{multi-resource} setting is significantly more challenging than the single-resource case since there is no notion of exchange, and requires new analysis techniques.
% single-resource setting since the utilities are more complicated,
% with tradeoffs and
% exchangeabilities between different type of resources. These will require different formalisms,
% algorithms, and analysis techniques.

% \kk{KK todo: Talk about techniques from linera bandits we are using.}

% \section{Exchange Economies, Competitive Equilibria, and Fair Division}
\section{Background}
\label{sec:background}
\vspace{-2mm}

% \kk{I've renamed the section to background to make it clear that all the ideas here are not new.}

% We first briefly review some background that will be necessary for our problem formulation. 
We first present some necessary background material on exchange economies, their competitive equilibria, and fair division mechanisms.

\subsection{Exchange economies}
\label{sec:background-ee}

In an exchange economy, we have $n$ agents and $m$ divisible resource types.
Each agent $i\in[n]$ has an endowment, $\ve_i = (\ve_{i1}, \dots, e_{im})$, where $e_{ij}$
can be viewed the amount of resource $j$ agent $i$ brings to the economy for trade.
In the shared compute cluster example, $\ve_i$ may represent agent $i$'s contribution to this
cluster. Without loss of generality we assume $\sum_{i\in[n]}\ve_{i1} = 1$ so that the space of
resources is denoted by $[0, 1]^m$.

We denote an allocation of these resources to the $n$ agents by
$x=(x_1,x_2,\dots,x_n)$, where $x_i\in[0,1]^m$ and $x_{ij}$ denote the amount of resource $j$ that
is allocated to agent $i$.
% Denote the allocation as $x \in [0, 1]^{n \times m}$, where each entry $x_{ij}$ represents the
% fraction of resource $j$ that is allocated to agent $i$. Without loss of generality, we consider
% the setting where each resource has a total available amount of one unit.
The set of all feasible
allocations is therefore $\mathcal{X}= \{x: \sum_{i=1}^m x_{ij} \leq \vone, x_{ij} \geq 0, \forall i\in[n],
j\in[m]\}$.

% Typically, fair allocations are generated based on the agents' utilities.
An agent's utility function is simply $\utili:[0, 1]^m \rightarrow [0, 1]$, where $\utili(x_i)$ represents her valuation for an allocation $x_i$ she receives. Here $\utili$ is non-decreasing, i.e., $\utili(x_i) \leq \utili(x'_i)$ for
all $x_i \leq x'_i$ element-wise (more allocations will not hurt).

In an exchange economy, agents exchange resources based on a price system. We denote a price vector by $p$, where $p\in\RR^m_+$ and $1^\top p = 1$ (the normalization accounts for the fact that only relative prices matter). Here $p_j$ denotes the price for resource $j$. Given a price vector $p$, an agent $i$ has a \emph{budget} $p^\top \ve_i$, which is the \emph{monetary} value of her endowment according to the prices in $p$. As this is an economy, a rational agent will then seek to maximize her utility under her budget:
\begin{align}
d_i(p) = \argmax_{x_i\in [0,1]^m} u_i(x_i)
\quad\text{subject to}\; p^\top x_i \leq p^\top e_i.
\label{eqn:demandset}
\end{align}
While generally, the preferred allocations $d_i(p)$ form a set, for simplicity we will assume it is a singleton
and treat $d_i$ as a function which outputs an allocation for agent $i$. This is justified under very general conditions~\citep{mas1995microeconomic,varian1992microeconomic}. We refer to $d_i(p)$ chosen in the above manner as the agent $i$'s demand for prices $p$.

\paragraph{Competitive equilibria -- definition, existence and uniqueness:}
A natural way to allocate resources to agents is to set prices $p$ for the resources,
and have the agents maximize their utility under this price system. That is, we allocate $x(p)=(x_1,\dots,x_n)$. Unfortunately, such an allocation may be infeasible, and even if it were, it may not result in an efficient allocation. However, under certain conditions, we can compute a \emph{competitive equilibrium} (CE), where the prices have both of the desired properties:

\begin{definition}[Competitive (Walrasian) Equilibrium] \label{def:ce}
A CE is a pair of allocations and prices $(\xeq, \peq)$ such that
\textbf{\emph{(i)}} the allocations are feasible
and 
\textbf{\emph{(ii)}} all agents maximize their utilities under the budget induced by prices $\peq$.
Precisely,
\begin{align*}
   \sum_{i\in[n]}\xeqij &\leq
\sum_{i\in[n]} e_{ij} = 1, \quad \forall\,j\in[m],\\
\xeqi &= d_i(\peq),
\quad \forall\,i\in[n]. 
\end{align*}
\end{definition}
Some definitions of a CE require that the first condition above being an exact equality
(e.g.,~\citep{mas1995microeconomic}). However, when the utilities are strictly increasing (which will be the case in the sequel), 
both definitions coincide~\citep{varian1992microeconomic}.

\paragraph{Utilities.}
In general, CEs do always exist but may not be unique. However, 
% one condition that guarantees a unique CE is that the resources are gross substitutes; that is, if we increase the price of good $j$ while keeping all other prices constant, then
% the demand $\{d_i(p)\}_\ell$ for all goods $\ell\neq j$ will also increase. 
one important class of utilities that guarantee this condition with much attention in the fair division literature is the constant elasticity of substitution
(CES) utility.
Due to its favorable properties, CES utilities are widely-studied in many fair division works, and
most of the existing algorithms that generate fair and efficient allocations assume CES utilities or
its sub-classes~\citep{varian1992microeconomic,mas1995microeconomic}.
CES utilities are also
ubiquitous in the microeconomics literature; due to this flexibility in
interpolating between perfect substitutability and complementary, they are also able to approximate
several real-world utility functions.
Moreover, computationally, there are efficient methods for computing a
CE in the CES and related classes~\citep{zhang2011proportional,zahedi2018amdahl}.
In contrast, even when CE exist, they may be hard to find under more general classes of
utilities~\citep{varian1992microeconomic}. 
% We define the CES utilities formally.

% \kk{Is is necessary to talk about gross substitutes here?
% Instead, can we motivate CES utilities directly saying that they are guaranteed to have equilibria
% and moreover that the equilibria are easy to find.
% } \wg{Yes I totally agree with this suggestion! I updated the paragraph above.}

\begin{example}[CES utilities]
\label{eg:ces}
A CES utility takes the form
$\util_i(x) = \big(\sum_{j=1}^m \theta_{ij} x_i^{\rho}\big)^{1/\rho}$ where
$\rho$ is the elasticity of substitution, and $\theta_i = (\theta_{i1},\dots,\theta_{im})$
is an agent-specific parameter.
When $\rho=1$, this corresponds to linear utilities where goods are perfect substitutes.
As $\rho\rightarrow\infty$, the utilities approach perfect complements.

% CES utilities are a more general
% class of utilities that have been used in many applied research to model agent's complex preferences
% over multiple resources. In particular, it includes the linear, perfect substitutes, perfect
% complements and Cobb-Douglas utilities. The general form of the CES production function, with $m$
% inputs, is: $\utili(\vx) = \left(\sum_{i=1}^n \theta^\ast_i x_i^r\right)^\frac{1}{r}$, where $r$ is
% an elasticity of substitution parameter. This utility function is included in our framework by
% setting $\mu(y) = y^\frac{1}{r}$, and $\phi_j(x_j) = x_k^r$ for all $j \in [m]$.
\end{example}

% If the utilities of all agents satisfy gross substitutability, then a 
% Under CES utilities, a CE exists uniquely.
% Besides this property, several other works have studied conditions for uniqueness and
% existence of CEs, which we will not delve into. An interested reader may refer to~\citep{varian1992microeconomic,mas1995microeconomic}.
% % As explained previously, exchange economies (and production economies which are a generalization of
% % EEs), were historically studied to model real world economies.
% % However, since recently, they have been used in the design of fair allocation mechanisms.
% We i, before describing
% how EEs can be used to solve them.

\subsection{Fair division}\label{sec:fair-division}

% \kk{We might want to emphasise this fair division a little bit more here, because, I think one of
% the reviewers came away with the idea that we were trying to compute the prices.
% May be emphasise here that prices are a means to an end in fair division which is the primary use
% case for this paper.
% } \wg{I added a paragraph below.}
% 

% We now describe the fair division problem. 
% We wish to allocate $m$ divisible resource types among $n$ agents.
% Each agent $i\in[n]$ has an endowment (a.k.a entitlement or fair share)
% $\ve_i = (\ve_{i1}, \dots, e_{im})$.
% For example, in a shared compute cluster, $\ve_i$ may denote agent $i$'s contribution to this
% cluster.
% Moreover, as described above, each agent $i$ has a utility function $\utili:[0,1]^m\rightarrow\RR$.
%
We describe exchange economies which are used in fair-division mechanisms. We first formally define the fair division problem.
% and its associated desiderata.

In a standard mechanism for fair division when the utilities are inputs, each agent truthfully\footnote{%
Unlike some previous works on fair
division~\citep{parkes2015beyond,kandasamy2020online,ghodsi2011dominant},
we do not study strategic considerations, where agents may attempt to manipulate outcomes in their favor by falsely submitting their utilities.}
submits her utility $\utili$ to the mechanism. The mechanism then returns an allocation $x\in\XCal$ that are not only efficient but also fair, which satisfies the following two requirements: \emph{sharing incentive} (SI) and \emph{Pareto-efficiency} (PE). An allocation $x=(x_1,\dots,x_m)$ satisfies SI if the utility an agent receives
is at least as much as her utility when using her endowment, i.e. $\utili(x_i) \geq \utili(e_i)$.
This simply states that she is never worse off than if she had kept her endowment to herself, so she has the incentive to participate in the fair division mechanism.
% Note that although there are different fairness criterion in the literature, we focus on the sharing incentive requirement here which has been popular in computing resource allocations.

A feasible allocation $x$ is said to be PE if the utility of one agent can be increased only by decreasing the utility of another. Rigorously, an allocation $x$ dominates another $x'$,
if $\utili(x_i)\geq\utilj(x'_i)$ for all $i\in [n]$ and there exists some $i\in[n]$
such that $\utili(x_i) > \utili(x'_i)$.
An allocation is Pareto-efficient if it is not dominated by any other point.
We denote the set of Pareto-efficient allocations by $\PE$.
One advantage of the PE requirement, when compared to other formalisms which 
maximize social or egalitarian welfare, is that it does not compare the utility of one agent against that of another. The utilities are useful solely to specify an agent's preferences over different allocations.
% We therefore have:
% \begin{align*}
% \PE = \{x \in \Xcal; \text{$x$ is not dominated} \}.
% \end{align*}
\vspace{-2mm}
\paragraph{EEs in fair division:}
The above problem description for fair division naturally renders itself to a solution based on EEs. By treating the resource allocation environment as an exchange economy, we may compute its
equilibrium to determine the allocations for each agent. Then, the SI property follows from the fact that each agent is maximizing her utility under her budget,
and an agent's endowment (trivially) falls under her budget. The PE property follows from the first theorem of welfare
economics~\citep{varian1992microeconomic,mas1995microeconomic}. Several prior works have used this connection to design fair-division mechanisms for many practical applications~\citep{varian1973equity,crockett2008learning,zahedi2018amdahl}.

% In fair division settings, equilibrium prices are used solely to determine
% The equilibrium price $\peq$ has an interesting intepretability property.

\vspace{-2mm}
\paragraph{Computing a CE:}
In order to realize a CE allocation in a fair division mechanism, the mechanism needs to compute
a CE given a set of utilities.
One way to do this is via tatonnement~\citep{varian1992microeconomic}.
While there are general procedures, such as tatonnement~\citep{varian1992microeconomic},
they are not guaranteed to converge to an equilibrium even when it exists; moreover, even when they
do, the rate of convergence can be slow.
% . Another approach is called proportional response its convergen can be slow and 
% However,  tatonnement is not guaranteed to converge to an equilibrium, even when it exists.
This has led to the development of efficient procedures for special classes of
functions.
One such method is proportional response
dynamics (PRD)~\citep{zhang2011proportional,zahedi2018amdahl} which converges faster
under CES utilities~\citep{zhang2011proportional} and other classes of
utilities~\citep{zahedi2018amdahl}
when
$e_i = \alpha_i \vone_m$ for all $i\in[n]$ (with $\sum_i \alpha_i = 1$). In fact, in our evaluations,  we adopt PRD for computing a CE, which is a subroutine of the learning algorithm.
% un
% under certain classes of
% utility functions, including, but not limited to, CES utilities.
% Moreover, under CES utilities, PRD
% has guaranteed convergence and thus is used in our evaluations.

% \kk{Later on we should state explicitly somewhere that we are not studying computational
% considerations of computing a CE.
% You can prime the read for wat's coming here.
% } \wg{Updated}

We note that in the context of fair division, the CE allocations are more pertinent
than the CE prices. While the prices are
used to compute fair allocations, they are not used directly in their own right.
% Although we obtain not only the allocations but also their associated prices at a CE,
% in resource
% allocation problems, CE allocations are usually more important and useful compared to the
% corresponding CE prices.
% In our resource allocation setting, the users submit jobs and the mechanism
% computes the CE allocations and allocates resources according to it in each round.

% Given that allocations
% at a CE are guaranteed to be fair and efficient, achieving a CE is desirable in our setting. Before
% providing more details on this connection in the next paragraph, we first present some background on
% fair division. 
% 
% \kk{Mention that tatonnement may not converge to the equilibrium even when it exists.} \wg{Updated the above paragraph.}

\vspace{-1mm}
\section{Online Learning Formulation}
\label{sec:setup}
\vspace{-1mm}

We formalize online learning an equilibrium in an exchange economy under bandit feedback, when the exact agent utilities are unknown a priori. We consider a multi-round setting, where in each round $t$, the mechanism selects $(\xt, \pt)$, where $\xt = (\xonet,\dots,\xnt)\in\XCal$ are the allocations for each agent for the current round, and $\pt$ are the prices for units of each resource. 

The agents, having experienced their allocation, report stochastic feedback
$\{\yti\}_{i\in[n]}$,
where $\yit$ is $\sigma$ sub-Gaussian and $\EE[\yti|\xit] = \utili(\xit)$.
The mechanism then uses this information to compute allocations for the next round.
As described in Section~\ref{sec:intro},
this set up is motivated by use cases in data center resource allocations, where jobs (agents)
cannot state their utility upfront, but can report feedback on their
performance in an automated way.

% \kk{bring this forward when you talk about fair division and then remind the reader of this fact
% here.} \wg{Agree, see the shortened paragraph below. I think we should emphasize here that the prices + CE computation are inner procedure of the mechanism since one reviewer was confused about it.}

% in classic microeconomic literature, the CE prices are meaningful and it describes how
% markets might settle on one price for all buyers and sellers, at a competitive equilibrium. However,
% in resource allocation problems, CE allocations are usually more important and useful compared to
% the corresponding price. In our resource allocation setting, the users submit jobs while the
% mechanism computes the CE allocations and allocates resources according to it in each round. The
% prices are used to compute these allocations but are not meaningful in their own right.

Going forward, we slightly abuse notation when referring to the allocations. When $i\in[n]$ indexes an agent, $x_i=(x_{i1}, \dots, x_{im})\in[0,1]^m$ denotes the allocation to
agent $i$.
When $t$ indexes a round, $\xt=(\xonet,\dots,\xnt) \in\Xcal$ will refer to an allocation to all
agents, where $\xti=(x_{t,i,1},\dots, x_{t,i,m})\in[0,1]^m$ denotes $i$'s allocation  in that round.
The intended meaning should be clear from context.

\vspace{-2mm}
\subsection{Losses}
\vspace{-2mm}
We study two losses for this setting. 
% We first describe them both and then illustrate the differences.
The first loss is based directly on the definition of an equilibrium
(Def.~\ref{def:ce}). For $a\in\RR$, denote $a^+ = \max(0, a)$. We define the CE loss $\lCE$ of an allocation--price pair $(x,p)$ 
as the sum, over all agents, of the difference between the maximum attainable utility under price
$p$ and the utility achieved by allocation $x$.
The $T$-round loss $\LCE_T$
is the sum of $\lCE(\xt,\pt)$ losses over $T$ rounds. We have:
\begin{align}
\begin{split}
\label{eqn:celoss}
\lCE (x, p) &\defeq \sum_{i=1}^n
\left(\max_{x'_i: p^\top x'_i \leq p^\top \ve_i} u_i(x'_i) - u_i(\vx)\right)^{+},\\
\LCE_T &\defeq \sum_{t=1}^T \lCE(\xt, \pt).
\end{split}
\end{align}
It is straightforward to see that for a CE pair $(\xeq,\peq)$, we have $\lCE(\xeq,\peq) = 0$.
As this loss is based directly on the definition of a CE, it captures many of the properties of a
CE.

Our second loss is motivated by the fair division use case.
Recall from Sec.~\ref{sec:fair-division} that in fair division,
 while prices are useful in computing CE allocations, they have no value in their own right.
Therefore, we will motivate our loss function based on the sharing incentive (SI) and
Pareto-efficiency (PE) desiderata for fair division. It is composed of two parts. We define the SI loss $\lSI$ for an allocation $x$ as the sum, over all agents, of how much they are worse off than their endowment utilities.
We define the PE loss $\lPE$ for an allocation $x$ as the minimum sum, over all agents,
of how much they are worse off than some Pareto-efficient utilities.
Next, we define the fair division loss $\lFD$ as the maximum of $\lSI$ and $\lPE$.
Finally, we define the $T$-round loss $\LFD_T$ for the online mechanism
as the sum of $\lFD(\xt)$ losses over $T$ rounds.
We have:
\begin{align*}
&\lSI(x) \defeq \sum_{i=1}^n \left(u_i(\ve_i) - u_i(\vx_i)\right)^{+},\\
&\lPE \defeq \inf_{x' \in \PE} \sum_{i=1}^n \left(u_i(\vx_i') - u_i(\vx_i)\right)^{+},
\\
&\lFD(x) \defeq \max\left(\lPE(x), \lSI(x)\right),\\
&\LFD_T \defeq \sum_{t=1}^T \lFD(\xt).
\label{eqn:fdloss}
\numberthis
\end{align*}
Note that individually achieving either small $\lSI$ or $\lPE$ is trivial:
if an agent's utility is strictly increasing, then by allocating all the resources to this agent
we have zero $\lPE$ as such an allocation is Pareto-efficient;
moreover, by simply allocating each agent their endowment we have zero $\lSI$.
In $\lFD$, we require both to be simultaneously small which necessitates a clever allocation
that accounts for agents' endowments and utilities.
One intuitive interpretation of the PE loss is that it can be bounded above by
the $L_1$ distance to the Pareto-front in utility space;
i.e. denoting the set of Pareto-efficient utilities by $U_{\rm PE}=\{\{\utili(x_i)\}_{i\in[n]}; x
\in\PE \}\subset \RR^n$, and letting $\util(x) = (\util_1(x_1),\dots,\util_n(x_n))\in\RR^n$,
we can write, $\lPE(x) \leq \min_{u\in U_{\rm PE}} \| u - \util(x)\|_1$.

The FD loss is more interpretable as it is stated in terms of the SI and PE requirements for fair
division. On the other hand, the CE loss is less intuitive. Moreover, in EEs, while prices help us
determine the allocations, they do not have value on their own.
Given this, the CE loss has the somewhat undesirable property that it depends on the prices $\pt$. That said, since the CE loss is based directly on the definition of a CE, it captures other properties of a CE that are not considered in $\lFD$ (see an example in Appendix~\ref{sec:apploss}). It is also worth mentioning that either loss cannot be straightforwardly bounded in terms of the
other.
% \kk{Talk about stationarity of utilities here.}

Note that we have presented a basic version of the online learning
framework as it provides a simplest platform to study the learning problem of efficient and fair allocations. For instance, one could consider richer settings where the utilities might change over time with certain contextual information. While these settings are beyond the scope of this work, we believe the analysis techniques and intuitions developed here are also insightful in analysing other variant settings.

\subsection{Model and assumptions}
\label{sec:assumptions}

To make the learning problem tractable,
we make some additional assumptions on the problem.
We consider the following  parametric class of utility functions $\PCal$.
% , which is general enough to include many utility types studied in the majority of fair division literature, such as linear utilities, Amdahl's utilities~\citep{zahedi2018amdahl}, and more broadly, the CES utilities (see Sec~\ref{sec:background}).
% \kk{best not to mention this here, you can mention it later.} \wg{OK I agree, I moved these to the last paragraph of this section.}

Let $\phi_j:[0,1]\rightarrow[0,1]$ be an increasing function which maps the allocation $x_{ij}$ of
resource $j$ to agent $i$ to some feature value.
For brevity, we will write 
$\phi:[0,1]^m\rightarrow [0,1]^m$, such that
$\phi(x_i) = (\phi_1(x_{i1}), \phi_2(x_{i2}), \dots, \phi_m(x_{im}))$;
% here, $\phi_j$ is an increasing function which maps the allocation $x_{ij}$ of resource $j$ to agent $i$ to a feature value.
Next, let $\mu:\RR_+\rightarrow[0,1]$ be an increasing function. Finally, let $\Theta\subset\RR_+^m$
be a set of positive parameters.
% \subset
% [\theta_{\rm min}, \infty)^m$, where $\theta_{\rm min}>0$, be a space of positive coefficients
% bounded away from $0$.
Then, we consider the following class of utilities $\Pcal$:
\begin{align}
\begin{split}
\PCal = \bigg\{ &\{u_i\}_{i=1}^n;\;\; 
    u_i(x_i) = \mu\left(\theta_i^\top\phi(x_i)\right)\\
    &\text{ for some }
    \theta_i\in\Theta,\;
    \forall i\in[n]
\bigg\}
\label{eqn:Pcaldefn}
\end{split}
\end{align}
An agent's utility then takes the form
$\utili(x_i) = \mu({\thetatruei}^\top\phi(x_i))$ where the featurization $\phi$ and the function $\mu$ are known, but the true parameters $\thetatruei\in\Theta$
are unknown and need to be learned by the mechanism.

We consider the above class of functions for the following reasons.
First, observe that it represents a valid class of utilities in that for all positive $\theta$,
the utilities are increasing in the allocations.
Second, a CE is guaranteed to exist uniquely in this class.
% The above class represents a natural choice
% Note that we considered positive coefficients for $\theta$ mainly to consider
% utilities being increasing in terms of the allocations.
% With such utilities, a CE is guaranteed to
% uniquely exist. More generally, CE exists with quasi-concave utility functions.
Third, from a practical point of view, it subsumes a majority of utilities studied in the
fair division literature, such as linear utilities,
the CES utilities from Example~\ref{eg:ces}~\citep{crockett2008learning,%
tiwari2009competitive,budish2017course,babaioff2019fair,babaioff2021competitive},
% CE exists under these classes of utilities, while the equilibrium may not always exist in general.
and other application-specific utilities~\citep{zahedi2018amdahl,venkataraman2016ernest},
Fourth, also from a practical point of view,
the CE can be efficiently computed on this class~\citep{zhang2011proportional}. %, while computing
% the equilibrium with general utilities is yet under active research.
Finally, it also allows us to leverage techniques for estimating generalized linear models
in our online learning mechanism~\citep{chen1999strong,filippi2010parametric}.

We will also assume the following regularity conditions on $\Pcal$ to avoid some degenerate cases in
our analysis.
First, $\mu$ is continuously differentiable, it is
Lipschitz-continuous with constant $L_\mu$, and
$C_\mu = \inf_{\theta \in \Theta, \vx \in \cx} \dot{\mu} \left(\theta^\top
\phi(\vx)\right) > 0$.
Second, $\Theta
\subset
[\theta_{\rm min}, \infty)^m$, where $\theta_{\rm min}>0$.
These assumptions can be relaxed (albeit with a more involved analysis),
% Moreover, the bounded from below condition on $\Theta$ can be
or replaced by other equivalent
regularity conditions~\citep{chen1999strong,filippi2010parametric}, without affecting the main
analysis ideas or take-aways in this paper.
Our results also apply when $\mu$, $\phi$, and $\Theta$ can be defined separately for
each agent, but we assume they are the same to simplify the exposition.

% \kk{Remind the reader that CEs may not exist generally, and even then they may not be easy to find.
% } \wg{Updated the paragraph above.}

% Finally, we will assume that for each sets of utilities in $\Pcal$, a CE
% exists uniquely. This assumption is true on the CES utilities and, as mentioned previously, under very general conditions. Assuming a unique equilibrium simplifies our analysis as it avoids having to deal with degenerate cases.

\section{Algorithm and Theoretical Results} \label{sec:alg}

% \kk{Here, it is good to discuss the motivation for designing a randomised algorithm.
% Explain why previous methods (e.g. UCB) do not apply here.
% We discuss some of these in Section 4.2, but better to bring it forward.
% Also, look at the rebuttals for some of the points we made.
% } \wg{I alluded to the technical novelty a bit in the paragraph below. I think we do not want to go into too much details at this position.}

\begin{algorithm}[!ht]
\begin{algorithmic}[1]
\caption{A Randomized Alg. for Learning in EEs}
\label{Alg:TS}
\STATE \textbf{Input:} 
number of initialization sub-phases $M\geq 1$,
confidence parameters $\{\delta_{t}\}_{t\geq 1}$.
\STATE $t\leftarrow 0$
\FOR[Initialization phase]{$\ell = 1, \dots, M$}  %, takes $\max(m^3, nm^2)$ rounds.
\label{alg:init}
    \FOR{$k=1,\dots,\max(m,n)$}
        \STATE $t\leftarrow t+1$, $x_t\leftarrow(\zerov_m,\dots,\zerov_m)$
        \label{alg:cyclestart}
        \FOR{$h=1,\dots,\min(m,n)$}
            \IF{$m<n$}
            \STATE $x_{t, h+k-1,j} \leftarrow 1$ for all $j\in[m]$.
            \ELSE
            \STATE $x_{t, i, h+k-1} \leftarrow 1$ for all $i\in[n]$.
            \ENDIF
        \ENDFOR
        \STATE Allocate $x_t$ and observe rewards $\{\yti\}_{i\in[n]}$.
        \label{alg:cycleend}
    \ENDFOR
%     \STATE Allocate each resource entirely to each user for $m^2$ times. \todo{State this step algorithmically.}
%     \COMMENT{Initialization phase, takes $\max(m^3, nm^2)$ rounds.}
\ENDFOR
% \STATE $t_0\leftarrow$ number of rounds for initialization phase.
% \STATE \textbf{Initialization:} Allocate each resource entirely to each user for $m^2$ times. (This takes at most $ t_0 = \max(m^3, nm^2)$ rounds by cycling.)
% \FOR{$t = t_0 + 1, t_0 + 2, \cdots $}
\WHILE[Round for learning phase]{\textbf{True}}
\STATE $t\leftarrow t+1$

% \STATE Compute $\alpha(t)$ according to Eq~\eqref{eq:alpha}.
\FOR{$i=1,2,\ldots, n$}
\STATE Compute $Q_{t,i} \defeq \sum_{s = 1}^{t-1} \phi(\vx_{s,i}) \phi(\vx_{s,i})^\top$
\label{alg:Qit}
\STATE Compute\\ \scalebox{0.82}{$\bar{\theta}_{t,i} = \argmin_{\theta \in \Theta} \bigg\|\sum_{s=1}^{t-1} \phi(\vx_{s,i})
\Big(\mu (\theta^\top \phi(\vx_{s,i})) - y_{s,i}\Big)\bigg\|_{Q_{t,i}^{-1}}$}
\label{alg:thetabarit}
\STATE Sample $\theta'_{t,i} \sim \NCal(\thetabar_{t,i}, \alpha_t^2 Q_{t,i}^{-1})$.
\label{alg:step-sampling} \COMMENT{See~\eqref{eq:alpha} for $\alpha_t$.}
%$\alpha_t$ depends on $\delta_{t}$.
\STATE $\theta_{t,i} \leftarrow \argmin_{\theta'\in\Theta} \|\theta'_{t,i} - \theta'\|$.
\label{alg:proj-sample}
\COMMENT{Projection} % the sample onto $\Theta$.
\ENDFOR
\STATE Choose allocations and prices $\xt,\pt$ =
    $\CEset(\{u_{t,i}\}_{i=1}^n)$, where $u_{t,i}(\cdot) =
\mu(\theta_{t,i}^\top \phi(\cdot))$
\label{alg:ce}
\STATE Observe rewards $\{\yti\}_{i\in[n]}$.
% where $y_{1t}, \cdots, y_{nt}$ are $\sigma$-subgaussian variables \\with $\E[y_{it} = \mu \left(\theta^\top \phi(\vx_{is})\right)]$.
\ENDWHILE
\end{algorithmic}
\end{algorithm}

We present a randomized online learning algorithm for learning the agents' utilities and generating
fair and efficient allocations.
Note that this algorithm not only needs to learn the unknown
utilities quickly, but should also simultaneoulsy find the CE allocation.
% The computation of CE introduces new technical challenges.
This latter aspect introduces new challenges in our setting.
For instance, the most popular approach for stochastic optimization under bandit feedback 
are based on upper-confidence-bounds (UCB).
However, finding a CE cannot be straightforwardly framed as a vanilla optimization procedure and
hence UCB procedures do not apply.
% procedure is unsuccessful in this case, given that a CE may not exist under the utilities with UCBs.
Instead, our proposed algorithm uses a key randomized sampling step, which tradeoffs between
exploration and exploration while maintaining the utilities' shape constraints in every round for
computing the CE (details in proof sketch).

The algorithm, outlined in Algorithm~\ref{Alg:TS}, takes input parameters $M$
and $\{\delta_{t}\}_{t\geq 1}$
whose values we will specify shortly. It begins with an initialization phase for $M$ sub-phases
(line~\ref{alg:init}), each of length $\min(n,m)$.
During each sub-phase, we allocate each resource entirely to each user for at least one round.
This initialization phase ensures that some matrices we define
subsequently are well conditioned.
% Note that a smaller initialization phase can also be used at a
% cost of a slightly worse dependence on $m$ in Theorem~\ref{thm:TS-anytime}
% and~\ref{thm:TS-finitehorizon} (details in Appendix~\ref{app:sec:additional-discussion}).

After the initialization phase, the algorithm operates on each of the remaining rounds as follows.
For each user, it first computes quantities $Q_{t,i}\in\RR^{m\times m}$ and
$\thetabar_{t,i}\in\RR^m$ as defined in lines~\ref{alg:Qit}, and~\ref{alg:thetabarit}.
As we explain shortly, $\thetabar_{t,i}$ can be viewed as an estimate of
$\thetatrue_i$ based on the data from the first $t-1$ rounds.
The algorithm then samples
$\theta'_{t,i}\in\RR^m$ from a normal distribution with mean $\thetabar_{t,i}$ and co-variance
$\alpha^2_t Q_{t,i}$, where, $\alpha_t$ is defined as:
\begin{align}
\label{eq:alpha}
\begin{split}
    \alpha_t^2 &= 4 \frac{\kappa^2 \sigma^2}{C_\mu^2} m\log (t) \log\left(\frac{m}{\delta_{t}}\right),\\
\kappa &= 3 + 2\log\left(1 + 2\|\phi(\vone)\|_2^2\right).
\end{split}
\end{align}
% where $\alpha_0 = 2$ is a universal constant.
The sampling distribution, which is centered at our estimate $\thetabar_{t,i}$,
 is designed to balance the exploration-exploitation trade-off on this problem.
Next, it projects the sampled $\theta'_{i,t}$ onto $\Theta$ to obtain
$\theta_{t,i}$.

In line~\eqref{alg:ce}, the algorithm obtains an allocation and price pair
$\xt,\pt$ by computing the CE on the $\theta_{t,i}$ values obtained above,
i.e. by pretending that $u_{t,i}(\cdot) = \mu(\theta_{t,i}^\top \phi(\cdot))$ is the utility
for user $i$. 

It is important to note that the computation of the CE happens as a \textit{subroutine} of the
mechanism, and users will simply receive the allocations $\xt$. The mechanism collects the rewards
$\{y_{t,i}\}_{i\in[n]}$ from each user and then repeats the same for the remaining rounds. As we discussed in Sec.~\ref{sec:fair-division}, there are different ways to compute a CE efficiently in our setting, including tatonnement or the proportional response dynamics (PRD) algorithm~\citep{zhang2011proportional} which we implemented. Given that our algorithm focus on learning the efficient and fair allocations, we do not focus on the computation complexity of CE in this work. Empirically, we find PRD converges quickly in the simulations.

% \kk{Talk about oracular computation of the CE here and that we are not studying this in this work.
% Discuss this. E.g. there are many ways to compute it, we use PRD, but you could use tatonnement etc.}

% In this section, we present a randomized online algorithm for learning in exchange economies. First,
% we present an online learning mechanism which obtains fair and efficient allocations without knowing
% the utilities as inputs. We theoretically prove that this algorithm achieves low regret by providing
% high probability upper bounds on the losses.

% $\theta_{it}$ for user $i$ on round $t$.
% We will first consider the computation of the term $\thetabar_{it}$, which can be interpreted as an estimate
% of agent $i$'s parameters given past data.

\paragraph{Computation of $\thetabar_{it}$:}
It is worth explaining steps~\ref{alg:Qit}--\ref{alg:thetabarit} used to obtain the
estimate $\thetabar_{it}$ for user $i$'s parameter $\thetatrue_i$.
Recall that for each agent $i$, the
mechanism receives stochastic rewards $y_{t,i}$ where $y_{t,i}$ is a $\sigma$ sub-Gaussian random
variable with $\E[y_{t,i}] = u_i(\vx_{t,i})$ in round $t$.
Therefore, given the allocation-reward pairs $\{(\vx_{s,i}, y_{s,i})\}_{s=1}^{t-1}$,
the maximum quasi-likelihood estimator $\thetahatmleit$ for
$\theta_i$ is defined as the maximizer of the quasi-likelihood
$\LCal(\theta) = \sum_{s=1}^{t-1} \log p_{\theta}(y_{s,i}|\vx_{s,i})$, where
$p_{\theta}(y_{i}|\vx_{i})$ is as defined below.
Here, $\mu(\nu) = \frac{\partial b(\nu)}{\partial \nu}$ and
$c(\cdot)$ is a normalising term.
We have:
\begin{align}\scalebox{0.95}{$
p_{\theta}(y_{i}|\vx_{i}) =
\exp\left(y_i \theta^\top \phi(\vx_{si}) - b(\theta^\top \phi(\vx_{si})) + c(y_i)\right).$}
% \hspace{0.5in}
% \text{where,}\; \mu(\nu) = \frac{\partial b(\nu)}{\partial \nu}.
% \numberthis
\label{eqn:expfamily}
\end{align}
Upon differentiating, we have that $\thetahatmleit$  is the
unique solution of the estimating equation:
\begin{align*}
\sum_{s=1}^{t-1} \phi(\vx_{si})\left(\mu \left(
\big(\thetahatmleit\big)^\top \phi(\vx_{s,i})\right) - y_{si}\right) = 0.
%    g_{it}\left(\hat\theta_{it}^{\MLE}\right) = 0,
% \hspace{0.4in}
% \text{where,}\quad
% g_{it}(\theta) = 
% \sum_{s=1}^{t-1} \phi(\vx_{is})\left(\mu \left(\theta^\top \phi(\vx_{is})\right) - y_{is}\right) = 0.
%     \sum_{s=1}^{t-1} \phi(\vx_{is})\mu \left(\left(\hat \theta_{it}^{\MLE}\right)^\top \phi(\vx_{is})\right) 
% =
%     \sum_{s=1}^{t-1} \phi(\vx_{is})y_{is}.
\end{align*}
In other words, $\theta_{t,i}^{\MLE}$ would be the maximum likelihood estimate for $\thetatruei$
if the rewards $y_{t,i}$ followed an exponential family likelihood as shown in~\eqref{eqn:expfamily}.
Our assumptions are more general; we only assume the rewards are sub-Gaussian centred at
$\mu({\thetatruei}^\top\phi(\xit))$.
However, this estimate is known to be consistent under very general conditions, including when the
rewards are sub-Gaussian~\citep{chen1999strong,filippi2010parametric}.
Since $\hat \theta_{t,i}^{\MLE}$ might be outside of the set of feasible parameters
$\Theta$, this motivates us to perform the projection in the $Q^{-1}_{t,i}$ norm
to obtain $\thetabar_{t,i}$ as defined in line~\ref{alg:thetabarit}.
% $\thetabar_{it} \defeq
% \argmin_{\theta \in \Theta} \|\sum_{s=1}^{t-1} \phi(\vx_{is}) \left(\mu \left(\theta^\top
% \phi(\vx_{is})\right) - Y_{is}\right)\|_{Q_{it}^{-1}}$.
Here,  $Q_{t,i}$, defined in line~\eqref{alg:Qit},
is the design matrix obtained from the data in the first $t-1$ steps.

% Combined with $Q_{it}^{-1}$,
% $\alpha_t$ is carefully designed to control the variance of the sampling distribution,
% which is centered around the estimate $\thetabar_{it}$. 

% By sampling close to the estimated $\theta_{it}$ value,
% we encourage necessary exploration while
% exploiting the best $\theta_i$ estimators.
% We sample at a distribution centered at our estimate $\thetabar_{it}$, but choose $\alpha_t$
% sufficiently large so that it is encouraged to explore 

% Therefore, we guarantee that the algorithm is exploring other possible estimators within an ellipsoid region around $\thetabar_{it}$.

\paragraph{On the algorithm design:}

% \paragraph{Algorithm design:}
% While our formulation of finding a CE is distinctly different from a vanilla optimization task,
% there are several similarities.
It is worth comparing the design of our algorithm against prior work in the bandit
literature under similar parametric
assumptions~\citep{dani2008stochastic,filippi2010parametric,li2017provably,rusmev2010linearly}.
For instance, in a CE, each agent is maximizing their utility under a budget constraint.
Therefore, a seemingly natural idea is to adopt a UCB based procedure,
which is the most common approach for stochastic
optimization under bandit feedback~\citep{auer2002using}.
% On each round, UCB methods construct an upper confidence bound for an unknown function that needs to
% be maximised, and choose an allocation by pretending that the UCB was the true function.
However, adopting a UCB-style method for our problem proved to be unfruitful.
Consider using a UCB of the form
$\mu(\thetabar_{it}^\top \phi(\cdot)) + U_{it}(\cdot)$, where $U_{it}$
quantifies the uncertainty in the current estimate.
Unfortunately, a CE is not guaranteed to exist for utilities of the above form, which
means that finding a suitable allocation can be difficult.
An alternative idea is to consider UCBs of the form
$\mu(\widehat{\theta}_{t,i}^\top \phi(\cdot))$
where $\widehat{\theta}_{t,i}$ is an upper confidence bound on $\thetatruei$
(recall that both $\thetatruei$ and $\phi$ are non-negative).
While CEs are guaranteed to exist for such UCBs,
$\widehat{\theta}_{t,i}$ is not guaranteed to uniformly converge to  $\thetatruei$, resulting
in linear loss.

Instead, our algorithm takes inspiration from classical Thompson sampling (TS) procedure
for multi-armed bandits in the Bayesian paradigm~\citep{thompson1933likelihood}.
The sampling step in line~\ref{alg:step-sampling} is akin to sampling from the posterior
beliefs in TS.
It should be emphasized that the sampling distributions on each round cannot be interpreted
as the posterior of some prior belief on $\thetatruei$.
In fact, they  were designed so as to put most of their mass inside
a frequentist confidence set for $\thetatruei$.

\subsection{Upper bounds on the loss}

% \kk{Lets state just one main theorem. Lets use the high probability version, and state both
% bounds for $L^{\rm CE}$ and $L^{\rm FD}$ in here.}

The following two theorems are the main results bounding the loss terms
$\LFD, \LCE$ for Algorithm~\ref{Alg:TS}.
In the first theorem, we are given a target failure probability of at most $\delta$.
By choosing $\delta_{t}$ appropriately, we obtain an infinite horizon algorithm for which
both loss terms are $\widetilde{O}(\sqrt{T})$ with probability at least $1-\delta$.
In the second theorem, with a given time horizon $T$, we obtain an algorithm whose expected losses are $\widetilde{O}(\sqrt{T})$.

% The first, provides a high high probability guarantee on achieving sublinear losses for a anytime horizon.

\vspace{0.05in}
\begin{theorem}\label{thm:TS-anytime} Assume the conditions in Section~\ref{sec:assumptions}.
Let $\delta > 0$ be given. Choose $\delta_{t} = \frac{2\delta}{n\pi^2 t^2}$.
Then, %for $T \geq \max(m^3, nm^2)$,
the following upper bounds on $\LFD, \LCE$  hold for 
Algorithm~\ref{Alg:TS} 
with probability at least $1-\delta$.
\begin{align*}\scalebox{0.88}{$
\LFD(T),\, \LCE(T) \;\in\;
    O\left(n\left(m + \frac{m^2}{\sqrt{M}}\right)\sqrt{T}\big(\log(nT/\delta) + \log(T)\big)\right).$}
\end{align*}
\end{theorem}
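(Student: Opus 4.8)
The plan is to prove both bounds with a single template common to analyses of randomized (Thompson‑sampling) algorithms: (a) show that the estimator $\thetabar_{t,i}$ lies in a shrinking confidence ellipsoid around $\thetatrue_i$; (b) show the perturbed‑and‑projected sample $\theta_{t,i}$ stays in an ellipsoid of comparable radius while being ``optimistic'' along the directions that matter, with probability bounded below by an absolute constant; (c) use (a)--(b) to bound the per‑round losses by the confidence radius times $\|\phi(x_{t,i})\|_{Q_{t,i}^{-1}}$ evaluated on the allocation actually played; and (d) sum over rounds with the elliptical‑potential lemma. For (a): since $y_{t,i}$ is $\sigma$ sub‑Gaussian with mean $\mu(\phi(x_{t,i})^\top\thetatrue_i)$ and $\dot\mu\ge C_\mu$ on the relevant domain, the generalized‑linear‑model estimating‑equation analysis of \citet{chen1999strong,filippi2010parametric} combined with a self‑normalized martingale tail bound gives $\|\thetabar_{t,i}-\thetatrue_i\|_{Q_{t,i}}\le\beta_t$ with probability $\ge 1-\delta_t$, for $\beta_t=O(\tfrac{\sigma}{C_\mu}\sqrt{m\log(t/\delta_t)})$; the matrices $Q_{t,i}$ are positive definite after the initialization phase, which is exactly why that phase is included. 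A union bound over $i\in[n]$ and $t\ge1$ costs $\sum_{i,t}\delta_t=\delta/3$ by the choice $\delta_t=2\delta/(n\pi^2 t^2)$.

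For (b): conditioned on the history, $Q_{t,i}^{1/2}(\theta'_{t,i}-\thetabar_{t,i})\sim\NCal(0,\alpha_t^2 I_m)$, so $\|\theta'_{t,i}-\thetabar_{t,i}\|_{Q_{t,i}}=O(\alpha_t\sqrt m)$ with high probability, and for any fixed vector $w$ the scalar $\langle\theta'_{t,i}-\thetabar_{t,i},w\rangle$ is centered Gaussian with standard deviation $\alpha_t\|w\|_{Q_{t,i}^{-1}}$. Since $\thetatrue_i\in\Theta$ and projection onto the (convex) set $\Theta$ does not increase distances to $\thetatrue_i$, the iterate $\theta_{t,i}$ inherits $\|\theta_{t,i}-\thetatrue_i\|_{Q_{t,i}}=O(\beta_t+\alpha_t\sqrt m)$; moreover, because $\alpha_t$ is chosen asymptotically larger than $\beta_t$ (see~\eqref{eq:alpha}), for any fixed direction $w$ the ``optimism'' event $\{\langle\theta_{t,i}-\thetatrue_i,w\rangle\ge0\}$ has probability at least an absolute constant $c>0$ conditioned on the history (the value of $\langle\theta'_{t,i}-\thetabar_{t,i},w\rangle$ dominates $\beta_t\|w\|_{Q_{t,i}^{-1}}$ with constant probability, and the projection is controlled once $\thetabar_{t,i}$ is close to the feasible point $\thetatrue_i$). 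A consequence of the radius bound, via Lipschitzness of $\mu$ and Cauchy--Schwarz, is $|u_{t,i}(x)-u_i(x)|=O(L_\mu(\beta_t+\alpha_t\sqrt m)\|\phi(x)\|_{Q_{t,i}^{-1}})$; crucially, the direction appearing here is $\phi(x)$, so only a single‑direction deviation $\langle\theta_{t,i}-\thetatrue_i,\phi(x)\rangle$, of order $\alpha_t\|\phi(x)\|_{Q_{t,i}^{-1}}$, will ultimately be charged, which is what keeps the final bound linear (not superlinear) in $m$.

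Step (c) is the crux. For $\lCE$, fix the true demand $\hat x_i=\argmax\{u_i(x')\colon \pt^\top x'\le\pt^\top e_i\}$; since $x_{t,i}$ maximizes the \emph{sampled} utility over that same budget set, a self‑bounding step gives $\lCE(\xt,\pt)\le\sum_i\big(|u_i(\hat x_i)-u_{t,i}(\hat x_i)|+|u_{t,i}(x_{t,i})-u_i(x_{t,i})|\big)$. The second summand is $O(L_\mu\alpha_t\|\phi(x_{t,i})\|_{Q_{t,i}^{-1}})$ by the bound above. For the first, use monotonicity of $\mu$ and nonnegativity of $\phi$: on the optimism event $\langle\theta_{t,i}-\thetatrue_i,\phi(\hat x_i)\rangle\ge0$ one has $u_{t,i}(\hat x_i)\ge u_i(\hat x_i)$ and the summand vanishes. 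The wrinkle is that $\hat x_i$ depends on $\pt$, hence on the sample, so $\phi(\hat x_i)$ is not a fixed direction; this is handled using the uniqueness assumption on competitive equilibria, which yields Lipschitz stability of the map $\{\theta_i\}\mapsto(\xt,\pt)$ near $\{\thetatrue_i\}$, so once the samples are close to the truth, $\pt$ -- and hence the relevant direction -- may be treated as fixed up to lower‑order terms. The usual Thompson‑sampling device then converts ``per‑round loss is $O(\cdot)$ on a probability‑$c$ event'' into a bound on the cumulative loss: since $u_i\in[0,1]$, $\lCE(\xt,\pt)-c^{-1}L_\mu\alpha_t\sum_i\|\phi(x_{t,i})\|_{Q_{t,i}^{-1}}$ is dominated by a bounded martingale difference, so Azuma's inequality controls $\sum_t$. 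The FD loss is analogous: $\xt$ is a CE for the sampled utilities, hence satisfies sharing incentive and Pareto‑efficiency for them, so $\lSI(\xt)$ reduces (self‑bounding $+$ optimism along the \emph{fixed} direction $\phi(e_i)$) to the same controlled quantity, while $\lPE(\xt)$ is bounded by comparing $\xt$ against the true equilibrium $\xeq\in\PE$ and again invoking equilibrium‑map stability. I expect this step -- the interaction of optimism with the sample‑dependence of $\pt$, and establishing equilibrium‑map stability under the stated assumptions -- to be the main obstacle, and the part demanding analysis beyond standard GLM bandits.

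Finally, step (d): the elliptical‑potential lemma gives $\sum_{t\le T}\min(1,\|\phi(x_{t,i})\|_{Q_{t,i}^{-1}}^2)=O(m\log T)$ (using $Q_{t,i}\succ0$ from initialization and $\|\phi(\cdot)\|_\infty\le1$), so by Cauchy--Schwarz $\sum_{t\le T}\alpha_t\|\phi(x_{t,i})\|_{Q_{t,i}^{-1}}=O(\alpha_T\sqrt{mT\log T})$, and likewise for the $\beta_t$ contribution. Summing over the $n$ agents, substituting $\alpha_T$ and $\beta_T$ with $\delta_T=2\delta/(n\pi^2T^2)$ (so $\log(1/\delta_T)=O(\log(nT/\delta))$ and $\alpha_T=O(\tfrac{\kappa\sigma}{C_\mu}\sqrt{m\log T\,\log(nmT/\delta)})$), adding the Azuma slack and the $O(\max(m^3,nm^2))$ initialization cost, and combining the $O(\delta)$ failure events, yields with probability at least $1-\delta$ that $\LFD(T),\LCE(T)=O\big(nm\sqrt T(\log(nT/\delta)+\log T)\big)$, as claimed.
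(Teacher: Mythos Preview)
Your template—concentration for the estimator, concentration for the sample, constant–probability single‐direction optimism, Azuma on the resulting supermartingale, and the elliptical–potential sum—is exactly the paper's. The paper names the two good events $A_{it},B_{it}$, obtains the optimism constant $q_0\approx 0.075$ from a Gaussian lower–tail bound, and then packages your step~(c) through a ``bad set'' $S_{it}=\{x:u_i(x_{it}^\ast)-u_i(x)\ge \beta_{3t}\rho_{it}(x)\}$ (with $\rho_{it}(x)=\|\phi(x)\|_{Q_{it}^{-1}}$), proving $P_t(x_{it}\notin S_{it})\ge q_0(1-\delta_t)-\delta_{2t}$ and hence $\E_t[\ell_{it}]\le \tfrac{5}{q_0}\beta_{3t}\E_t[\rho_{it}(x_{it})]+\delta_t+\delta_{2t}$; this feeds Azuma and the elliptical‐potential bound exactly as you outline. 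For $\LFD$ the paper does what you propose for $\lPE$—compare to the \emph{true} equilibrium $x^\star$, which is fixed, so single‐direction optimism applies cleanly—and for $\lSI$ it takes the one–line shortcut $\lSI(x)\le \lCE(x,p)$ (the endowment is always affordable), rather than your separate optimism‐at‐$\phi(e_i)$ argument.

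The one substantive divergence is your handling of the sample–dependence of $p_t$ (and hence of $\hat x_i$). You propose to resolve it via Lipschitz stability of the equilibrium map $\{\theta_i\}\mapsto (x,p)$. The paper does \emph{not} go this route: it never states, assumes, or proves any smoothness of the CE map, and the assumptions in Section~\ref{sec:assumptions} only give existence and uniqueness. The paper's proof simply applies its single‐point optimism lemma at the (random) $x_{it}^\ast$ and proceeds. So your stability argument is extraneous to the paper's line and, as you acknowledge, would constitute an additional obligation not covered by the stated hypotheses. A smaller difference: the paper charges the full radius $\beta_{3t}=L_\mu(\beta_{1t}+\beta_{2t})$ with $\beta_{2t}=\alpha_t\sqrt{m+\gamma_{2t}}$ (so $\beta_{3t}=\tilde O(m)$), rather than the single‐direction $\alpha_t$ you sketch; either yields the stated $nm\sqrt T$ rate, but your sharper version would need care at the projection step that you leave as a parenthetical.
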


\vspace{0.05in}
\begin{theorem}\label{thm:TS-finitehorizon}
Assume the conditions in Section~\ref{sec:assumptions}.
Let $T > M\max(m^2, n)$ be given. Choose $\delta_{t} = \frac{1}{T}$.
Then, %for $T \geq \max(m^3, nm^2)$,
the follow upper bounds on $\LFD, \LCE$  hold for 
Algorithm~\ref{Alg:TS}.
\begin{align*}\scalebox{0.9}{
 $\EE[\LFD(T)], \,\EE[\LCE(T)] \,\in\;
    O\left(n\left(m + \frac{m^2}{\sqrt{M}}\right)\sqrt{T} \left(\log(T)\right) \right),$}
\end{align*}
\end{theorem}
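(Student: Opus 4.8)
The plan is to prove a single per-round bound that controls both $\lCE(\xt,\pt)$ and $\lFD(\xt)$ by the agents' parameter-estimation errors evaluated at the realized allocations, sum it over $t$ via an elliptical-potential argument, and then convert the resulting high-probability statement into the claimed expectation bound by conditioning on a good event. Since Theorem~\ref{thm:TS-anytime} and Theorem~\ref{thm:TS-finitehorizon} run the same algorithm with different $\{\delta_t\}$, most of the work is shared; I describe it once and specialize at the end.

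First, the estimation and sampling steps. Using that $y_{s,i}$ is $\sigma$-sub-Gaussian with mean $\mu(\thetatrue_i{}^\top\phi(\vx_{s,i}))$ and that $\dot\mu\ge C_\mu>0$, a self-normalized (Abbasi-Yadkori--type) martingale concentration inequality for generalized linear models gives, simultaneously for all rounds $t$ after initialization, $\|\thetabar_{t,i}-\thetatrue_i\|_{Q_{t,i}}\le\beta_{t,i}$ with probability at least $1-\delta_t$, where $\beta_{t,i}=O\big(\tfrac{\sigma}{C_\mu}\sqrt{m\log(t/\delta_t)}\big)$; the $Q_{t,i}^{-1}$-norm projection in line~\ref{alg:thetabarit} is precisely what lets us handle the possibly-infeasible quasi-MLE. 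For the Thompson step, since $\theta'_{t,i}-\thetabar_{t,i}\sim\NCal(0,\alpha_t^2 Q_{t,i}^{-1})$, a Gaussian tail bound gives $|\phi(\vx)^\top(\theta'_{t,i}-\thetabar_{t,i})|\le\alpha_t\sqrt{2\log(1/\eta)}\,\|\phi(\vx)\|_{Q_{t,i}^{-1}}$ for any fixed $\vx$ with probability $1-\eta$, and the Euclidean projection onto the convex set $\Theta\ni\thetabar_{t,i}$ does not inflate this deviation by more than a controlled factor. The value of $\kappa$ and the form of $\alpha_t$ in~\eqref{eq:alpha} are chosen so that, on a good event $\GCal_t$ of probability at least $1-O(n\delta_t)$, one has $|\phi(\vx_{t,i})^\top(\theta_{t,i}-\thetatrue_i)| = O\big(\tfrac{\sigma}{C_\mu}\sqrt{m}\,\log(t/\delta_t)\big)\cdot\|\phi(\vx_{t,i})\|_{Q_{t,i}^{-1}}$ for every agent $i$.

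Second, the loss decomposition -- the crux. I would prove a lemma: if $(\xt,\pt)=\CEset(\{u_{t,i}\})$ and $\eta_{t,i}$ bounds the utility-estimation error in a neighbourhood of $\vx_{t,i}$ and of the relevant comparison points, then $\lCE(\xt,\pt)\le c\sum_i\eta_{t,i}$ and $\lFD(\xt)\le c\sum_i\eta_{t,i}$. For $\lCE$: the defining property of the computed equilibrium is $u_{t,i}(\vx_{t,i})\ge u_{t,i}(x_i')$ for all $x_i'$ in the budget set $B_i=\{x_i':\pt^\top x_i'\le\pt^\top\ve_i\}$, so chaining $|u_{t,i}-\utili|$ at $\vx_{t,i}$ and at the true demand $d_i(\pt)$, with monotonicity of $\mu$ and $\phi$, bounds $\max_{x_i'\in B_i}\utili(x_i')-\utili(\vx_{t,i})$. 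For $\lFD$, I would use the first welfare theorem to conclude $\xt$ is Pareto-efficient for $\{u_{t,i}\}$, bound $\lPE(\xt)$ by the $L_1$ displacement of the two Pareto fronts in utility space (controlled by the $\eta_{t,i}$), and bound $\lSI(\xt)$ using that each agent weakly prefers $\vx_{t,i}$ to $\ve_i$ under $u_{t,i}$; the gross-substitutes uniqueness assumption rules out the equilibrium, and hence the loss, jumping discontinuously under perturbation. The delicate point -- and the step I expect to be the main obstacle -- is arranging the decomposition so that each $\eta_{t,i}$ is, up to the $\|\phi(\vx_{t,i})\|_{Q_{t,i}^{-1}}$ factor, an estimation error at a point whose feature vector is adequately covered by $Q_{t,i}$: the realized allocations directly, and the endowments/true demands via the conditioning of $Q_{t,i}$ guaranteed by the initialization phase (which forces $\lambda_{\min}(Q_{t,i})=\Omega(m^2)$). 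A naive decomposition leaves errors at arbitrary points of the budget set, which do not vanish with $t$.

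Third, summation and the finite-horizon conversion. The elliptical-potential lemma gives $\sum_t\min(1,\|\phi(\vx_{t,i})\|_{Q_{t,i}^{-1}}^2)=O(m\log T)$ for each $i$, and since each term is bounded (again using $\lambda_{\min}(Q_{t,i})=\Omega(m^2)$), Cauchy--Schwarz yields $\sum_{t\le T}\|\phi(\vx_{t,i})\|_{Q_{t,i}^{-1}}=O(\sqrt{mT\log T})$. Inserting the master bound and summing over the $n$ agents gives, on the intersection of the good events, $\LFD(T),\LCE(T)=O\big(nm\sqrt{T}(\log(nT/\delta)+\log T)\big)$, which is Theorem~\ref{thm:TS-anytime}. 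For Theorem~\ref{thm:TS-finitehorizon} take $\delta_t=1/T$: the estimation part holds uniformly over $t\le T$ with a fixed, negligibly-contributing confidence, and a union bound over the per-round sampling events with slack $\delta_t^2/n$ makes the global bad event $\GCal^c$ have probability $O(1/T)$. On $\GCal$ the displayed bound (with $\log(m/\delta_t)=\log(mT)$) is $O(nm\sqrt{T}\log T)$; on $\GCal^c$ use the deterministic estimate $\lFD(\xt),\lCE(\xt,\pt)\le n$ (each summand lies in $[0,1]$), contributing at most $nT\cdot O(1/T)=O(n)$ to the expectation. Summing the two pieces gives $\EE[\LFD(T)],\EE[\LCE(T)]\in O(nm\sqrt{T}\log T)$, as claimed.
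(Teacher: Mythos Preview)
Your high-level plan is sound up to and including the GLM concentration for $\thetabar_{t,i}$, the Gaussian tail for the sampling deviation, the elliptical-potential bound on $\sum_t \|\phi(\vx_{t,i})\|_{Q_{t,i}^{-1}}$, and the final conditioning that converts a high-probability bound into an expectation bound. The gap is precisely at the step you yourself flag as ``the main obstacle,'' and your proposed fix for it does not work.

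Your decomposition of $\lCE$ writes
\[
u_i(d_i(\pt)) - u_i(\vx_{t,i}) \;\le\; |u_i - u_{t,i}|(d_i(\pt)) \;+\; |u_i - u_{t,i}|(\vx_{t,i}),
\]
using $u_{t,i}(\vx_{t,i}) \ge u_{t,i}(d_i(\pt))$. The second term is $O(\beta_t)\cdot\|\phi(\vx_{t,i})\|_{Q_{t,i}^{-1}}$ and is handled by the elliptical-potential lemma. The first term, however, is $O(\beta_t)\cdot\|\phi(d_i(\pt))\|_{Q_{t,i}^{-1}}$, and the elliptical-potential lemma only controls this norm at the points actually fed into $Q_{t,i}$, not at the (never-played) true demand. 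Your claim that the initialization phase ``covers'' these comparison points is incorrect: initialization guarantees $\lambda_{\min}(Q_{t,i}) \ge m^2$, hence $\|\phi(z)\|_{Q_{t,i}^{-1}} \le \|\phi(\vone)\|_2/m = O(1/\sqrt m)$ for \emph{every} $z$, but this is a constant that does not decrease in $t$. Plugging it in gives a per-round loss of order $\beta_t/\sqrt m = O(\log T)$ and hence $\LCE(T) = O(nT\log T)$, which is linear. The same defect appears in your $\lSI$ bound (error at $e_i$) and in your $\lPE$ bound (the $L_1$ displacement of the two Pareto fronts involves errors at the true-CE allocations $x_i^\star$).

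The paper closes this gap with an ingredient you are missing entirely: an \emph{anti-concentration} (constant-probability optimism) step specific to Thompson sampling. Because the sampling law is $\NCal(\thetabar_{t,i},\alpha_t^2 Q_{t,i}^{-1})$ with $\alpha_t$ calibrated to the confidence radius, for any fixed $z$ one has $P_t\big(u_{t,i}(z) > u_i(z)\,\big|\,A_{it}\big) \ge q_0$ for an absolute constant $q_0>0$. Applying this at the comparison point $x^\ast_{it}$ (or $x_i^\star$ for $\lFD$) and intersecting with the concentration events shows that with probability at least $q_0/2$ the algorithm's choice $\vx_{t,i}$ lies outside the ``bad'' set $S_{it}=\{x: u_i(x^\ast_{it})-u_i(x)\ge \beta_{3t}\rho_{it}(x)\}$, where $\rho_{it}(x)=\|\phi(x)\|_{Q_{t,i}^{-1}}$. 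One then introduces an intermediate point $x'_{it}=\argmin\{\rho_{it}(x): x\notin S_{it},\ p_t^\top x\le p_t^\top e_i\}$, obtains $\rho_{it}(x'_{it}) \le (2/q_0)\,\E_t[\rho_{it}(\vx_{t,i})]$, and decomposes the loss through $x'_{it}$ to get $\E_t[\ell_{it}] \le (5/q_0)\beta_{3t}\,\E_t[\rho_{it}(\vx_{t,i})] + O(\delta_t)$. This is exactly the step that trades the uncontrollable $\rho_{it}(d_i(\pt))$ for the controllable $\E_t[\rho_{it}(\vx_{t,i})]$. From there a super-martingale/Azuma argument converts the conditional-expectation bound into a high-probability bound on $\sum_t \ell_{it}$, after which the elliptical-potential summation and your finite-horizon conditioning (good event of probability $1-O(1/T)$, trivial bound $n$ per round on its complement) complete the proof essentially as you describe.
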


% \kk{Can we pull out the exact dependence on $M$ here?
% I don't think the above expressions are correct since this means we can set $M$ to be arbitrarily
% small. I think it should be
% }

Above, probabilities and expectations are with respect to both
the randomness in the observations and the sampling procedure.
Both theorems show that we can learn with respect to both losses at
$\sqrt{T}$ rate.
Note that the rates depend on the number of initialization sub-phases $M$.
By choosing $M=m^2$, we get a $\widetilde{O}(nm\sqrt{T})$ bound.
However, this also requires a large initialization phase, which may not be feasible in practice.
We can instead choose $M$ to be small, but this leads to correspondingly worse asymptotic bounds.

% This result is consistent with previous upper and lower bounds for $m$ dimensional
% linear and generalised linear
% bandits where the minimax rate is known to be
% $\widetilde{O}(m\sqrt{T})$~\citep{filippi2010parametric,rusmev2010linearly,dani2008stochastic,li2017provably,shamir2015complexity}.
% The additional $n$ term is required since we need to learn for $n$ agents separately.

\begin{figure*}[!ht]
\centering
\begin{tabular}{ccc} 
\includegraphics[width=0.27\textwidth]{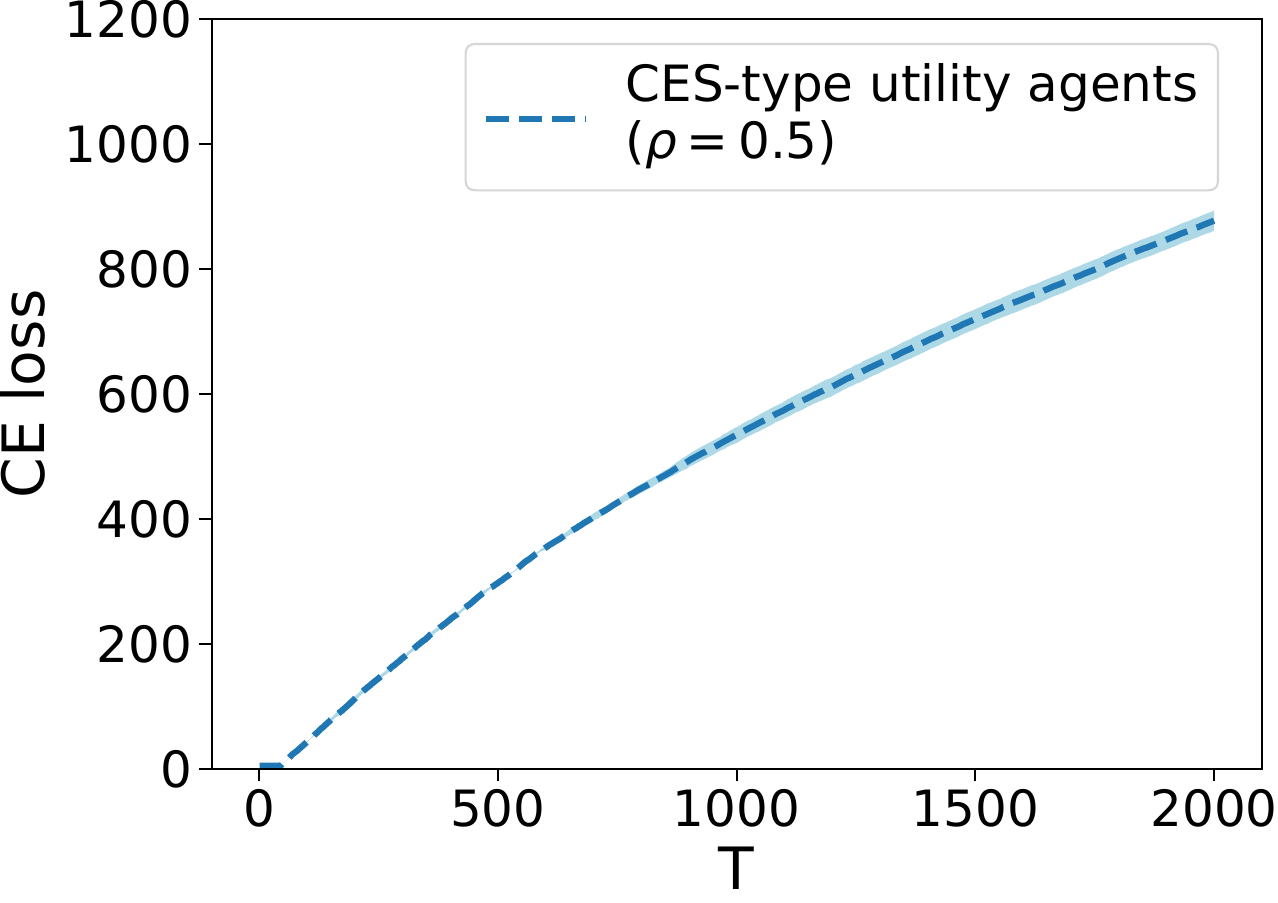} &
\includegraphics[width=0.27\textwidth]{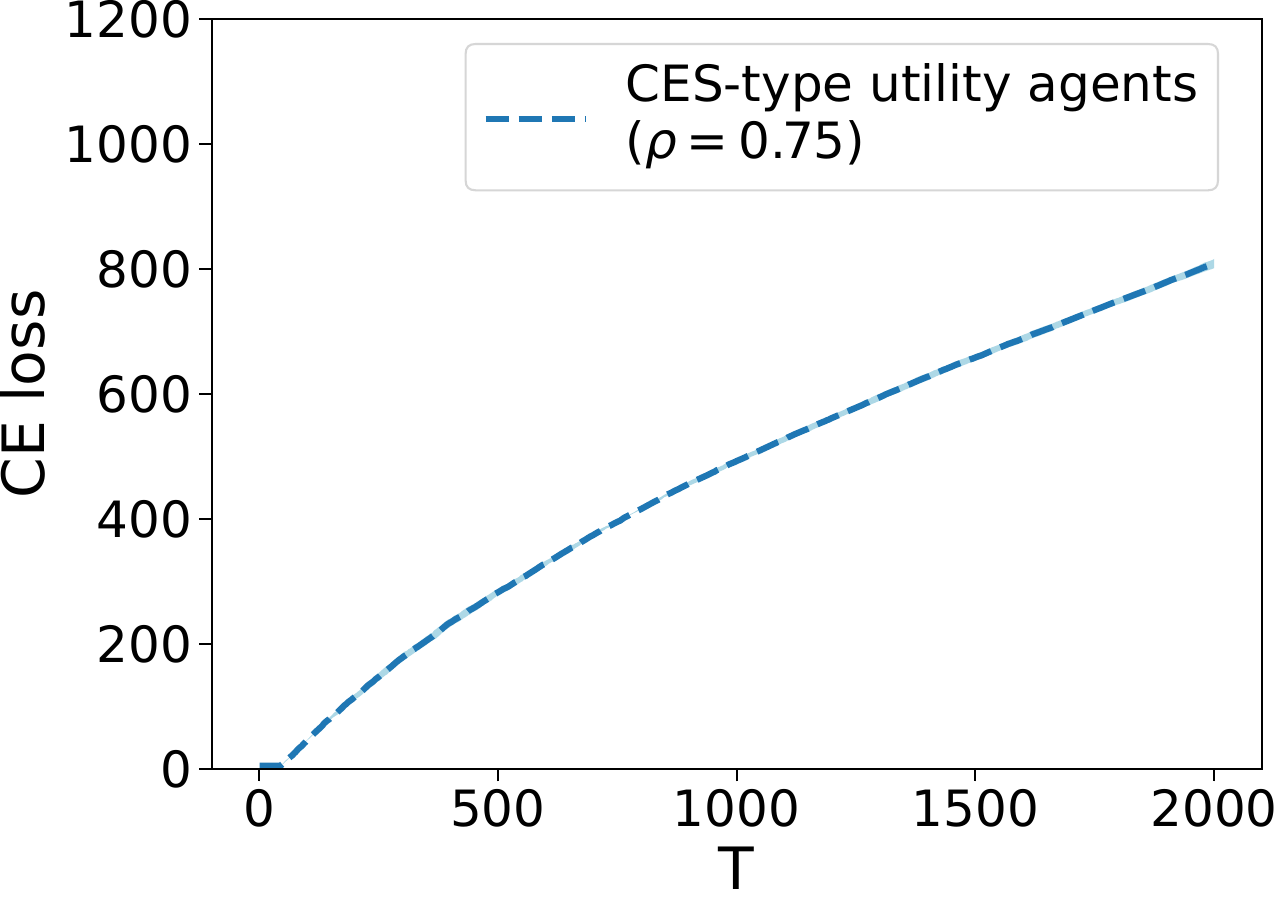} &
\includegraphics[width=0.27\textwidth]{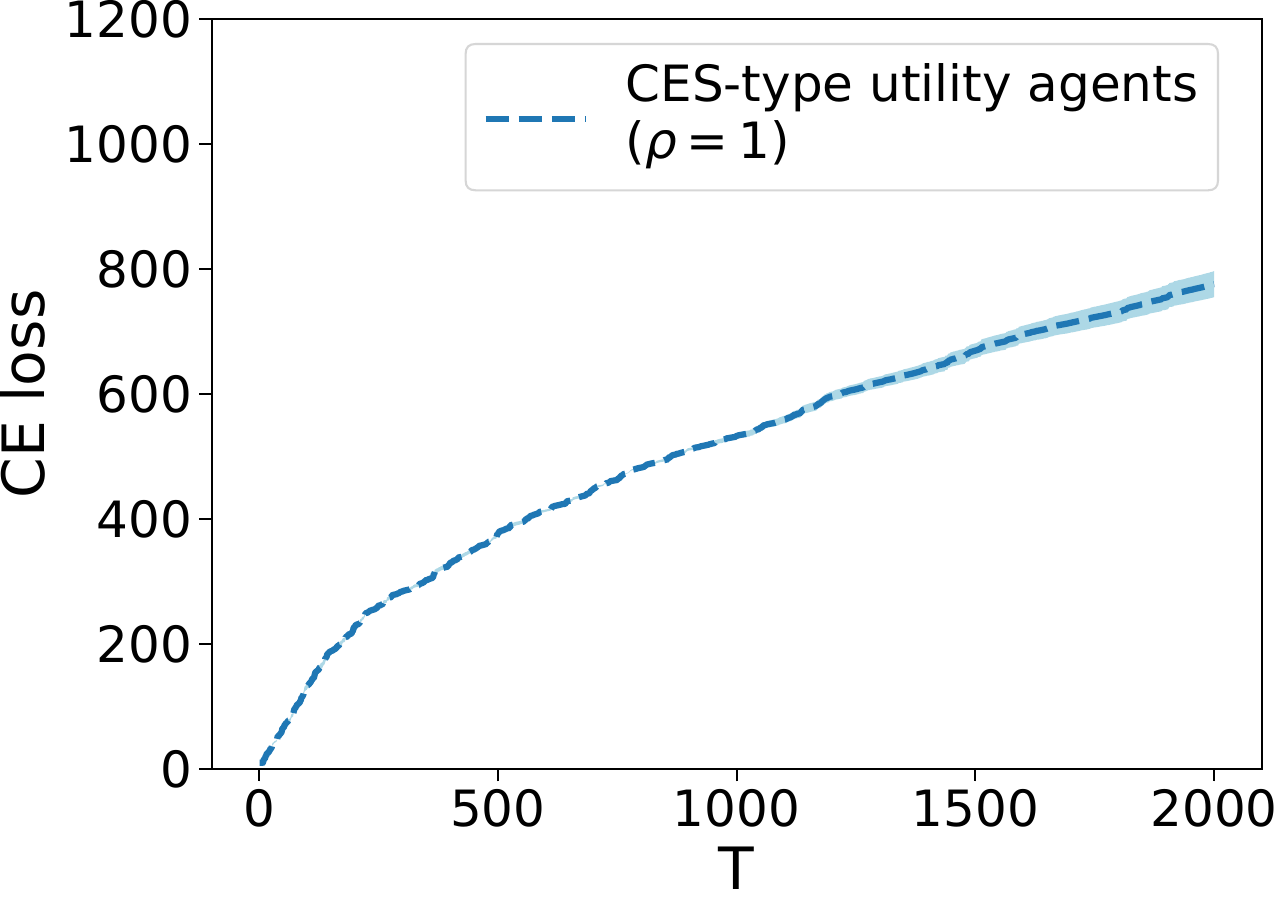} 
\end{tabular}
\caption{
The CE loss $\LCE_T$ vs the number of rounds $T$, evaluated with $m=3$ resource types and 
$n=5$ agents with CES utilities. We present results for $\rho=0.5$, $\rho=0.75$, and
$\rho=1$ respectively (see Example~\ref{eg:ces}).
All figures show results which are averaged over 10 runs, and the shaded region shows the standard
error at each time $T$.
\label{fig:results}
} \vspace{-1em}
\end{figure*}

\paragraph{Proof sketch.}
% The proof uses several novel ideas but also
Our proof uses some prior martingale concentration results from the bandit
literature~\citep{rusmev2010linearly,filippi2010parametric}, and additionally, we use
 some high level intuitions from prior frequentist analyses of Thompson
sampling~\citep{kaufmann2012thompson,agrawal2012analysis,mutny2019efficient}.
% We will first consider the loss $\LCE$.
At the same time, we also require novel techniques, both to bound the loss terms, and analyse the
algorithm.
Our proof for bounding $\LCE_T$ first defines high probability events $A_{t,i}, B_{t,i}$ for each agent
$i$ and round $t$.
$A_{t,i}$ captures the event that the estimated $\thetabar_{t,i}$ is close to $\thetatruei$
in $Q_{t,i}$ norm.
We upper bound $\PP(A_{t,i}^c)$ using the properties of the maximum quasi-likelihood estimator on
GLMs~\citep{chen1999strong,filippi2010parametric} and a martingale argument.
$B_{t,i}$ captures the event that the sampled $\theta_{t,i}$ is close to $\thetabar_{t,i}$ in
$Q_{t,i}$ norm.
Given these events, we then bound the instantaneous losses $\lCE(\xt, \pt)$ by a
super martingale with bounded differences.
The final bound is obtained by an application of the Azuma inequality.

Another key ingredient in this proof is to show that the sampling step also explores
sufficiently--the $B_{it}$ event only captures exploitation; since the sampling distribution is
a multi-variate Gaussian, this can be conveniently argued using an upper bound on the
standard normal tail probability.
While bounding $\LFD$ uses several results and techniques as above,
it cannot be directly related to $\LCE$, and requires a separate analysis.
% However, we can use similar techniques and intuitions as $\LCE_T$.

\vspace{-2mm}
\section{Experiments}\label{sec:experiments}
\vspace{-1mm}

We evaluated Algorithm~\ref{Alg:TS} with simulations. To the best of our knowledge, this is the first online algorithm studying fair and efficient allocations with unknown utilities with multiple heterogeneous resource types, and there are no existing natural baselines. There is also no straightforward adaptation of the method described in~\citet{kandasamy2020online} for single resource types since they do not consider the exchange of resources.
We evaluated based on two types of utilities.

\paragraph{1. CES utilities:} Described in Example~\ref{eg:ces}.
\vspace{-3mm}
\paragraph{2. Amdahl's utilities:} The Amdahl's utility function,
described in~\citet{zahedi2018amdahl}, is used to model the performance of jobs distributed
across heterogeneous machines in a data center. 
This utility is motivated by Amdahl’s Law~\citep{amdahl2013computer}, which models
a job's speed up in terms of the fraction of work that can be parallelized.
% states that a job's
% that sets aside the marginal returns implied by
% a user’s numerical request for cores.
Let $0<f_{ij}<1$ denote the parallel fraction of
user $i$'s job on machine type $j$.
Then, an agent's Amdahl utility is: $u_i(x) =\sum_{j=1}^m \theta_{ij}\phi_{ij}(x_{ij})$, where $\phi_{ij}(x_{ij}) = \nicefrac{x_{ij}}{f_i + (1-f_i) x_{ij}}$. $\phi_{ij}(x_{ij})$ is the relative speedup produced by allocation $x_{ij}$.
% Further, $w_{ij}$ denotes the fraction of agent $i$'s job that is completed on server $j$. Thus, $\sum_j w_{ij} = 1$.
% \end{enumerate}
Both CES and Amdahl utilities belong to our class $\Pcal$ given in~\eqref{eqn:Pcaldefn}.

We focus our evaluation on the CE loss; computing the FD loss is computationally expensive as it requires taking an infimum over the Pareto-front (more details in Appendix~\ref{app:sec:additional-discussion}). Our first set of experiments consider an environment with $m=3$ resource types and $n=5$ agents,
all of whom have CES utilities.
We conduct three experiments with different values for the elasticity of substitution
$\rho$.
Our second set of experiments  consider an environment with $m=2$ resource types and $n=8$ agents,
all of whom have Amdahl's utilities, where the results are similar and thus included in Appendix~\ref{app:exp-additional}.
We conduct three experiments with different values for the parallel fraction $f_{ij}$.
All experiments are run for $T=2000$ rounds, where we 
set $\delta=\frac{1}{T}$. The results are given in Figure~\ref{fig:results}.
They show that the CE loss grows sublinearly with $T$ which indicates that the algorithm is able to
learn utilities and compute a CE.

% Empirically, we demonstrate the effectiveness of our online learning algorithm in learning a competitive equilibrium through extensive simulations.

% \paragraph{Amdahl utility agents.} Beyond the CES utility, we consider another type of utility functions, the Amdahl utility~\citep{zahedi2018amdahl}, which is specifically convenient i modeling performance from processor core allocations for distributed computing systems. This utility function is based on Amdahl’s Law~\citep{amdahl2013computer}, that sets aside the marginal returns implied by a user’s numerical request for cores. Formally, let $0<f_{ij}<1$ denote the parallel fraction of user $i$'s job on server $j$, an agent's Amdahl utility from $m$ servers is defined as
% \begin{equation}\label{eq:Amd-agent}
% u_i(x) = \frac{\sum_{j=1}^m w_{ij}s_{ij}(x_{ij})}{\sum_{j=1}^m w_{ij}},
% \end{equation}
% where $s_{ij}(x_{ij}) = \nicefrac{x_{ij}}{f_i + (1-f_i) x_{ij}}$ is the relative speedup produced by the allocation $x_{ij}$. Further, $w_{ij}$ denotes the fraction of agent $i$'s job that is completed on server $j$. Thus, $\sum_j w_{ij} = 1$.

% Note that both the CES utility and the Amdahl utility are subclasses of the utilities are contained in our modeling (see section~\ref{sec:model}).

% \paragraph{Experimental details:}
To compute the CE at line~\ref{alg:ce} of Algorithm~\ref{Alg:TS}, we use the proportional
response dynamics procedure from~\cite{zhang2011proportional} with 20 iterations.
% requires computing a CE given a set of utilities (at step~\ref{alg:ce}).
% Numerically, a CE can be computed through proportional response dynamics, which converge to a market
% equilibrium. We implement the algorithm proposed in~\citet{zhang2011proportional} as a subroutine to
% compute a CE at each iteration. 
To compute $\LCE$, we need to maximize each agent's utility subject to a budget. Full experimental details and additional results are included in Appendix~\ref{app:exp-additional}.

% \begin{figure*}[!ht]
% \centering
% \begin{tabular}{ccc} 
% \includegraphics[width=0.3\textwidth]{../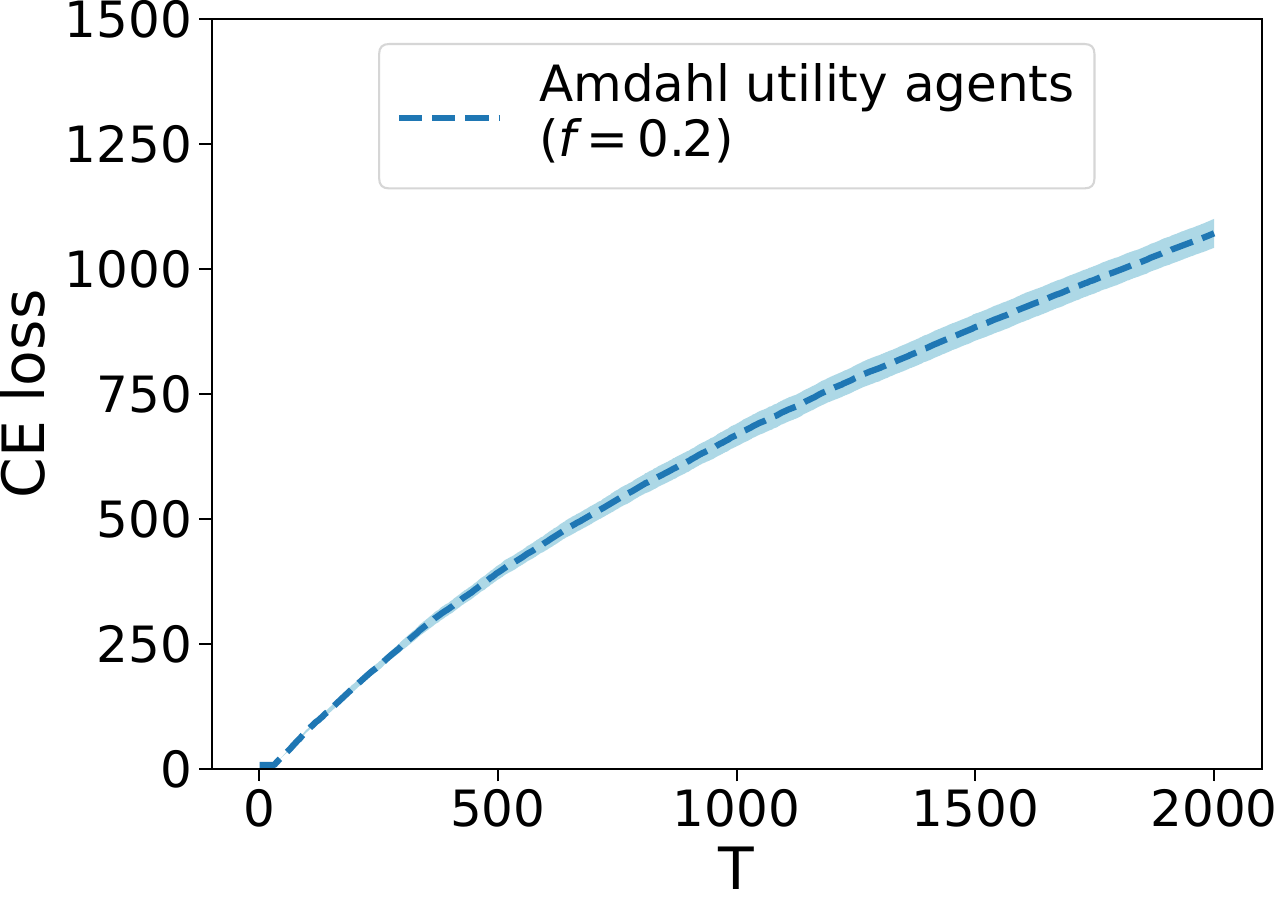} &
% \includegraphics[width=0.3\textwidth]{../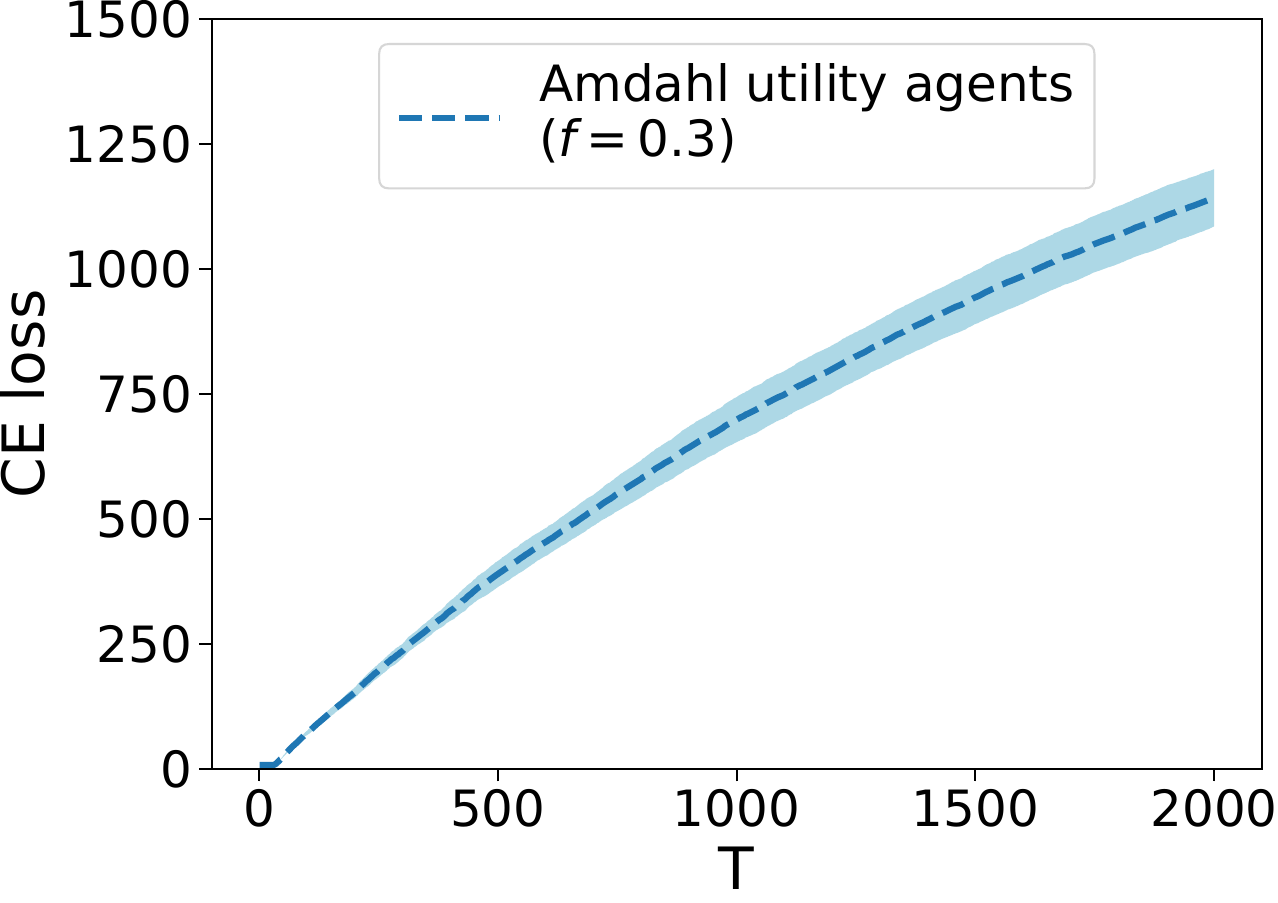} &
% \includegraphics[width=0.3\textwidth]{../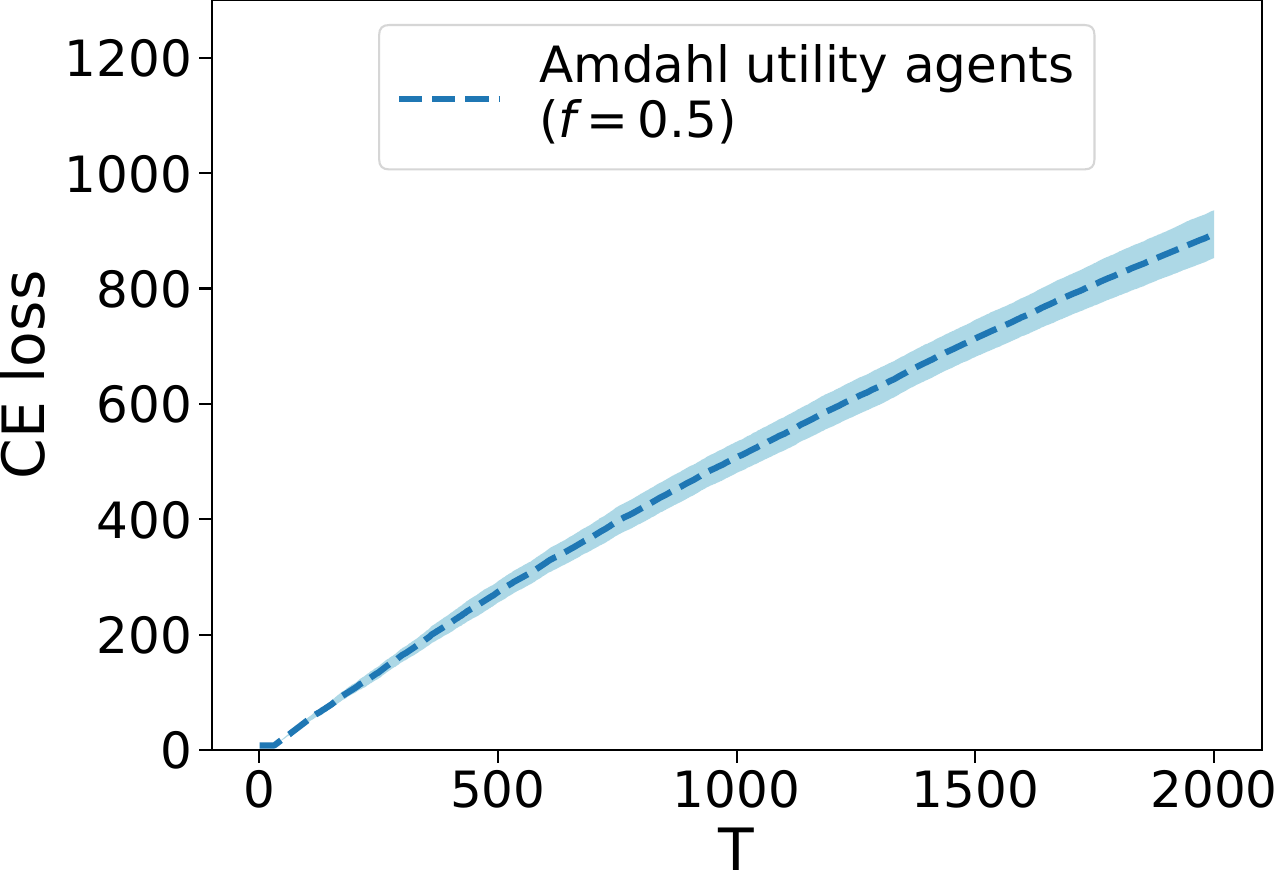}
% \end{tabular}
% \caption{Accumulative loss $L_{CE}$ over iterations $T$, for simulation with $8$ agents with Amdahl utilities and $2$ resource types. Each agent has \textit{(left)} $f_i=0.2$; \textit{(middle)} $f_i=0.3$; \textit{(right)} $f_i=0.5$. Results are averaged over 10 runs and shading shows standard errors.}\label{fig:AMD} 
% \end{figure*}

% \paragraph{Results.}
% In Figure~\ref{fig:CES}, we present the simulation results with agents having CES utilities. We experimented with agents having different elasticity constants, setting $\rho_i = 0.5/0.75/1$ for all agents, and scenarios with 5 agents, 3 resource types, or 2 agents and 2 resource types. Figure~\ref{fig:CES} shows that under all experiment scenarios, Algorithm~\ref{Alg:TS} is able to obtain a sublinear loss $\LCE$ over $T$.  Figure~\ref{fig:AMD} shows the simulation results with with agents having Amdahl utilities, where we simulated a set of $8$ agents sharing $2$ types of resource. Figure~\ref{fig:AMD} shows that for agents with 0.2/0.3/0.5 parallel fraction, Algorithm~\ref{Alg:TS} also achieves a sublinear loss $\LCE$ over $T$.

\section{Conclusion} \label{sec:conclu}
\vspace{-1mm}

We introduced and studied the problem of online learning a competitive equilibrium in an exchange
economy, without a priori knowledge of agents' utilities.
% We formulated this task in a new online
% learning framework.
We quantify the learning performance via two losses, the first motivated
from the definition of an equilibrium, and the second
by fairness and Pareto-efficiency considerations
in fair division. We develop a randomized algorithm which achieves $\Tilde{O}(nm\sqrt{T})$ loss
after $T$ rounds under both losses, and corroborate these theoretical results
with simulations.
While our work takes the first step towards sequentially learning a market equilibrium in
exchange economies, an interesting avenue for future work would be to study learning approaches in broader classes of agent utilities and market dynamics. %This would be an interesting avenue for future work.

\iffalse
Our work addresses the technical challenge of efficient and fair allocations when agents' utilities
are unknown, with a specific fairness notion on sharing incentives. However, we emphasize that there
are other notions of fairness in the fair division literature, with varying connections to
prior sociopolitical framing. On a broader societal level, this work, as with
many other fair allocation
algorithms, if being applied to scenarios where the choice of fairness criteria is not appropriate
can lead to potential negative impact. Thus, we emphasize that whether our model is applicable to
certain applications should be carefully evaluated by domain experts, along with awareness of
the 
tradeoffs involved.
\fi

% @ Wenshuo: I don't think this is very pertinent. --KK
% \wg{This is in response to the ethic question in the checklist 1(c)}
% On a broader societal level, this methodology, like many other theoretical learning approaches,
% should be applied with careful validation in settings where model assumptions are satisfied to avoid
% undesirable allocations caused by model misuse.

% \kk{@WG: we should chat about some notation changes I made.}

%%%%%%%%%%%%%%%%%%%%%%%%%%%%%%%%%%%%%%%%%%%%%%%%%%%%%%%%%%%%

\subsubsection*{Acknowledgments}

The authors would like to thank Mihaela Curmei and Serena Wang for the extensive discussions and helpful suggestions on an early version of the work that significantly improved the paper. The authors also thank Zhuoran Yang for their careful reading of a draft of this paper, and for making several important suggestions. This work was supported in part by the Mathematical Data Science program of the Office of Naval Research under grant number N00014-18-1-2764. WG acknowledges support from a Google PhD Fellowship.

\bibliography{ref}
\bibliographystyle{plainnat}

\clearpage
\appendix

\section{Technical Lemmas}

% \kk{Some latex tips: using \\begingroup, \\endgroup, and \\allowdisplaybreaks allows you to carry
% equations from one page to another which avoids large white spaces. See some examples below.}

We first provide some useful technical lemmas.

\begin{lemma}\label{lem:TL1}
Suppose that $Z$ is a $\chi^2_m$ random variable, i.e. $Z = \sum_{k=1}^m Z_k^2$, where for all $k$, $Z_k$ are i.i.d. random variables drawn from $\cN(0,1)$. Then, 
\begin{equation*}
P(Z > m+\alpha) = \begin{cases}
e^{-\frac{\alpha}{8}} &\text{$\alpha > m$}\\
e^{-\frac{\alpha^2}{8m}} &\text{$\alpha \leq m$}.
\end{cases}
\end{equation*}
\end{lemma}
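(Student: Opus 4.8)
The plan is to obtain the tail bound by a Chernoff argument using the moment generating function of a $\chi^2_m$ variate. Recall that for $Z \sim \chi^2_m$ and any $\lambda \in [0,\tfrac12)$ one has $\E[e^{\lambda Z}] = (1-2\lambda)^{-m/2}$. Markov's inequality applied to $e^{\lambda Z}$ gives, for every such $\lambda$,
\[
P(Z > m+\alpha) \;\le\; \exp\!\Big( -\lambda(m+\alpha) - \tfrac m2 \log(1-2\lambda) \Big).
\]
The right-hand side is minimized at $\lambda = \tfrac{\alpha}{2(m+\alpha)}$, which yields
\[
P(Z > m+\alpha) \;\le\; \exp\!\Big( {-\tfrac{\alpha}{2}} + \tfrac m2 \log\big(1 + \tfrac{\alpha}{m}\big) \Big).
\]
Equivalently, by choosing the deviation in the form $2\sqrt{mx}+2x$, this is the standard Laurent--Massart tail bound $P\big(Z \ge m + 2\sqrt{mx}+2x\big) \le e^{-x}$, valid for all $x\ge 0$; I would either cite it or reproduce the one-line derivation above.

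Given this bound, the two cases follow from elementary arithmetic by picking $x$ so that $2\sqrt{mx}+2x \le \alpha$, since then $\{Z > m+\alpha\} \subseteq \{Z \ge m + 2\sqrt{mx}+2x\}$. When $\alpha \le m$, take $x = \alpha^2/(8m)$: then $2\sqrt{mx} = \alpha/\sqrt 2$ and $2x = \alpha^2/(4m) \le \alpha/4$, so $2\sqrt{mx}+2x \le (\tfrac{1}{\sqrt2}+\tfrac14)\alpha < \alpha$, giving $P(Z > m+\alpha) \le e^{-x} = e^{-\alpha^2/(8m)}$. When $\alpha > m$, take $x = \alpha/8$: then, using $m < \alpha$, $2\sqrt{mx} = \sqrt{m\alpha/2} < \alpha/\sqrt 2$ and $2x = \alpha/4$, so again $2\sqrt{mx}+2x < \alpha$, giving $P(Z > m+\alpha) \le e^{-x} = e^{-\alpha/8}$.

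The whole argument is standard; the only thing to watch is the bookkeeping of constants, i.e.\ checking that the advertised exponents $\alpha^2/(8m)$ and $\alpha/8$ are actually attainable — the slack $\tfrac{1}{\sqrt2}+\tfrac14 \approx 0.957 < 1$ in both cases confirms this with room to spare, so the bound is not even tight. I would also state the conclusion as an inequality $\le$ rather than an equality. If a Laurent--Massart reference is not readily available, I would instead carry out the $\lambda$-optimization explicitly and verify directly that ${-\tfrac{\alpha}{2}} + \tfrac m2 \log(1+\tfrac{\alpha}{m}) \le -\tfrac{\alpha}{8}$ for $\alpha > m$ and $\le -\tfrac{\alpha^2}{8m}$ for $\alpha \le m$, using the scalar inequalities $\log(1+u) \le \tfrac{3u}{4}$ for $u\ge 1$ and $\log(1+u) \le u - \tfrac{u^2}{4}$ for $0\le u\le 1$, both of which are one-line calculus facts (the second follows since $g(u) = u - \tfrac{u^2}{4} - \log(1+u)$ has $g(0)=0$ and $g'(u)\ge 0$ on $[0,1]$).
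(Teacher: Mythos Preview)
Your proof is correct and rests on the same Chernoff/MGF mechanism as the paper's. The paper simply invokes the standard sub-exponential tail bound (e.g.\ Wainwright) with the parameters $(\nu,b)=(2\sqrt m,4)$ for a $\chi^2_m$ variable, from which the two regimes $e^{-\alpha^2/(2\nu^2)}=e^{-\alpha^2/(8m)}$ and $e^{-\alpha/(2b)}=e^{-\alpha/8}$ drop out immediately; you instead work out the Chernoff bound by hand via the Laurent--Massart inequality and choose $x$ explicitly in each regime. The underlying argument is identical --- the sub-exponential framework is just a repackaging of the MGF optimization you carry out --- so the only difference is that your version is self-contained whereas the paper's is a one-line citation. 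Your remark that the conclusion should read $\le$ rather than $=$ is well taken.
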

\begin{proof}
Suppose that $X$ is sub-exponential random variable with parameters $(\nu, b)$ and expectation
$\mu$.
Applying well known tail bounds for sub-exponential random variables
(e.g. \citep{wainwright2019high}) yields: 
\begin{equation*}
P(X > \mu+\alpha) = \begin{cases}
e^{-\frac{\alpha^2}{2\nu^2}} &\text{$0 \leq \alpha \leq \frac{\nu^2}{b}$, and}\\
e^{-\frac{\alpha}{2b}} &\text{$\alpha > \frac{\nu^2}{b}$}.
\end{cases}
\end{equation*}
The lemma follows from the fact that a $\chi^2_m$ random variable is sub-exponential with
parameters $(\nu, b) = (2,4)$. 
% The lemma then follows from the fact that for $Z$, where $Z$ is a $\chi^2_m$ random variable, therefore $Z$ is sub-exponential with parameters $(2,4)$. 
\end{proof}

\begin{lemma}(Lower bound for normal distributions)\label{lem:TL2}
	Let $Z$ be a random variable $Z\sim \cN(0,1)$, then $P(Z > t) \geq \frac{1}{t + \sqrt{t^2+4}}\sqrt{\frac{2}{\pi}} e^{-\frac{t^2}{2}}$.
\end{lemma}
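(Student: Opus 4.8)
The plan is to recognize this as the lower half of the classical two-sided (Komatsu-type) bound on the Gaussian tail and prove it by a monotonicity argument. First I would clean up the target. Writing $\varphi(x) = (2\pi)^{-1/2} e^{-x^2/2}$ for the standard normal density and $\overline{\Phi}(t) = P(Z > t) = \int_t^\infty \varphi(x)\,\mathrm{d}x$, note that $\sqrt{2/\pi}\, e^{-t^2/2} = 2\varphi(t)$ and that $\frac{1}{t+\sqrt{t^2+4}} = \frac{\sqrt{t^2+4}-t}{4}$, so the claimed inequality is exactly $\overline{\Phi}(t) \ge g(t)\varphi(t)$, where $g(t) \defeq \frac{\sqrt{t^2+4}-t}{2} = \frac{2}{t+\sqrt{t^2+4}} > 0$.

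Next I would define $f(t) \defeq \overline{\Phi}(t) - g(t)\varphi(t)$ and establish two facts: (a) $f(t) \to 0$ as $t \to +\infty$, which is immediate since for $t>0$ we have $0 < g(t) < 2/t$, so both $\overline{\Phi}(t)$ and $g(t)\varphi(t)$ vanish; and (b) $f$ is non-increasing on $\mathbb{R}$. Together (a) and (b) give $f(t) \ge \lim_{s\to\infty} f(s) = 0$ for all $t$, which is the claim. For (b), using $\overline{\Phi}'(t) = -\varphi(t)$ and $\varphi'(t) = -t\varphi(t)$, a one-line computation yields $f'(t) = \varphi(t)\big(t g(t) - g'(t) - 1\big)$, so it suffices to show $t g(t) - g'(t) \le 1$ for all $t$.

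The only real content is this last scalar inequality, and I expect the elementary sign-chase here to be the (minor) obstacle — there is no probabilistic subtlety, just careful bookkeeping. Substituting $g'(t) = \tfrac12\big(t/\sqrt{t^2+4} - 1\big)$ and clearing the positive factor $2\sqrt{t^2+4}$, the inequality $t g(t) - g'(t) - 1 \le 0$ reduces to $t(t^2+3) \le (t^2+1)\sqrt{t^2+4}$. For $t < 0$ this is trivial since the left-hand side is negative and the right-hand side positive; for $t \ge 0$ both sides are nonnegative, so squaring and setting $s = t^2 \ge 0$ reduces it to $s(s+3)^2 \le (s+1)^2(s+4)$, i.e. $s^3 + 6s^2 + 9s \le s^3 + 6s^2 + 9s + 4$, which clearly holds. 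Hence $f' \le 0$ on $\mathbb{R}$ and the bound follows. (The argument works for all real $t$; if only $t \ge 0$ is needed, as in the application to the sampling/exploration step, the $t<0$ case can simply be omitted.)
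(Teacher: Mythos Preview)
Your proof is correct. The paper, however, takes a different and much shorter route: it simply quotes the Abramowitz--Stegun inequality (7.1.13),
\[
e^{x^2}\int_x^{\infty} e^{-s^2}\,ds \;\ge\; \frac{1}{x+\sqrt{x^2+2}},
\]
and then performs the change of variables $t=\sqrt{2}\,x$ to rewrite the complementary error function as the Gaussian tail, yielding the claim directly. So the paper's argument is essentially a citation plus a one-line substitution, whereas you give a fully self-contained proof via the monotonicity of $f(t)=\overline{\Phi}(t)-g(t)\varphi(t)$, reducing everything to the elementary polynomial inequality $s(s+3)^2 \le (s+1)^2(s+4)$. What your approach buys is independence from an external reference and some insight into \emph{why} the bound is sharp (your computation shows $f'\le 0$ with equality nowhere, and in fact $f(t)\downarrow 0$, which explains the asymptotic tightness); what the paper's approach buys is brevity, at the cost of deferring the actual work to \cite{abramowitz1988handbook}. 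Both are valid; yours is closer to how one would prove the Komatsu/Birnbaum inequality from scratch.
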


\begin{proof}
First, from~\citet{abramowitz1988handbook} (7.1.13) we have, 
\[
e^{x^2}\int_x^{\infty} e^{-t^2} dt \geq \frac{1}{x+\sqrt{x^2+2}}.
\]
Set $t = \sqrt{2}x$, then the above equation yields:
\[
P(Z > t) = \frac{1}{\sqrt{2\pi}} \int_t^\infty e^{-\frac{x^2}{2}} dx \geq \frac{1}{t + \sqrt{t^2+4}}\sqrt{\frac{2}{\pi}} e^{-\frac{t^2}{2}},
\]
which completes the proof.
\end{proof}

\begin{lemma}(Azuma-Hoeffding inequality\citep{wainwright2019high})\label{lem:TL3} Let $(Z_s)_{s\geq 0}$ be a super martingale w.r.t. a filtration $(\cF_t)_{t\geq 0}$. Let  $(B_t)_{t\geq 0}$ be predictable processes w.r.t. $(\cF_t)_{t\geq 0}$, such that $|Z_s - Z_{s-1}| \leq B_s$ for all $s \geq 1$ almost surely. Then for any $\delta > 0$, 
\[
	P\left(Z_T-Z_0 \leq \sqrt{2\log\left(\frac{1}{\delta}\right)\sum_{t=1}^T B_t^2}\right) \geq 1-\delta.
\]
\end{lemma}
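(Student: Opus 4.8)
The plan is to run the classical exponential-moment (Chernoff/MGF) argument underlying Azuma--Hoeffding, adapted to the supermartingale setting and to increments whose envelope $B_s$ is merely predictable rather than constant. Write $V_T = \sum_{t=1}^T B_t^2$. Fix any $\lambda>0$ and consider the process $M_s = \exp\!\big(\lambda(Z_s-Z_0) - \tfrac{\lambda^2}{2}\sum_{t=1}^{s}B_t^2\big)$ for $s\ge 0$, so that $M_0=1$. The first step is to show $(M_s)$ is a supermartingale with respect to $(\cF_s)$. Write $W_s = Z_s - Z_{s-1}$ and $\mu_s = \EE[W_s\mid\cF_{s-1}]$; the supermartingale hypothesis gives $\mu_s\le 0$, while $|W_s|\le B_s$ together with $B_s$ being $\cF_{s-1}$-measurable (predictability) means that, conditionally on $\cF_{s-1}$, $W_s-\mu_s$ is a mean-zero variable supported on an interval of width $2B_s$. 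The conditional Hoeffding lemma then gives $\EE[e^{\lambda W_s}\mid\cF_{s-1}] = e^{\lambda\mu_s}\EE[e^{\lambda(W_s-\mu_s)}\mid\cF_{s-1}] \le e^{\lambda\mu_s}e^{\lambda^2 B_s^2/2}\le e^{\lambda^2 B_s^2/2}$, using $\lambda>0$ and $\mu_s\le 0$; multiplying through by the $\cF_{s-1}$-measurable factor $e^{-\lambda^2 B_s^2/2}$ and by $M_{s-1}$ yields $\EE[M_s\mid\cF_{s-1}]\le M_{s-1}$.

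Given this, $\EE[M_T]\le M_0 = 1$, so Markov's inequality gives $P(M_T \ge 1/\delta)\le\delta$. On the complementary event, which has probability at least $1-\delta$, we have $\lambda(Z_T-Z_0) - \tfrac{\lambda^2}{2}V_T < \log(1/\delta)$, i.e. $Z_T - Z_0 < \log(1/\delta)/\lambda + (\lambda/2)V_T$. Choosing $\lambda = \sqrt{2\log(1/\delta)/V_T}$ balances the two terms and, by AM--GM, makes the right-hand side exactly $\sqrt{2\log(1/\delta)\,V_T}$, which is the claimed bound. I would also recall or reprove the conditional version of Hoeffding's lemma in the precise form used above, since predictability of $(B_t)$ enters exactly there.

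The step I expect to need the most care is the final tuning: the optimal $\lambda$ depends on $V_T$, whereas the identity $\EE[M_T]\le 1$ only holds for $\lambda$ fixed in advance. This is harmless when $V_T$ is deterministic, and more generally when $V_T$ is almost surely bounded by a known constant $V$: one fixes $\lambda = \sqrt{2\log(1/\delta)/V}$ before running the argument, and then $\log(1/\delta)/\lambda + (\lambda/2)V_T \le \sqrt{2V\log(1/\delta)}$ on the good event. This is precisely the regime in which the lemma is later applied (the per-round increments of the loss processes are controlled by deterministic quantities), so the stated form is exactly what is needed; a genuinely random $V_T$ with no deterministic bound would instead require a peeling/union-bound over a grid of $\lambda$ values, at the cost of additional logarithmic factors.
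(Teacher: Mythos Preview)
The paper does not prove this lemma at all: it is stated with a citation to \citet{wainwright2019high} and no argument is given. Your exponential-supermartingale (Chernoff) proof is the standard one and is correct; in particular the conditional Hoeffding step, the use of predictability of $B_s$ to pull $e^{-\lambda^2 B_s^2/2}$ outside the conditional expectation, and the Markov/optimization step are all fine.

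Your final caveat is well taken and worth keeping: the lemma as stated allows $(B_t)$ to be merely predictable, in which case $V_T=\sum_t B_t^2$ is random and the choice $\lambda=\sqrt{2\log(1/\delta)/V_T}$ is not a priori admissible. As you observe, the paper only ever invokes this lemma (in Lemma~\ref{lem:lemma7}) with a deterministic envelope $D_t$, so your proof covers exactly the case that is actually used. If one wanted the fully predictable version as literally stated, one would need a peeling argument over a dyadic grid of values for $V_T$ (at the cost of an extra logarithmic factor) or a Freedman-type self-normalized bound; the paper does not address this, and neither need you for the downstream application.
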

\begin{lemma}\label{lem:TL4}
	$\forall x \in [0,c], c>0$, we have $ x \leq \frac{c}{\log(1+c)}\log(1+x)$.
\end{lemma}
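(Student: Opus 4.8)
The claim is that for all $x \in [0,c]$ with $c > 0$, we have $x \leq \frac{c}{\log(1+c)}\log(1+x)$. The natural approach is to rearrange this into a statement about a single-variable function and use convexity/monotonicity. Rewriting, the claim is equivalent to $\frac{\log(1+x)}{x} \geq \frac{\log(1+c)}{c}$ for $x \in (0,c]$ (the case $x=0$ being trivial since both sides vanish). So the key step is to show that $g(x) \defeq \frac{\log(1+x)}{x}$ is non-increasing on $(0,\infty)$.

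To prove $g$ is non-increasing, I would either differentiate or use a convexity argument. The cleanest is the convexity route: the function $h(t) = \log(1+t)$ is concave on $[0,\infty)$ with $h(0)=0$, so by concavity, for any $x \in (0,c]$, writing $x = \frac{x}{c}\cdot c + (1-\frac{x}{c})\cdot 0$, we get $h(x) \geq \frac{x}{c} h(c) + (1-\frac{x}{c}) h(0) = \frac{x}{c}\log(1+c)$. Multiplying through by $c/x > 0$ yields exactly $\frac{c}{x}\log(1+x)$... wait, more directly: $\log(1+x) \geq \frac{x}{c}\log(1+c)$ immediately gives $\frac{c}{\log(1+c)}\log(1+x) \geq x$, which is the claim. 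This avoids differentiation entirely.

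Alternatively, the derivative approach: $g'(x) = \frac{\frac{x}{1+x} - \log(1+x)}{x^2}$, and the numerator is negative for $x>0$ because $\log(1+x) > \frac{x}{1+x}$ (a standard inequality, itself provable by noting equality at $x=0$ and comparing derivatives $\frac{1}{1+x}$ versus $\frac{1}{(1+x)^2}$). Either route is short.

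There is no real obstacle here — this is an elementary calculus lemma. The only thing to be slightly careful about is the endpoint $x = 0$, where the inequality holds with equality (both sides are $0$), so it should be handled as a trivial separate case before dividing by $x$. I would present the concavity argument since it is the most transparent and requires no case analysis on signs of derivatives.

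\medskip

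\noindent\emph{Proof.} If $x = 0$ the inequality reads $0 \leq 0$, which holds. Assume $x \in (0,c]$. Since $t \mapsto \log(1+t)$ is concave on $[0,\infty)$ and vanishes at $t=0$, applying concavity to the convex combination $x = \tfrac{x}{c}\cdot c + \bigl(1-\tfrac{x}{c}\bigr)\cdot 0$ gives
\[
\log(1+x) \;\geq\; \frac{x}{c}\,\log(1+c) + \Bigl(1-\frac{x}{c}\Bigr)\log(1+0) \;=\; \frac{x}{c}\,\log(1+c).
\]
Since $c > 0$ we have $\log(1+c) > 0$, so multiplying both sides by $\frac{c}{\log(1+c)} > 0$ yields
\[
\frac{c}{\log(1+c)}\,\log(1+x) \;\geq\; x,
\]
as claimed. \qed
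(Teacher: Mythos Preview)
Your proof is correct. The paper's proof is the one-line observation that $f(x) = x/\log(1+x)$ is non-decreasing on $(0,\infty)$; your concavity argument is essentially the same idea (concavity of $t\mapsto\log(1+t)$ with $\log(1)=0$ is precisely what makes that ratio monotone), just presented more explicitly and with the $x=0$ endpoint handled.
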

\begin{proof}
The result follows immediately from the fact that the function $f(x) \defeq \frac{x}{\log(1+x)}$ is non-decreasing on $(0, \infty)$.	
\end{proof}

\begin{lemma}(Lemma 1, \citet{filippi2010parametric})\label{lem:TL5}
	Let $(\cF_k, k\geq 0)$ be a filtration, $(m_k; k\geq 0)$ be an $\R^d$-valued stochastic process adapted to $(\cF_k)$. Assume that $\eta_k$ is conditionally sub-Gaussian in the sense that there exists some $R>0$ such that for any $\gamma \geq 0$, $k \geq 1$, $\E[\exp(\gamma \eta_k)|\cF_{k-1}] \leq \exp \left(\frac{\gamma^2 R^2}{2}\right)$ almost surely. Then, consider the martingale $\xi_t = \sum_{k=1}^t m_{k-1} \eta_k$ and the process $M_t = \sum_{k=1}^t m_{k-1}m_{k-1}^\top$. Assume that with probability one, the smallest eigenvalue of $M_d$ is lower bounded by some positive constant $\lambda_0$, and that $\|m_k\|_2 \leq c_m$ almost surely for any $k \geq 0$. Then, the following holds true:
	for any $0 < \delta < \min(1, d/e)$ and $t > \max(d, 2)$, with probability at least $1 - \delta$, 
	\[
	\|\xi_t\|_{M_t^{-1}} \leq \kappa R\sqrt{2d\log(t) \log(d/\delta)},
	\]
	where $\kappa = \sqrt{3+2\log(1 + 2\frac{c_m^2}{\lambda_0})}$.
\end{lemma}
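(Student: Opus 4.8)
\medskip
\noindent\emph{Proof proposal.} The inequality is a self-normalized concentration bound for the $\R^d$-valued martingale $\xi_t$, and the plan is to combine a deterministic two-sided control of the design matrix $M_t$, an exponential-supermartingale (Chernoff) bound for the scalar martingale $\langle u,\xi_t\rangle$ along a fixed direction $u$, and a union bound over a finite net of directions together with a dyadic peeling over the (random, but bounded) scale of $M_t$. The deterministic facts are immediate. Since $M_t=\sum_{k=1}^t m_{k-1}m_{k-1}^\top$ is nondecreasing in the positive semidefinite order and $M_d\succeq\lambda_0 I$ almost surely, we have $M_t\succeq M_d\succeq\lambda_0 I$ (hence $M_t$ is invertible) for every $t>\max(d,2)$; moreover $\tr(M_t)=\sum_{k=1}^t\|m_{k-1}\|_2^2\le tc_m^2$, so all eigenvalues of $M_t$ lie in $[\lambda_0,tc_m^2]$, and in particular $\lambda_0\le c_m^2$. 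Using the variational identity $\|\xi_t\|_{M_t^{-1}}^2=\sup_{v\neq 0}\langle v,\xi_t\rangle^2/(v^\top M_t v)$, it suffices to control $|\langle u,\xi_t\rangle|/\sqrt{u^\top M_t u}$ uniformly over the unit sphere $S^{d-1}$.

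For a fixed $u\in S^{d-1}$, the conditional sub-Gaussianity of $\eta_k$ makes $N_k(\lambda)\defeq\exp\!\big(\lambda\langle u,\xi_k\rangle-\tfrac{\lambda^2 R^2}{2}u^\top M_k u\big)$ a nonnegative supermartingale with $N_0(\lambda)=1$ for each $\lambda\in\R$, so $\E N_t(\lambda)\le 1$ and Markov's inequality yields $\prob{\lambda\langle u,\xi_t\rangle-\tfrac{\lambda^2 R^2}{2}u^\top M_t u\ge\log(1/\delta')}\le\delta'$. Because $u^\top M_t u\in[\lambda_0,tc_m^2]$ is random, I would split this range into $I\defeq\lceil\log_2(tc_m^2/\lambda_0)\rceil$ dyadic blocks and, on the block containing $u^\top M_t u$, choose $\lambda$ tuned to that block's upper endpoint; this gives $|\langle u,\xi_t\rangle|/\sqrt{u^\top M_t u}\le c_1 R\sqrt{\log(1/\delta')}$ (absolute constant $c_1$) on the corresponding event.

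Now take a minimal $\epsilon$-net $\mathcal C_\epsilon\subset S^{d-1}$ with $|\mathcal C_\epsilon|\le(1+2/\epsilon)^d$, apply the previous estimate to every triple (net point, dyadic block, sign), and union-bound at per-triple level $\delta'=\delta/(2I(1+2/\epsilon)^d)$: with probability at least $1-\delta$, every $w\in\mathcal C_\epsilon$ obeys $|\langle w,\xi_t\rangle|/\sqrt{w^\top M_t w}\le c_1 R\sqrt{\log(1/\delta')}$. Choosing $\epsilon$ of order $\lambda_0/(tc_m^2)$, a short quadratic-inequality argument---using the eigenvalue sandwich above together with $\|\xi_t\|_2^2\le tc_m^2\|\xi_t\|_{M_t^{-1}}^2$ to bound the discretization error---transfers this to $\|\xi_t\|_{M_t^{-1}}^2\le c_0 R^2\log(1/\delta')$ for an absolute constant $c_0$, where $\log(1/\delta')\le d\log(1+2/\epsilon)+\log I+\log(2/\delta)\asymp d\log(tc_m^2/\lambda_0)+\log(1/\delta)$ (the $\log t$ factor arising precisely from the net resolution $\epsilon\asymp\lambda_0/(tc_m^2)$). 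Finally, using \lemmaref{TL4} and the hypotheses $t>\max(d,2)$ and $0<\delta<\min(1,d/e)$ to bound this sum of logarithms by the product $\kappa^2 d\log(t)\log(d/\delta)$ with $\kappa^2=3+2\log(1+2c_m^2/\lambda_0)$---which is exactly the constant bookkeeping carried out in \citet{filippi2010parametric}---yields $\|\xi_t\|_{M_t^{-1}}\le\kappa R\sqrt{2d\log(t)\log(d/\delta)}$.

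The supermartingale/peeling/net machinery is routine; the main obstacle is that last repackaging step, where one has to balance the net resolution against $\lambda_0$ and $c_m$ and track several $O(1)$ losses so as to land on the precise constant $\kappa$ and on the particular $\log(t)\log(d/\delta)$ shape (rather than a sum such as $d\log t+\log(1/\delta)$, which the argument more naturally produces). A cleaner route I would keep in reserve is the method of mixtures---pseudo-maximization with a Gaussian prior over $\lambda$ in the style of de la Pe\~na--Klass--Lai and Abbasi-Yadkori--P\'al--Szepesv\'ari---which gives the sharper time-uniform bound $\|\xi_t\|_{(\lambda_0 I+M_t)^{-1}}^2\lesssim R^2\big(\log(1/\delta)+\log\det(I+\lambda_0^{-1}M_t)\big)$; since $M_t\succeq\lambda_0 I$ and $\tr M_t\le tc_m^2$ for $t>\max(d,2)$, this implies---with room to spare---the bound claimed in the lemma.
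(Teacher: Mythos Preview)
The paper does not actually prove this lemma: it is simply quoted verbatim as Lemma~1 of \citet{filippi2010parametric} and used as a black box, so there is no in-paper argument to compare against. Your sketch is essentially the proof given in that reference---a Chernoff/exponential-supermartingale bound for the scalar martingale $\langle u,\xi_t\rangle$ along a fixed direction, dyadic peeling over the level of the predictable quadratic variation $u^\top M_t u\in[\lambda_0,tc_m^2]$, and a union bound over an $\epsilon$-net of the sphere with $\epsilon$ tied to $\lambda_0/(tc_m^2)$---and your deterministic eigenvalue sandwich $\lambda_0 I\preceq M_t$ with $\tr(M_t)\le tc_m^2$ is exactly what drives the $\log t$ and the $c_m^2/\lambda_0$ dependence inside $\kappa$. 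Your honest caveat about the final bookkeeping is well placed: the passage from the natural output $d\log(tc_m^2/\lambda_0)+\log(1/\delta)$ to the product form $\kappa^2\cdot 2d\log(t)\log(d/\delta)$ with the specific $\kappa^2=3+2\log(1+2c_m^2/\lambda_0)$ relies on the hypotheses $t>\max(d,2)$ and $\delta<d/e$ and on tuning the net/peeling granularity exactly as in \citet{filippi2010parametric}; this is tedious but not conceptually new. The alternative you keep in reserve---the method of mixtures \`a la de la Pe\~na--Klass--Lai and Abbasi-Yadkori--P\'al--Szepesv\'ari---does give the cleaner time-uniform bound $\|\xi_t\|_{(\lambda_0 I+M_t)^{-1}}^2\le 2R^2\big(\log(1/\delta)+\tfrac12\log\det(I+\lambda_0^{-1}M_t)\big)$ and would indeed imply the stated lemma with room to spare once you use $M_t\succeq\lambda_0 I$ and $\det(I+\lambda_0^{-1}M_t)\le(1+tc_m^2/(d\lambda_0))^d$; this is a genuinely different and arguably preferable route, though it lands on slightly different (and typically better) constants than the $\kappa$ stated here.
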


\section{Bounding $\LCE$}\label{sec:proof-TS-anytime-LCE}
% \section{Proof of Theorem~\ref{thm:TS-anytime}}\label{app:proof-TS-anytime-LCE}

First, consider any round $t$. %We will define the following useful quantities. 
We will let $\cF_t \defeq \sigma\left(\{(\vx_{is}, y_{is})_{i=1, s=1}^{n, t-1}\}\}\right)$
denote the $\sigma$-algebra generated by the observations in the first $t-1$ rounds.
Clearly, $\{\cF_t\}_{t\geq 0}$ is a filtration.
We will denote
$\mathbb{E}_t[\cdot|\cF_t]=\mathbb{E}_t[\cdot]$ to be the expectation when conditioning on the past
observations up to round $t-1$. Similarly, define $P_t(\cdot) \defeq P(\cdot|\cF_t) =
\E[\Ind(\cdot)|\cF_t]$.

Recall that $\{\delta_t\}_{t\geq 0}$ are inputs to the algorithm.
Similarly, let $\{\delta_{2t}\}_{t\geq 0}$ be a sequence.
We will specify values for both sequences later in this proof.
Given these, further define the following quantities on round $t$:
\begin{align*}
    \beta_{1t} & \defeq \frac{2}{C_\mu} \kappa \sigma \sqrt{2m\log(t)} \sqrt{\log\left(\frac{m}{\delta_{t}}\right)} \\
    \beta_{2t} & \defeq \sqrt{\alpha_t (m  +\gamma_{2t})}
    \hspace{0.4in}\text{where, } \hspace{0.2in}
    \gamma_{2t} \defeq \max\left(8 \log\left( \frac{1}{\delta_{2t}} \right), \sqrt{8m\log\left(
\frac{1}{\delta_{2t}} \right)}\right)
        \\
    \beta_{3t} & \defeq L_{\mu} (\beta_{1t} + \beta_{2t}).
\end{align*}
% Here, $\gamma_{2t} \defeq \max\left(8 \log\left( \frac{1}{\delta_{2t}} \right), \sqrt{8m\log\left(
% \frac{1}{\delta_{2t}} \right)}\right)$.
Here, recall that $L_\mu$ is the Lipschitz constant of
$\mu(\cdot)$, $C_\mu$ is such that $C_\mu \defeq \inf_{\theta \in \Theta, \vx \in \cX} \dot{\mu}
\left(\theta^\top \phi(\vx)\right)$,
and $\alpha_t$ is a sequence that is defined and used
in Algorithm~\ref{Alg:TS}.

Next, we consider the following two events:
\begin{align*}
    A_{it} &\defeq \{\|\theta_{i}^\ast - \bar \theta_{it}\|_{Q_{it}} \leq \beta_{1t}\}, \\
    B_{it} &\defeq \{\|\bar \theta_{it} - \theta_{it}\|_{Q_{it}} \leq \beta_{2t}\}.
\end{align*}
where $Q_{it} \defeq \sum_{s = 1}^{t-1} \phi(\vx_{is}) \phi(\vx_{is})^\top$ is a design matrix that corresponding to the first $t-1$ steps.

Lastly, define 
\[
\rho_{it}(\vx) \defeq \|\phi(\vx)\|_{Q_{it}^{-1}}  = \sqrt{\phi^\top(\vx) Q_{it}^{-1} \phi(\vx)},
\]
and 
\[
S_{it} \defeq \left\{\vx \in \cX: u_i( \vx_{it}^\ast) - u_i(\vx) \geq \beta_{3t} \rho_{it} (\vx) \right\},\]
where $\vx_{it}^\ast = \argmax_{\vy \in \cX, \vp_t^\top \vy \leq \vp_t^\top \ve_i} u_i(\vy)$. Here, we used $\cX$ to denote the set of feasible allocations for one agent: $\{\vx \in \R^m: 0 \leq \vx \leq \vone\}$.

Intuitively, $\tilde \vx_{it}$ is the best true optimal affordable allocation for agent $i$ in round $t$ under the price function $\vp_t$. Since the set $\{\vy \in \cX, \vp_t^\top \vy \leq \vp_t^\top \ve_i\}$ is a compact set, the maximum is well defined.

Now we begin our analysis with the following lemmas.

\begin{lemma}\label{lem:lemma1}
    For any round $t > t_0$, $P_{t} (A_{it}) \geq 1 - \delta_{1t}$. 
\end{lemma}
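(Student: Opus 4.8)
The plan is to bound $P_t(A_{it}^c)$ where $A_{it} = \{\|\theta_i^\ast - \bar\theta_{it}\|_{Q_{it}} \leq \beta_{1t}\}$, using the consistency properties of the maximum quasi-likelihood estimator together with the sub-Gaussian martingale concentration bound in Lemma \ref{lem:TL5}. First I would decompose the estimation error. Writing $g_{t}(\theta) \defeq \sum_{s=1}^{t-1} \phi(\vx_{is})\mu(\theta^\top\phi(\vx_{is}))$, the true MLE $\thetahatmleit$ satisfies $g_t(\thetahatmleit) = \sum_{s=1}^{t-1}\phi(\vx_{is}) y_{is}$, so that $g_t(\thetahatmleit) - g_t(\theta_i^\ast) = \sum_{s=1}^{t-1}\phi(\vx_{is})\eta_{is}$ where $\eta_{is} \defeq y_{is} - \mu({\theta_i^\ast}^\top\phi(\vx_{is}))$ is the $\sigma$ sub-Gaussian noise. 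On the other hand, by the mean value theorem applied coordinate-wise to $\mu$, we have $g_t(\thetahatmleit) - g_t(\theta_i^\ast) = G_t\,(\thetahatmleit - \theta_i^\ast)$ for some matrix $G_t = \sum_{s=1}^{t-1}\dot\mu(\nu_{is})\phi(\vx_{is})\phi(\vx_{is})^\top$ with $\nu_{is}$ between the two linear predictors; since $\dot\mu \geq C_\mu$ on the relevant domain, $G_t \succeq C_\mu Q_{it}$. Combining, one gets $\|\thetahatmleit - \theta_i^\ast\|_{Q_{it}} \leq \frac{1}{C_\mu}\|\sum_{s=1}^{t-1}\phi(\vx_{is})\eta_{is}\|_{Q_{it}^{-1}}$.

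Next I would invoke Lemma \ref{lem:TL5} with $m_k = \phi(\vx_{ik})$, $\eta_k = \eta_{ik}$, $R = \sigma$, $d = m$, $c_m = \|\phi(\vone)\|_2$ (valid since $\phi$ is increasing and allocations lie in $[0,1]^m$, so $\|\phi(\vx_{is})\|_2 \leq \|\phi(\vone)\|_2$), and $\lambda_0$ the smallest eigenvalue of $Q_{it}$ after the initialization phase. The initialization phase is precisely what guarantees $Q_{it}$ is well-conditioned: each resource $j$ is allocated in full to each agent once per sub-phase, so $\phi(\vx_{is}) = \phi_j(1)\vec{e}_j$ for those rounds, giving $Q_{it} \succeq (\min_j \phi_j(1))^2 \Ind$ once all sub-phases complete, hence $\lambda_0$ is a fixed positive constant (one can take it so that $\kappa = \sqrt{3 + 2\log(1 + 2\|\phi(\vone)\|_2^2)}$ matches the $\kappa$ defined in \eqref{eq:alpha}). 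Lemma \ref{lem:TL5} then yields, for $t > \max(m,2)$ and $\delta_{1t}$ in the admissible range, that with probability at least $1-\delta_{1t}$, $\|\sum_{s=1}^{t-1}\phi(\vx_{is})\eta_{is}\|_{Q_{it}^{-1}} \leq \kappa\sigma\sqrt{2m\log(t)\log(m/\delta_{1t})}$, so $\|\thetahatmleit - \theta_i^\ast\|_{Q_{it}} \leq \frac{\kappa\sigma}{C_\mu}\sqrt{2m\log(t)\log(m/\delta_{1t})} = \tfrac12\beta_{1t}$.

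Finally I would transfer this bound from $\thetahatmleit$ to the projected estimate $\bar\theta_{it}$. Since $\bar\theta_{it}$ is the projection of (the solution of the estimating equation associated with) the data onto $\Theta$ in the $Q_{it}^{-1}$-weighted seminorm of the residual vector, and $\theta_i^\ast \in \Theta$, a Pythagoras-type argument for the projection shows $\|\theta_i^\ast - \bar\theta_{it}\|_{Q_{it}}$ is controlled by (a constant multiple of) $\|\theta_i^\ast - \thetahatmleit\|_{Q_{it}}$; accounting for the Lipschitz/strong-monotonicity constants of $\mu$ in converting between the residual seminorm in line \ref{alg:thetabarit} and the $Q_{it}$ norm gives the factor $2$, so $\|\theta_i^\ast - \bar\theta_{it}\|_{Q_{it}} \leq \beta_{1t}$, i.e. $A_{it}$ holds, with probability at least $1-\delta_{1t}$ conditionally on $\cF_t$. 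The main obstacle I anticipate is exactly this last step: carefully justifying that projecting onto $\Theta$ in the data-dependent seminorm does not inflate the error by more than the claimed constant, and pinning down the constant $t_0$ (it should be $t_0 = \max(m^3, nm^2)$, the end of the initialization phase, combined with the $t > \max(m,2)$ requirement of Lemma \ref{lem:TL5}); the martingale concentration part is routine given Lemma \ref{lem:TL5}.
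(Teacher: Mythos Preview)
Your ingredients are the paper's ingredients (the map $g_t(\theta)=\sum_s \phi(\vx_{is})\mu(\theta^\top\phi(\vx_{is}))$, the linearization $G_t\succeq C_\mu Q_{it}$, the projection property, and Lemma~\ref{lem:TL5}), but the order you propose creates a real gap. The mean value step you run between $\theta_i^\ast$ and $\thetahatmleit$ needs $\dot\mu\geq C_\mu$ along the segment joining them; however $C_\mu=\inf_{\theta\in\Theta,x\in\cX}\dot\mu(\theta^\top\phi(x))$ is only an infimum over $\Theta$, and $\thetahatmleit$ need not lie in $\Theta$ (this is exactly why the projected estimate exists). So the inequality $G_t\succeq C_\mu Q_{it}$ is not justified at that point. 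Relatedly, your ``Pythagoras-type'' transfer in the last step is in the wrong metric: $\bar\theta_{it}$ is the minimizer over $\Theta$ of the nonlinear seminorm $\theta\mapsto\|g_t(\theta)-g_t(\thetahatmleit)\|_{Q_{it}^{-1}}$, not of $\|\theta-\thetahatmleit\|_{Q_{it}}$, so a projection/non-expansiveness argument in the $Q_{it}$ norm does not apply directly.

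The paper fixes both issues by swapping the order. It applies the fundamental theorem of calculus between $\theta_i^\ast$ and $\bar\theta_{it}$, which are \emph{both} in $\Theta$; now the segment stays in $\Theta$, $G_{it}\succeq C_\mu Q_{it}$ is legitimate, and one obtains $\|\theta_i^\ast-\bar\theta_{it}\|_{Q_{it}}\leq \tfrac{1}{C_\mu}\|g_t(\theta_i^\ast)-g_t(\bar\theta_{it})\|_{Q_{it}^{-1}}$. The factor $2$ then comes cleanly from a triangle inequality \emph{in $g$-space}: since $\theta_i^\ast\in\Theta$ and $\bar\theta_{it}$ minimizes $\|g_t(\cdot)-g_t(\thetahatmleit)\|_{Q_{it}^{-1}}$ over $\Theta$, one has $\|g_t(\theta_i^\ast)-g_t(\bar\theta_{it})\|_{Q_{it}^{-1}}\leq 2\|g_t(\theta_i^\ast)-g_t(\thetahatmleit)\|_{Q_{it}^{-1}}=2\big\|\sum_s\phi(\vx_{is})\eta_{is}\big\|_{Q_{it}^{-1}}$, and Lemma~\ref{lem:TL5} finishes as you describe. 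Your discussion of the initialization (to guarantee $Q_{it}\succeq m^2 I$ and the value of $\kappa$) and the invocation of Lemma~\ref{lem:TL5} with $R=\sigma$, $d=m$, $c_m=\|\phi(\vone)\|_2$ is on target.
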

\begin{proof}  
Define function $g_{it}(\theta) = \sum_{s=1}^{t-1} \mu\left(\theta^\top \phi(\vx_{is})\right)\phi(\vx_{is})$. Then by the fundamental theorem of calculus, we have
\[
g_{it}(\theta_{it}) - g_{it}(\theta_{it}) = G_{it} (\theta_{i}^\ast - \bar \theta_{it}),
\]
where $G_{it} = \int_{0}^1 \nabla g_{it} \left( s \theta_{i}^\ast + (1-s)\bar \theta_{it} \right) ds$, and
\[
\nabla g_{it}(\theta) = \sum_{s=1}^{t-1} \phi(\vx_{is})\phi(\vx_{is})^\top \mu'(\theta^\top \phi(\vx_{is})).
\]
By the definition of $C_\mu$ and $Q_{it}$, we have that $G_{it} \succeq C_\mu Q_{it} \succeq M \cdot I$,
where
the last inequality follows due to the initialisation scheme.
Therefore, $G_{it}$ is invertible and moreover, 
\begin{equation}\label{eq:lem-1-eq-1}
    G_{it}^{-1} \preceq \frac{1}{C_\mu} Q_{it}^{-1}.
\end{equation}
% Also, since $Q_{it} \succeq m^2 I$, it is invertible and hence so is $G_{it}$.
We can write,
% Therefore we can write,
\begin{equation}\label{eq:lem-1-eq-2}
    \theta_{i}^\ast - \bar \theta_{it} = G_{it}^{-1} \left( g_{it}(\theta_{it}) - g_{it}(\bar \theta_{it})\right).
\end{equation}
Therefore, we have,
\begin{align*}
    &\left(\theta_i^\ast - \bar \theta_{it}\right)^\top Q_{it}   \left(\theta_i^\ast - \bar \theta_{it}\right)\\
    &=  \left( g_{it}(\theta_{it}) - g_{it}(\bar \theta_{it})\right)^\top G_{it}^{-1} Q_{it} G_{it}^{-1}  \left( g_{it}(\theta_{it}) - g_{it}(\bar \theta_{it})\right)\\
    &\leq \frac{1}{C_\mu^2}  \left( g_{it}(\theta_{it}) - g_{it}(\bar \theta_{it})\right) Q_{it}^{-1}  \left( g_{it}(\theta_{it}) - g_{it}(\bar \theta_{it})\right)\\
    &= \frac{1}{C_\mu^2} \left\| \left( g_{it}(\theta_{it}) - g_{it}(\bar \theta_{it})\right)\right\|_{Q_{it}^{-1}},
\end{align*}
where the first equality follows from Eq~\eqref{eq:lem-1-eq-1}, and the inequality follows from Eq~\eqref{eq:lem-1-eq-2}.

Therefore, 
\begin{align*}
    \|\theta_{i}^\ast - \bar \theta_{it}\|_{Q_{it}} &\leq \frac{1}{C_\mu} \left\| \left( g_{it}(\theta_{it}) - g_{it}(\bar \theta_{it})\right)\right\|_{Q_{it}^{-1}} \\
    & \leq \frac{2}{C_\mu} \left\| \left( g_{it}(\theta_{it}) - g_{it}(\bar \theta_{it}^{MLE})\right)\right\|_{Q_{it}^{-1}}\\
    &= \frac{2}{C_\mu} \left\|\sum_{s=1}^{t-1} \phi(\vx_{is})(Y_{is} - \mu\left( \phi(\vx_{is})^\top \theta_i^\ast\right))\right\|_{Q_{it}^{-1}}.
\end{align*}
where the second inequality is from the triangle inequality, and the last equality is from the definition of $\theta_{it}^{MLE}$ and $g_{it}$. 

Let $A_{it}$ denote the event that 
\[
\left\|\sum_{s=1}^{t-1} \phi(\vx_{is})(Y_{is} - \mu\left( \phi(\vx_{is})^\top \theta_i^\ast\right))\right\|_{Q_{it}^{-1}} \leq \kappa \sigma \sqrt{2m\log(t)} \sqrt{\log\left(\frac{d}{\delta_{t}}\right)},
\]
then we have $A_{it}$ holds with probability at least $\delta_{t}$ by Lemma~\ref{lem:TL5}.

\end{proof}

\begin{lemma}\label{lem:lemma2}
    For any round $t > t_0$, $P_{t} (B_{it}) \geq 1 - \delta_{2t}$. 
\end{lemma}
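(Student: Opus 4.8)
The plan is to condition on $\cF_t$, under which the only remaining randomness in $B_{it}$ is the Gaussian draw $\theta'_{it}\sim\cN(\bar\theta_{it},\alpha_t^2 Q_{it}^{-1})$ of line~\ref{alg:step-sampling}. Under this conditioning $\|\bar\theta_{it}-\theta'_{it}\|_{Q_{it}}^2$ is a scaled $\chi^2_m$ random variable, whose upper tail we control with \lemmaref{TL1}, and we then transfer the bound to the projected iterate $\theta_{it}$.

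First I would observe that for $t>t_0$ the initialisation phase makes $Q_{it}$ positive definite (the same fact used in the proof of \lemmaref{lemma1}), so it has a symmetric square root $Q_{it}^{1/2}$, and that $Q_{it}$ and $\bar\theta_{it}$ are $\cF_t$-measurable. Hence, conditionally on $\cF_t$, the whitened vector $\xi\defeq\alpha_t^{-1}Q_{it}^{1/2}(\theta'_{it}-\bar\theta_{it})$ is standard Gaussian on $\R^m$, so $\|\xi\|_2^2\sim\chi^2_m$ and $\|\bar\theta_{it}-\theta'_{it}\|_{Q_{it}}^2=\alpha_t^2\|\xi\|_2^2$.

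Next I would apply \lemmaref{TL1} with deviation $\gamma_{2t}$, splitting into its two regimes. If $\gamma_{2t}>m$, then $P_t(\|\xi\|_2^2>m+\gamma_{2t})\le e^{-\gamma_{2t}/8}\le\delta_{2t}$, where the last inequality is just $\gamma_{2t}\ge 8\log(1/\delta_{2t})$, which holds by definition of $\gamma_{2t}$. If $\gamma_{2t}\le m$, then $P_t(\|\xi\|_2^2>m+\gamma_{2t})\le e^{-\gamma_{2t}^2/(8m)}\le\delta_{2t}$, using $\gamma_{2t}\ge\sqrt{8m\log(1/\delta_{2t})}$. In either case, with $P_t$-probability at least $1-\delta_{2t}$ we get $\|\bar\theta_{it}-\theta'_{it}\|_{Q_{it}}\le\alpha_t\sqrt{m+\gamma_{2t}}=\beta_{2t}$ (here $\beta_{2t}$ is read as $\sqrt{\alpha_t^2(m+\gamma_{2t})}$, matching the $\alpha_t^2$ in the sampling covariance).

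Finally, I would pass from $\theta'_{it}$ to $\theta_{it}$: since $\bar\theta_{it}\in\Theta$ and $\theta_{it}$ is the projection of $\theta'_{it}$ onto the convex set $\Theta$ (line~\ref{alg:proj-sample}), projection is non-expansive, so on the same event $\|\bar\theta_{it}-\theta_{it}\|_{Q_{it}}\le\|\bar\theta_{it}-\theta'_{it}\|_{Q_{it}}\le\beta_{2t}$, i.e.\ $B_{it}$ holds; this yields $P_t(B_{it})\ge 1-\delta_{2t}$. The $\chi^2$ tail accounting and the Gaussian whitening are routine; the one step to watch is this last one, since non-expansiveness in the data-dependent norm $\|\cdot\|_{Q_{it}}$ needs the projection in line~\ref{alg:proj-sample} to be taken in that norm (with a Euclidean projection one would instead have to inflate $\beta_{2t}$ by a factor depending on the conditioning of $Q_{it}$), so the clean reading is to interpret that projection in the $Q_{it}$-norm.
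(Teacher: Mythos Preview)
Your proof is correct and follows the same approach as the paper: whiten the Gaussian sample to a standard normal, rewrite the $Q_{it}$-norm deviation as a $\chi^2_m$ variable, and apply \lemmaref{TL1} with the two regimes of $\gamma_{2t}$. You are in fact more careful than the paper's own argument, which silently treats $\theta_{it}$ as the raw Gaussian draw and never addresses the projection of line~\ref{alg:proj-sample}; your closing remark that non-expansiveness in $\|\cdot\|_{Q_{it}}$ requires the projection to be taken in that norm (or else $\beta_{2t}$ picks up a conditioning factor) is exactly the right caveat, and your reading of $\beta_{2t}$ as $\alpha_t\sqrt{m+\gamma_{2t}}$ also repairs a typo in the paper's definition.
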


\begin{proof}
First, recall that $B_{it} = \{ \| \bar \theta_{it} - \theta_{it}  \|_{Q_{it}}  \leq \beta_{2t}\}$. We can now write,
\begin{eqnarray*}
   P(B_{it}^c) &=& P(\| \theta_{it} - \bar \theta_{it} \|_{Q_{it}} > \beta_{2t}) \\\\
  & \leq &P(\| \theta’_{it} - \bar \theta_{it}\|_{Q_{it}} > \beta_{2t}) \\\\
   & = &P(\|\theta’_{it} - \bar \theta_{it}\|_{\alpha_t^{-1} Q_{it}} > \alpha_t^{-\frac{1}{2}} \beta_{2t} ) \\\\
   & =&  P(\sqrt{Z} > \sqrt{M_t \gamma_{2t}}),
\end{eqnarray*}

where $Z = \left( \theta_{it} - \bar \theta_{it}\right)^\top \alpha_t^{-2}Q_{it} \left( \theta_{it} - \bar \theta_{it}\right)$. The first step simply uses the fact that since $\bar{\theta}_{it}$ is already inside $\Theta$ (see line 17 in Algorithm 1), projecting $\theta’_{it}$ to be inside $\Theta$ after sampling only brings it \textit{even closer} to $\bar{\theta}_{it}$.

% \begin{align*}
%     P(B_{it}^c) &= P(\|\theta_{it} - \bar \theta_{it}\|_{Q_{it}} > \beta_{2t})\\
%     &=P(\|\theta_{it} - \bar \theta_{it}\|_{\alpha_t^{-1} Q_{it}} > \alpha_t^{-\frac{1}{2}} \beta_{2t} ) \\
%     &= P(\sqrt{Z} > \sqrt{M_t \gamma_{2t}}),
% \end{align*}

Note that $Z$ is a $\chi^2_m$ random variable. This follows from the fact that 
\[
\theta_{it} \sim \cN(\bar \theta_{it}, \alpha_t^2  Q_{it}^{-1}),
\]
therefore we have
\[
\alpha_t^{-1} Q_{it}^{1/2}  \left(\theta_{it} - \bar \theta_{it}\right) \sim \cN(0, I_m).
\]
Denote $y =\alpha_t^{-1} Q_{it}^{1/2} \left(\theta_{it} - \bar \theta_{it}\right)$, then $Z =  y^\top y$ is a $\chi_m^2$ random variable.

Therefore, by Lemma~\ref{lem:TL1},  and the definition that $\gamma_{2t} = \max\left(8\log(\frac{1}{\delta_{2t}}), \sqrt{\log(\frac{1}{\delta_{2t}})}\right)$, we have
\[
P(B_{it}^c) = P(Z > n + \gamma_{2t}) \leq \delta_{2t},
\]
which completes the proof.
\end{proof}

\begin{lemma} \label{lem:lemma3}
    Let $\vx$ be arbitrary such that $\vx \in \cX$. Then, 
    \[
    P_t(u_{it}(\vx) > u_i(x) | A_{it} ) \geq q_0,
    \]
    with $q_0 = \sqrt{\frac{2}{\pi}}\frac{1}{\sqrt{2}+\sqrt{6}}\frac{1}{e} \approx 0.075$.
    % \kk{it seems like this expression for $q_0$ differs from the one below.}\wg{Corrected}
\end{lemma}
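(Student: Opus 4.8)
The plan is to reduce the statement to the lower tail of a one–dimensional Gaussian. Fix a round $t$ past the initialization phase, so that $Q_{it}\succ 0$. Since $A_{it}\in\cF_t$, the assertion $P_t(\cdot\mid A_{it})\ge q_0$ is just the pointwise statement that $P_t(\cdot)\ge q_0$ holds on the event $A_{it}$, so I would fix an arbitrary history lying in $A_{it}$ and work with the sampling randomness only (which is independent of $\cF_t$). Because $\mu$ is strictly increasing ($\dot\mu\ge C_\mu>0$ on the relevant range), the event $\{u_{it}(\vx)>u_i(\vx)\}$ equals $\{\theta_{it}^\top\phi(\vx)>(\theta_i^\ast)^\top\phi(\vx)\}$, where $\theta_{it}$ is the Euclidean projection onto $\Theta$ of the Gaussian draw $\theta'_{it}\sim\NCal(\bar\theta_{it},\alpha_t^2 Q_{it}^{-1})$. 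I may also assume $\phi(\vx)\ne\zerov$, since otherwise both utilities equal $\mu(0)$ and $\rho_{it}(\vx)=0$, a degenerate case in which the estimates that invoke this lemma are vacuously fine.

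Next I would deal with the projection. Since $\phi(\vx)\ge\zerov$ componentwise and $\Theta=[\theta_{\min},\infty)^m$, the projection onto $\Theta$ acts coordinatewise as $(\theta'_{it})_j\mapsto\max(\theta_{\min},(\theta'_{it})_j)$, which never decreases a coordinate and hence never decreases the inner product with the nonnegative vector $\phi(\vx)$; thus $\theta_{it}^\top\phi(\vx)\ge(\theta'_{it})^\top\phi(\vx)$, and it suffices to lower bound $P_t\big((\theta'_{it})^\top\phi(\vx)>(\theta_i^\ast)^\top\phi(\vx)\big)$. I expect this to be the main obstacle: for a general convex $\Theta$ with nontrivial upper bounds the projection can move a sample across the hyperplane $\{z:z^\top\phi(\vx)=(\theta_i^\ast)^\top\phi(\vx)\}$ to the wrong side (indeed if $\theta_i^\ast$ maximizes $z\mapsto z^\top\phi(\vx)$ over $\Theta$ the probability is $0$), so this step genuinely relies on $\Theta$ being only coordinatewise lower–bounded.

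Finally, the Gaussian estimate. Conditionally on $\cF_t$, the scalar $(\theta'_{it})^\top\phi(\vx)$ is distributed as $\NCal\big(\bar\theta_{it}^\top\phi(\vx),\ \alpha_t^2\,\rho_{it}(\vx)^2\big)$, where $\rho_{it}(\vx)^2=\phi(\vx)^\top Q_{it}^{-1}\phi(\vx)>0$. On $A_{it}$, the Cauchy–Schwarz inequality in the $Q_{it}$–inner product gives
\[
\big|\bar\theta_{it}^\top\phi(\vx)-(\theta_i^\ast)^\top\phi(\vx)\big|\;\le\;\|\bar\theta_{it}-\theta_i^\ast\|_{Q_{it}}\,\|\phi(\vx)\|_{Q_{it}^{-1}}\;\le\;\beta_{1t}\,\rho_{it}(\vx),
\]
so, with $N\sim\NCal(0,1)$,
\[
P_t\big((\theta'_{it})^\top\phi(\vx)>(\theta_i^\ast)^\top\phi(\vx)\big)\;=\;P\!\left(N>\frac{(\theta_i^\ast)^\top\phi(\vx)-\bar\theta_{it}^\top\phi(\vx)}{\alpha_t\,\rho_{it}(\vx)}\right)\;\ge\;P\!\left(N>\frac{\beta_{1t}}{\alpha_t}\right).
\]
A direct check from the definition of $\beta_{1t}$ and from \eqref{eq:alpha} shows $\beta_{1t}=\sqrt2\,\alpha_t$, so the right-hand side is $P(N>\sqrt2)$; applying Lemma~\ref{lem:TL2} with $t=\sqrt2$ (whence $t^2=2$ and $t^2+4=6$) yields $P(N>\sqrt2)\ge\frac{1}{\sqrt2+\sqrt6}\sqrt{\tfrac{2}{\pi}}\,e^{-1}=q_0\approx0.075$. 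Beyond the projection step, everything here is Cauchy–Schwarz together with the standard-normal lower tail of Lemma~\ref{lem:TL2}.
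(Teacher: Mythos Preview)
Your proof is correct and follows essentially the same approach as the paper: reduce the event to a one–dimensional Gaussian tail via Cauchy--Schwarz in the $Q_{it}$--inner product on $A_{it}$, compute $\beta_{1t}/\alpha_t=\sqrt{2}$, and invoke Lemma~\ref{lem:TL2}. The only substantive difference is that you explicitly handle the projection in line~\ref{alg:proj-sample} by observing that the coordinate-wise clipping onto $[\theta_{\min},\infty)^m$ cannot decrease $\langle\,\cdot\,,\phi(\vx)\rangle$ for nonnegative $\phi(\vx)$; the paper's proof silently treats $\theta_{it}$ itself as Gaussian, so your version is in fact more careful on this point (and your caveat that the argument breaks for general convex $\Theta$ with upper bounds is well taken).
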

\begin{proof}
First, notice that
\begin{align*}
    u_{it}(\vx) > u_i(\vx) & \iff \mu \left( \theta_{it}^\top \phi(\vx)\right) > \mu \left( (\theta_i^\ast)^\top \phi(\vx)\right)\\
    & \iff \theta_{it}^\top \phi(\vx) >  (\theta_i^\ast)^\top \phi(\vx)\\
    & \iff \frac{\left( \theta_{it} - \bar \theta_{it}\right)^\top \phi(\vx)}{\alpha_t \rho_{it}(\vx)} > \frac{\left( \theta_{i}^\ast- \bar \theta_{it}\right)^\top \phi(\vx)}{\alpha_t \rho_{it}(\vx)}.
\end{align*}
Since $\theta_{it} \sim \cN (\bar \theta_{it}, \alpha_t^2 Q_{it}^{-1})$, we have
\begin{align*}
\left( \theta_{it} - \bar \theta_{it}\right)^\top \phi(\vx) \sim \cN(0, \alpha_t^2 \phi(x)^\top Q_{it}^{-1} \phi(x)),
&\;\implies\;
\left( \theta_{it} - \bar \theta_{it}\right)^\top \phi(\vx) \sim \cN(0, \alpha_t^2 \rho_{it}^2(\vx)).
\\
&\;\implies\;
\frac{\left( \theta_{it} - \bar \theta_{it}\right)^\top \phi(\vx)}{\alpha_t \rho_{it}(\vx)} \sim \cN(0,1).
\end{align*}
% \kkcomment
% which is equivalent to
% \[
% \left( \theta_{it} - \bar \theta_{it}\right)^\top \phi(\vx) \sim \cN(0, \alpha_t^2 \rho_{it}^2(\vx)).
% \]
% Therefore, 
% \[
% \frac{\left( \theta_{it} - \bar \theta_{it}\right)^\top \phi(\vx)}{\alpha_t \rho_{it}(\vx)} \sim \cN(0,1).
% \]
From the above we have that
\[
P_t\left(u_{it}(\vx) > u_i(\vx) | A_{it} \right) = P_t \left( Z > \frac{\left( \theta_{it} - \bar
\theta_{it}\right)^\top \phi(\vx)}{\alpha_t \rho_{it}(\vx)} \bigg| A_{it}\right),
\]
where $Z \sim \cN(0,1)$ is sampled independently of the observations, since the randomness in Algorithm~\ref{Alg:TS} can be assumed to be independent of the randomness in the observations. Therefore, under the event $A_{it}$,
\begin{align*}
  \left| \frac{\left( \theta_{it} - \bar \theta_{it}\right)^\top \phi(\vx)}{\alpha_t \rho_{it}(\vx)}\right|  &= \left| \frac{\left( \theta_{it} - \bar \theta_{it}\right)^\top Q_{it}^{\frac{1}{2}}Q_{it}^{-\frac{1}{2}}\phi(\vx)}{\alpha_t \rho_{it}(\vx)} \right|\\
  &\leq \frac{\| \theta_{it} - \bar \theta_{it}\|_{Q_{it}} \|\phi(\vx)\|_{Q_{it}^{-1}}}{\alpha_t \rho_{it}(\vx)}\\
  &\leq \frac{\beta_{it}}{\alpha_t} = \frac{\sqrt{8}}{\alpha_0}.
\end{align*}
Here, the first inequality follows from the definition of the matrix norm and the definition of $A_{it}$, and the second inequality follows from the definition of $\beta_{1t}$.

Therefore, by Lemma~\ref{lem:TL2}, we have
\begingroup
\begin{align*}
    &P_t(u_{it}(\vx) > u_i(x) | A_{it} ) \\
    &= P_{Z \sim \cN(0,1)} (Z > \frac{\sqrt{8}}{\alpha_0}) \\
    &\geq \sqrt{\frac{2}{\pi}} \frac{1}{\sqrt{\nicefrac{8}{\alpha_0^2}} + \sqrt{4+\sqrt{\nicefrac{8}{\alpha_0^2}}}} e^{-\frac{4}{\alpha_0^2}}.
\end{align*}
\endgroup
Setting $\alpha_0^2 = 4$, we have
\[
 P_t\left(u_{it}(\vx) > u_i(x) | A_{it} \right) \geq \sqrt{\frac{2}{\pi}}\frac{1}{\sqrt{2}+\sqrt{6}}\frac{1}{e} \approx 0.075,
\]
which completes the proof.
\end{proof}

\begin{lemma}\label{lem:lemma4}
Let $\theta_1, \theta_2 \in \Theta \in \R^m$. Let $Q \succeq 0, Q \in \R^{m \times m}$ be a positive semi-definite matrix, and $\rho_Q(\vx) = \sqrt{\phi(\vx)^\top Q^{-1} \phi(\vx)}$. Then, 
\[
\left|\mu(\theta_1^\top \phi(\vx)) - \mu(\theta_2^\top \phi(\vx))\right| \leq L_{\mu} \|\theta_1 - \theta_2\|_{Q} \cdot \rho_{Q}(\vx).
\]
\end{lemma}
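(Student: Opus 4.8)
\textbf{Proof proposal for Lemma~\ref{lem:lemma4}.}
The plan is a two-line chain: first pull the Lipschitz constant of $\mu$ outside, then use Cauchy--Schwarz in the $Q$-geometry. First I would invoke the assumption from Section~\ref{sec:assumptions} that $\mu$ is Lipschitz-continuous with constant $L_\mu$, applied to the two real arguments $\theta_1^\top\phi(\vx)$ and $\theta_2^\top\phi(\vx)$, to get
\[
\left|\mu(\theta_1^\top\phi(\vx)) - \mu(\theta_2^\top\phi(\vx))\right| \;\leq\; L_\mu \,\bigl|(\theta_1-\theta_2)^\top\phi(\vx)\bigr|.
\]

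Next I would factor $Q$ through the inner product. Since $Q \succeq 0$, write $(\theta_1-\theta_2)^\top\phi(\vx) = \bigl(Q^{1/2}(\theta_1-\theta_2)\bigr)^\top\bigl(Q^{-1/2}\phi(\vx)\bigr)$ (using a pseudo-inverse for the half-powers if $Q$ is only PSD rather than PD, which is the case relevant to the application since $Q_{it}$ is well-conditioned there), and apply Cauchy--Schwarz:
\[
\bigl|(\theta_1-\theta_2)^\top\phi(\vx)\bigr| \;\leq\; \bigl\|Q^{1/2}(\theta_1-\theta_2)\bigr\|_2 \,\bigl\|Q^{-1/2}\phi(\vx)\bigr\|_2 \;=\; \|\theta_1-\theta_2\|_Q \cdot \rho_Q(\vx),
\]
where the last equality is just the definitions $\|v\|_Q = \sqrt{v^\top Q v}$ and $\rho_Q(\vx) = \sqrt{\phi(\vx)^\top Q^{-1}\phi(\vx)}$. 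Combining the two displays gives the claim.

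There is essentially no obstacle here; the only point worth a sentence is the PSD-vs-PD caveat on $Q^{-1/2}$, but in every place the lemma is used $Q$ is a design matrix that the initialization phase renders invertible, so I would just state the lemma for invertible $Q$ (consistent with how $\rho_Q$ is written with $Q^{-1}$) and the factorization is unambiguous.
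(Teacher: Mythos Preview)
Your proposal is correct and mirrors the paper's own proof essentially step for step: apply the Lipschitz property of $\mu$, insert $Q^{1/2}Q^{-1/2}$, and use Cauchy--Schwarz to split into $\|\theta_1-\theta_2\|_Q\cdot\rho_Q(\vx)$. Your PSD/PD remark is a sensible addendum; the paper simply writes $Q^{-1}$ without comment.
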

\begin{proof}
This follows from the Lipschitz properties of $\mu$ and the following simple calculations:
\begingroup
\allowdisplaybreaks
\begin{align*}
    \left|\mu(\theta_1^\top \phi(\vx)) - \mu(\theta_2^\top \phi(\vx))\right| &\leq L_\mu |(\theta_1 - \theta_2)^\top \phi(\vx)|\\
    &= L_\mu |(\theta_1- \theta_2)^\top Q^{\frac{1}{2}} Q^{-\frac{1}{2}} \phi(\vx)| \\
    &\leq L_\mu \|\theta_1 - \theta_2\|_{Q} \|\phi(\vx)\|_{Q^{-1}}\\
    &=  L_{\mu} \|\theta_1 - \theta_2\|_{Q} \cdot \rho_{Q}(\vx).
\end{align*}
\endgroup
\end{proof}

\begin{lemma}\label{lem:lemma5}
For any round $t > t_0$, $P_{t} (\vx_{it} \notin \cS_{it}) \geq q_0 (1 - \delta_{t}) - \delta_{2t}$. 
\end{lemma}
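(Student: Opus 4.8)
The plan is to exhibit a deterministic event, assembled from $A_{it}$, $B_{it}$ and one additional ``optimism'' event, on which the conclusion $\vx_{it}\notin\cS_{it}$ is forced, and then to lower-bound the probability of that event by combining \lemmaref{lemma1}, \lemmaref{lemma2} and \lemmaref{lemma3}. Concretely, let
\[
E_{it} \;\defeq\; \bigl\{\, u_{it}(\vx_{it}^\ast) \,>\, u_i(\vx_{it}^\ast) \,\bigr\}
\]
be the event that the sampled utility strictly overestimates the true utility at the price-$\vp_t$ optimal affordable allocation $\vx_{it}^\ast$. The claim is that $A_{it}\cap B_{it}\cap E_{it}\subseteq\{\vx_{it}\notin\cS_{it}\}$, which then gives $P_t(\vx_{it}\notin\cS_{it})\ge P_t(A_{it}\cap B_{it}\cap E_{it})$.

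To prove the inclusion, fix an outcome in $A_{it}\cap B_{it}\cap E_{it}$. Since $(\vx_t,\vp_t)$ is a competitive equilibrium for the sampled utilities $\{u_{jt}\}_{j}$, the equilibrium condition (\definitionref{ce}) forces $\vx_{it}$ to maximize $u_{it}$ over the budget set $\{\vy:\vp_t^\top\vy\le\vp_t^\top\ve_i\}$; as $\vx_{it}^\ast$ lies in that set by definition, we get $u_{it}(\vx_{it})\ge u_{it}(\vx_{it}^\ast)$. On $A_{it}\cap B_{it}$ the triangle inequality in the $Q_{it}$-norm yields $\|\theta_{it}-\theta_i^\ast\|_{Q_{it}}\le\|\theta_{it}-\bar\theta_{it}\|_{Q_{it}}+\|\bar\theta_{it}-\theta_i^\ast\|_{Q_{it}}\le\beta_{2t}+\beta_{1t}$, so \lemmaref{lemma4} (with $Q=Q_{it}$), applied at the allocation $\vx_{it}$, gives $u_{it}(\vx_{it})-u_i(\vx_{it})\le L_\mu(\beta_{1t}+\beta_{2t})\,\rho_{it}(\vx_{it})=\beta_{3t}\,\rho_{it}(\vx_{it})$. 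Chaining these inequalities with the definition of $E_{it}$,
\[
u_i(\vx_{it}^\ast) \;<\; u_{it}(\vx_{it}^\ast) \;\le\; u_{it}(\vx_{it}) \;\le\; u_i(\vx_{it}) + \beta_{3t}\,\rho_{it}(\vx_{it}),
\]
i.e.\ $u_i(\vx_{it}^\ast)-u_i(\vx_{it})<\beta_{3t}\rho_{it}(\vx_{it})$, which is exactly the statement $\vx_{it}\notin\cS_{it}$.

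For the probability bookkeeping, a union bound and the definition of conditional probability give
\[
P_t(A_{it}\cap B_{it}\cap E_{it}) \;\ge\; P_t(A_{it}\cap E_{it}) - P_t(B_{it}^c) \;=\; P_t(E_{it}\mid A_{it})\,P_t(A_{it}) - P_t(B_{it}^c).
\]
By \lemmaref{lemma1}, $P_t(A_{it})\ge 1-\delta_t$; by \lemmaref{lemma2}, $P_t(B_{it}^c)\le\delta_{2t}$; and $P_t(E_{it}\mid A_{it})\ge q_0$ is precisely \lemmaref{lemma3} applied at $\vx=\vx_{it}^\ast$. Combining, and using $q_0,\,1-\delta_t\ge 0$, we obtain $P_t(\vx_{it}\notin\cS_{it})\ge q_0(1-\delta_t)-\delta_{2t}$, the claimed bound.

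The delicate point — and the one I expect to be the real obstacle — is the appeal to \lemmaref{lemma3} at $\vx=\vx_{it}^\ast$. That lemma is established for a \emph{fixed} allocation, whereas $\vx_{it}^\ast=\argmax_{\vy:\vp_t^\top\vy\le\vp_t^\top\ve_i}u_i(\vy)$ depends on the equilibrium price $\vp_t$, which in turn depends on all round-$t$ samples $\theta_{1t},\dots,\theta_{nt}$, and in particular on $\theta_{it}$ itself; hence the normalized deviation $(\theta_{it}-\bar\theta_{it})^\top\phi(\vx_{it}^\ast)/(\alpha_t\rho_{it}(\vx_{it}^\ast))$ need not be standard normal. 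This coupling between the sampled parameter and the ``target'' allocation is absent from classical Thompson-sampling analyses, where the optimal action is fixed. I would resolve it by conditioning on the other agents' samples $\{\theta_{jt}\}_{j\ne i}$ and controlling how $\vp_t$, hence $\vx_{it}^\ast$, moves as $\theta_{it}$ varies: the mean-shift appearing in the anti-concentration step is bounded by a fixed constant uniformly over $\cX$ on the event $A_{it}$, so what remains is to show that the data-independent Gaussian randomness driving the sample still places at least $q_0$ mass on the half-space selected by $\vx_{it}^\ast$ — e.g.\ via a monotonicity or covering argument on the map $\theta_{it}\mapsto\vx_{it}^\ast$. Making this last step rigorous, rather than the routine bookkeeping above, is the genuine content of the lemma.
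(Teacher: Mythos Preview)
Your argument is essentially the paper's: same optimism event $E_{it}=\{u_{it}(\vx_{it}^\ast)>u_i(\vx_{it}^\ast)\}$, same inclusion $A_{it}\cap B_{it}\cap E_{it}\subseteq\{\vx_{it}\notin\cS_{it}\}$, and the same probability bookkeeping via \lemmaref{lemma1}--\lemmaref{lemma3}. The only cosmetic difference is that the paper establishes the inclusion by showing $u_{it}(\vx_{it}^\ast)-u_{it}(\vx)>0$ for every $\vx\in\cS_{it}$ (decomposing through $\bar u_{it}$ separately on $A_{it}$ and $B_{it}$), whereas you apply \lemmaref{lemma4} once at the realized $\vx_{it}$ under $A_{it}\cap B_{it}$; both arrive at the same conclusion.

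On the coupling issue you flag in your final paragraph: this is a genuine subtlety, and the paper does \emph{not} address it. The published proof simply invokes \lemmaref{lemma3} at $\vx=\vx_{it}^\ast$ as if that allocation were fixed, with no mention of the fact that $\vp_t$, and hence $\vx_{it}^\ast$, depends on the sample $\theta_{it}$ whose Gaussianity drives the anti-concentration. So the gap you isolate is present in the paper's own argument; your proof is at least as complete as theirs, and you have the additional merit of having identified the difficulty explicitly rather than gliding past it.
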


\begin{proof}
First, when event $B_{it}$ holds, by lemma~\ref{lem:lemma4}, we have that for all $\vx$, 
\[
|u_{it}(\vx) - \bar u_{it}(\vx)| \leq L_\mu \beta_{2t} \rho_{it}(\vx).
\]
Note that by definition, $u_{it}(\vx)  = \mu\left((\theta_{it})^\top \phi(\vx)\right)$, and $\bar u_{it}(\vx)  = \mu\left((\bar \theta_{it})^\top \phi(\vx)\right)$. Therefore, 
\begin{equation}\label{eq:lem5-eq1}
    \bar u_{it}(\vx) - u_{it}(\vx)  > -L_\mu \beta_{2t}  \rho_{it}(\vx).
\end{equation}

On the other side, under event $A_{it}$, by lemma~\ref{lem:lemma4}, we have that for all $\vx$, 
\begin{equation}\label{eq:lem5-eq2}
    | u_{i}(\vx) - \bar u_{it}(\vx)| \leq L_\mu \beta_{1t} \rho_{it}(\vx).
\end{equation}

Moreover, recall that by definition for any $\vx \in \cS_{it}$, 
\begin{equation}\label{eq:lem5-eq3}
    u_i( \vx_{it}^\ast) - u_i(\vx) \geq \beta_{3t} \rho_{it} (\vx).
\end{equation}

Therefore, consider any $\vx \in \cS_{it}$, and under the condition that $A_{it}\cap B_{it} \cap  \{u_{it}( \vx_{it}^\ast) > u_i(\vx_{it}^\ast)\}$, we have
\begin{align}\label{eq:lem5-eq-num1}
\begin{split}
u_{it}( \vx_{it}^\ast) - u_{it}(\vx) &> u_i( \vx_{it}^\ast) - u_{it}(\vx) \\
&=   \left(u_i( \vx_{it}^\ast) - u_i(\vx) \right) + \left(  u_{i}(\vx) - \bar u_{it}(\vx)  \right) + \left( \bar u_{it}(\vx) - u_{it}(\vx)  \right) \\
&>0,
\end{split}
\end{align}
where the last inequality follows from combining equations Eq~\eqref{eq:lem5-eq1}, Eq~\eqref{eq:lem5-eq2},  Eq~\eqref{eq:lem5-eq3} and the definition of $\beta_{3t}$. Hence, Eq~\eqref{eq:lem5-eq-num1} implies that, under the same condition, $\vx_{it} \notin \cS_{it}$ since by construction, $\vx_{it}$ maximizes $u_{it}$ under the budget, thus 
\[
u_{it}(\vx_{it}) \geq u_{it}(\vx_{it}^\ast),
\]
This further implies that,
\begingroup
\allowdisplaybreaks
\begin{align*}
    P_t({x_{it} \notin \cS_{it}}) &\geq P_t\left (u_{it}(\vx_{it}^\ast) > u_{it}(x), \forall x \in \cS_{it}\right)\\
    &\geq P_t\left (u_{it}(\vx_{it}^\ast) > u_{it}(x), \forall x \in \cS_{it} | A_{it}\cap B_{it} \{u_i( \vx_{it}^\ast) > u_i(\vx)\} \right)\\
    & \hspace{0.5in} \times P(A_{it}\cap B_{it} \cap  \{u_i( \vx_{it}^\ast) > u_i(\vx)\})\\
    &= P(A_{it}\cap B_{it} \cap \{u_i( \vx_{it}^\ast) > u_i(\vx)\}) \\
    &\geq P(A_{it}\cap \{u_i( \vx_{it}^\ast) > u_i(\vx)\}) - P( B^c_{it})\\
    &= P(\{u_i( \vx_{it}^\ast) > u_i(\vx)\}|A_{it})P(A_{it}) - P( B^c_{it})\\
    &\geq q_0 (1-\delta_{t}) + \delta_{2t}.
\end{align*}
\endgroup
Here, the second and third inequality both from the law of total probability and rearranging terms, and the last inequality follows from Lemma~\ref{lem:lemma1}, Lemma~\ref{lem:lemma2} and Lemma~\ref{lem:lemma3}, which completes the proof.
\end{proof}

\begin{lemma}\label{lem:lemma6}
For $t \geq \max(t_0, t_0')$, 
\[
\mathbb{E}_t[\ell_{it}] \leq \frac{5}{q_0}\beta_{3t} \mathbb{E}_t[\rho_{it}(\vx_{it})] + \delta_{t} + \delta_{2t}.
\]
Here, $\ell_{it}= \left( u_i(\vx_{it}^\ast) -  u_i(\vx_{it})\right)^{+}$, and $t_0'$ is chosen such that, $\forall t > t_0'$, $\delta_{t} < \frac{1}{4}$, $\delta_{2t} < \frac{q_0}{4}$.
\end{lemma}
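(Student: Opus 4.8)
The plan is to bound the instantaneous loss $\ell_{it} = (u_i(\vx_{it}^\ast) - u_i(\vx_{it}))^+$ by relating it, on a high-probability event, to the quantity $\beta_{3t}\rho_{it}(\vx_{it})$, and then to convert this into an expectation bound using the anti-concentration result of Lemma~\ref{lem:lemma5}. First I would observe that if $\vx_{it}\notin\cS_{it}$, then by the very definition of $\cS_{it}$ we have $u_i(\vx_{it}^\ast) - u_i(\vx_{it}) < \beta_{3t}\rho_{it}(\vx_{it})$, hence $\ell_{it}\le\beta_{3t}\rho_{it}(\vx_{it})$ on that event. The issue is that $\vx_{it}$ need not lie outside $\cS_{it}$ deterministically; Lemma~\ref{lem:lemma5} only guarantees $P_t(\vx_{it}\notin\cS_{it})\ge q_0(1-\delta_t) - \delta_{2t}$, which for $t>t_0'$ is at least $q_0/2$ (using $\delta_t<1/4$, $\delta_{2t}<q_0/4$).

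The standard Thompson-sampling trick here is to compare $\rho_{it}(\vx_{it})$ against the random allocation the algorithm actually selects, using that, conditioned on the past, the optimal affordable point $\vx_{it}^\ast$ for the \emph{true} utility is fixed while $\vx_{it}$ is drawn from the sampled posterior. Concretely, I would argue that $\mathbb{E}_t[\rho_{it}(\vx_{it})\mid \vx_{it}\notin\cS_{it}]$ is not too much larger than $\mathbb{E}_t[\rho_{it}(\vx_{it})]$ — since the event $\{\vx_{it}\notin\cS_{it}\}$ has probability at least $q_0/2$, we get $\mathbb{E}_t[\rho_{it}(\vx_{it})\mathbb{1}\{\vx_{it}\notin\cS_{it}\}]\le \mathbb{E}_t[\rho_{it}(\vx_{it})]$ trivially, but to go the other direction one uses that on the complement event $\cS_{it}$, the loss is itself controlled because $u_i$ is bounded in $[0,1]$, so $\ell_{it}\le 1$ always. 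Splitting,
\begin{align*}
\mathbb{E}_t[\ell_{it}]
&= \mathbb{E}_t[\ell_{it}\mathbb{1}\{\vx_{it}\notin\cS_{it}\}] + \mathbb{E}_t[\ell_{it}\mathbb{1}\{\vx_{it}\in\cS_{it}\}]\\
&\le \beta_{3t}\,\mathbb{E}_t[\rho_{it}(\vx_{it})\mathbb{1}\{\vx_{it}\notin\cS_{it}\}] + P_t(\vx_{it}\in\cS_{it}).
\end{align*}
The first term is at most $\beta_{3t}\mathbb{E}_t[\rho_{it}(\vx_{it})]$, which is too weak by itself; the needed $1/q_0$ factor comes from a more careful argument comparing the conditional expectations of $\rho_{it}$ restricted to the two events. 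The key inequality to establish is that $\mathbb{E}_t[\rho_{it}(\vx_{it})]\ge \tfrac{q_0}{5}\,\mathbb{E}_t[\rho_{it}(\vx_{it}^\ast)\mathbb{1}\{\text{large-loss event}\}]$-type bound, i.e.\ that "good" exploratory allocations are selected with probability bounded below by a constant, so that on average $\rho_{it}$ at the selected point dominates (up to $q_0$) the worst-case $\rho_{it}$ over the loss-inducing region $\cS_{it}$.

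The main obstacle I anticipate is precisely this last comparison: showing that $\beta_{3t}\rho_{it}(\vx_{it}^\ast)$ (or the defining threshold of $\cS_{it}$ evaluated at the selected point) is, in conditional expectation, at most $\tfrac{5}{q_0}\beta_{3t}\mathbb{E}_t[\rho_{it}(\vx_{it})]$. This requires the anti-concentration of the sampling distribution — the fact from Lemma~\ref{lem:lemma3} that with probability $\ge q_0$ the sampled utility $u_{it}$ exceeds the true utility $u_i$ at any fixed point — to conclude that $\vx_{it}$ is, with constant probability, at least as "informative" (large $\rho_{it}$) as any fixed competitor, and in particular as $\vx_{it}^\ast$; combined with Lemma~\ref{lem:lemma5} this yields the constant $5/q_0$ after absorbing the failure probabilities $\delta_t,\delta_{2t}$ into the additive error terms. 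The remaining $\delta_t + \delta_{2t}$ in the bound accounts for the events $A_{it}^c$ and $B_{it}^c$ where none of the above reasoning applies, handled by the crude bound $\ell_{it}\le 1$ and Lemmas~\ref{lem:lemma1}--\ref{lem:lemma2}.
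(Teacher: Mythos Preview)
Your intuitions are in the right neighborhood, but the argument as written has a genuine gap: the decomposition you propose does not close. Splitting on $\{\vx_{it}\notin\cS_{it}\}$ versus $\{\vx_{it}\in\cS_{it}\}$ gives
\[
\E_t[\ell_{it}] \le \beta_{3t}\,\E_t[\rho_{it}(\vx_{it})] + P_t(\vx_{it}\in\cS_{it}),
\]
but Lemma~\ref{lem:lemma5} only tells you $P_t(\vx_{it}\in\cS_{it}) \le 1 - q_0(1-\delta_t) + \delta_{2t}$, which is an $O(1)$ constant (recall $q_0\approx 0.075$), not $\delta_t+\delta_{2t}$. You recognize this is ``too weak by itself,'' but the fix you sketch --- comparing $\rho_{it}$ at the selected point to ``the worst-case $\rho_{it}$ over $\cS_{it}$'' or to $\rho_{it}(\vx_{it}^\ast)$ --- is not the right comparison, and it is not clear how that would recover the $5/q_0$ constant.

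The missing device is an auxiliary \emph{deterministic} comparator inside the budget set but \emph{outside} $\cS_{it}$. The paper defines
\[
\vx_{it}' \;=\; \argmin_{\vx\,:\; p_t^\top \vx \le p_t^\top e_i,\; \vx\notin\cS_{it}} \rho_{it}(\vx),
\]
and then uses Lemma~\ref{lem:lemma5} not to bound a probability in the loss decomposition, but to bound $\rho_{it}(\vx_{it}')$: since $\rho_{it}(\vx_{it})\ge \rho_{it}(\vx_{it}')$ whenever $\vx_{it}\notin\cS_{it}$, one gets $\E_t[\rho_{it}(\vx_{it})] \ge \rho_{it}(\vx_{it}')\cdot P_t(\vx_{it}\notin\cS_{it}) \ge \rho_{it}(\vx_{it}')\cdot q_0/2$, hence $\rho_{it}(\vx_{it}') \le \tfrac{2}{q_0}\E_t[\rho_{it}(\vx_{it})]$. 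The loss is then decomposed \emph{through} $u_i(\vx_{it}')$, not via indicators:
\[
\ell_{it} = \big(u_i(\vx_{it}^\ast) - u_i(\vx_{it}')\big) + \big(u_i(\vx_{it}') - u_i(\vx_{it})\big).
\]
The first bracket is $\le\beta_{3t}\rho_{it}(\vx_{it}')$ because $\vx_{it}'\notin\cS_{it}$. The second bracket is handled on $A_{it}\cap B_{it}$ using Lemma~\ref{lem:lemma4} together with the fact that $\vx_{it}$ maximizes the \emph{sampled} utility $u_{it}$ over the budget set while $\vx_{it}'$ is feasible for that budget, giving $u_{it}(\vx_{it})\ge u_{it}(\vx_{it}')$ and hence $u_i(\vx_{it}') - u_i(\vx_{it}) \le \beta_{3t}\rho_{it}(\vx_{it}') + \beta_{3t}\rho_{it}(\vx_{it})$. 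Combining yields $\ell_{it}\le 2\beta_{3t}\rho_{it}(\vx_{it}') + \beta_{3t}\rho_{it}(\vx_{it}) \le \tfrac{4}{q_0}\beta_{3t}\E_t[\rho_{it}(\vx_{it})] + \beta_{3t}\rho_{it}(\vx_{it})$ on $A_{it}\cap B_{it}$; taking $\E_t$ and using $\ell_{it}\le 1$ on $(A_{it}\cap B_{it})^c$ supplies the additive $\delta_t+\delta_{2t}$ and the final $5/q_0$ constant. Your proposal never introduces $\vx_{it}'$ and never uses the optimality of $\vx_{it}$ for $u_{it}$, which are the two ingredients that make the argument go through.
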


\begin{proof}
First, define 
\[
\vx_{it}' = \argmin_{\vx: p_{t}^\top \leq p_t^\top e_i, \vx \notin \cS_{it}} \rho_{it}(\vx).
\] 
This implies that,
\[
\E_t[\rho_{it}(\vx_{it})] \geq \E_t[\rho_{it}(\vx_{it})|\vx_{it} \notin \cS_{it}] P(\vx \notin \cS_{it}) \geq \rho_{it} (\vx_{it}') P(\vx \notin \cS_{it}).
\]
Therefore, by Lemma~\ref{lem:lemma5}, we have
\[
\rho_{it} (\vx_{it}') \leq \frac{\E_t[\rho_{it}(\vx_{it})]}{q_0(1-\delta_{t}) - \delta_{2t}}.
\]
Select $t_0'$ such that, $\forall t \geq t_0'$, $\delta_{t} = \frac{1}{4}$, and $\delta_{2t} \leq \frac{q_0}{4}$, then we have:
\[
\rho_{it}(\vx_{it}') \leq \frac{2}{q_0} \E_t[\rho_{it}(\vx_{it})].
\]
Also, under $A_{it} \cap B_{it}$,
\[
\|\theta_{i}^\ast - \theta_{it}\|_{Q_{it}} \leq \|\theta_i^\ast - \bar \theta_{it}\|_{Q_{it}} + \|\bar \theta_{it} - \theta_{it}\|_{Q_{it}} \leq \beta_{1t} + \beta_{2t},
\]
where the first inequality follows from triangle inequality, and the second one follows from the definitions of $A_{it}$ and $B_{it}$. Hence,
\[
|\mu_{it}(\vx) - \mu_i(\vx) \leq L_\mu (\beta_{1t} + \beta_{2t})\rho_{it}(\vx) = \beta_{3t} \rho_{it}(\vx).
\]
Therefore, we have 
\begin{align*}
    \ell_{it} &= u_i(\vx_{it}^\ast) - u_i(\vx_{it}) \\
    &= u_i(\vx_{it}^\ast) - u_i(\vx_{it}') +  u_i(\vx_{it}')- u_i(\vx_{it})\\
    &\leq2\beta_{3t}\rho_{it}(\vx_{it}') + \beta_{3t}\rho_{it}(\vx_{it})\\
    &\leq \frac{4}{q_0} \beta_{3t} \E[\rho_{it}(\vx_{it})] + \beta_{3t} \rho_{it}(\vx_{it}).
\end{align*}
which further yields
\begin{align}\label{lCE-bound}
\begin{split}
    \mathbb{E}_t[\ell_{it}] & \leq  \E_t[\ell_{it}| A_{it} \cap B_{it}] +  \E_t[\ell_{it}| A_{it}^c \cup B_{it}^c] P(A_{it}^c \cup B_{it}^c) \\
    &\leq \frac{4}{q_0} \beta_{3t} \E[\rho_{it}(\vx_{it})] + \beta_{3t}\E_t[\rho_{it}(\vx_{it})] + \delta_{t} + \delta_{2t}\\
    &\leq \frac{5}{q_0}\beta_{3t} \mathbb{E}_t[\rho_{it}(\vx_{it})] + \delta_{t} + \delta_{2t},
\end{split}
\end{align}
which completes the proof.
\end{proof}

\begin{lemma}\label{lem:lemma7}
Let $\delta' > 0$. Define $L_{iT} = \sum_{t=1}^T \ell_{it}$. Then, with probability at least $1-\delta'$, 
\[
L_{iT}  \leq \sum_{t=1}^T (\delta_{t} + \delta_{2t}) + \tilde{O} \bigg( \frac{m^2}{\sqrt{M}} \sqrt{T} \left( \log(T) + \log \left(\frac{1}{\delta'}\right) \right) \bigg)
\]
\end{lemma}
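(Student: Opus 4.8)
The plan is to sum the per-round bound from Lemma~\ref{lem:lemma6} and then control the resulting sum $\sum_{t} \beta_{3t}\,\mathbb{E}_t[\rho_{it}(\vx_{it})]$ via a supermartingale/Azuma argument together with the standard elliptical-potential estimate. First I would fix the agent $i$ and, for $t>\max(t_0,t_0')$, write $\ell_{it} = \mathbb{E}_t[\ell_{it}] + (\ell_{it}-\mathbb{E}_t[\ell_{it}])$. Lemma~\ref{lem:lemma6} gives $\mathbb{E}_t[\ell_{it}] \le \frac{5}{q_0}\beta_{3t}\,\mathbb{E}_t[\rho_{it}(\vx_{it})] + \delta_{t} + \delta_{2t}$, so summing over $t$ leaves me with three pieces: the $\sum_t(\delta_t+\delta_{2t})$ term that already appears in the statement, a martingale-difference sum $\sum_t(\ell_{it}-\mathbb{E}_t[\ell_{it}])$, and the main term $\frac{5}{q_0}\sum_t \beta_{3t}\,\mathbb{E}_t[\rho_{it}(\vx_{it})]$. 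For the martingale piece, since utilities are bounded in $[0,1]$ we have $|\ell_{it}|\le 1$, hence $|\ell_{it}-\mathbb{E}_t[\ell_{it}]|\le 2$, and Azuma--Hoeffding (Lemma~\ref{lem:TL3}, applied to the partial sums) bounds it by $O\big(\sqrt{T\log(1/\delta')}\big)$ with probability $1-\delta'$.

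The heart of the argument is bounding $\sum_{t=1}^T \beta_{3t}\,\mathbb{E}_t[\rho_{it}(\vx_{it})]$. I would first replace $\mathbb{E}_t[\rho_{it}(\vx_{it})]$ by $\rho_{it}(\vx_{it})$ at the cost of another martingale-difference sum (here $\rho_{it}(\vx_{it}) = \|\phi(\vx_{it})\|_{Q_{it}^{-1}} \le 1/\sqrt{\lambda_{\min}(Q_{it})}$, which the initialization phase makes $\le 1/m$, so the increments are bounded and Azuma applies again, contributing another $\tilde O(\sqrt T)$). Then $\sum_t \beta_{3t}\rho_{it}(\vx_{it}) \le \beta_{3T}\sum_t \rho_{it}(\vx_{it})$ since $\beta_{3t}$ is (up to logs) increasing in $t$; and by Cauchy--Schwarz $\sum_{t=1}^T \rho_{it}(\vx_{it}) \le \sqrt{T}\,\big(\sum_{t=1}^T \rho_{it}^2(\vx_{it})\big)^{1/2} = \sqrt{T}\,\big(\sum_{t} \phi(\vx_{it})^\top Q_{it}^{-1}\phi(\vx_{it})\big)^{1/2}$. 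The inner sum is the classical elliptical-potential / log-determinant telescoping quantity: using Lemma~\ref{lem:TL4} with $\phi(\vx_{it})^\top Q_{it}^{-1}\phi(\vx_{it}) \le \|\phi(\vone)\|_2^2/m^2 =: c$ bounded, it is $O\big(\frac{c}{\log(1+c)}\log\det(Q_{T+1,i}/Q_{t_0,i})\big) = O(m\log T)$. Combining, $\sum_t \beta_{3t}\rho_{it}(\vx_{it}) = \tilde O\big(\beta_{3T}\sqrt{mT\log T}\big)$, and since $\beta_{3T} = L_\mu(\beta_{1T}+\beta_{2T})$ is itself $\tilde O\big(\sqrt{m}\,(\text{polylog in }T,1/\delta_T,1/\delta_{2T})\big)$, this gives the claimed $\tilde O\big(m^2\sqrt T(\log T + \log(1/\delta'))\big)$ — wait, I should track the powers of $m$ carefully: $\beta_{1T},\beta_{2T}$ each carry $\sqrt m$, $\sqrt{mT\log T}$ carries another $\sqrt m$, for $m^{3/2}$; the stated $m^2$ is a valid (possibly loose) upper bound, and I would present it that way to match the theorem.

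Finally I would assemble the pieces: choose $\delta'$ splitting (say $\delta'/2$ for the $\ell$-martingale and $\delta'/2$ for the $\rho$-martingale, union bound), absorb the finite initialization horizon $\max(t_0,t_0')$ into the constants (its contribution is $O(\max(m^3,nm^2))$, a lower-order additive term hidden in the $\tilde O$), and observe that $\beta_{3T}$ introduces $\log(1/\delta_T)$ and $\log(1/\delta_{2T})$ factors which under the eventual choices $\delta_t,\delta_{2t} = \Theta(1/\poly(t))$ become $O(\log T)$ and are subsumed. The main obstacle I anticipate is the bookkeeping around $\beta_{3t}$: it is only \emph{approximately} monotone (it grows like $\sqrt{\log t}$ but the $\delta_t$ inside it will later be set as $\Theta(\delta/t^2)$, so $\beta_{3t}$ genuinely increases with $t$), so pulling it out of the sum as $\beta_{3T}$ is legitimate, but I must make sure the $\log(1/\delta')$ dependence is routed through the Azuma steps rather than through $\beta_{3T}$ — otherwise the bound would read $\tilde O(\sqrt T \log(1/\delta'))$ inside a product with another $\log$, which is fine, but the cleanest statement keeps $\log(1/\delta')$ additive as written. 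A secondary subtlety is that Lemma~\ref{lem:lemma6} only holds for $t\ge\max(t_0,t_0')$, so the sum must be split at that threshold and the initial segment bounded trivially by its length.
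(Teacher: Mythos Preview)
Your proposal is correct and follows essentially the same route as the paper: Lemma~\ref{lem:lemma6} $+$ Azuma $+$ the elliptical-potential/log-determinant bound on $\sum_t \rho_{it}(\vx_{it})$. The only cosmetic difference is that the paper packages your two martingale steps into one, defining directly $u_{is}=\ell_{is}-\tfrac{5\beta_{3s}}{q_0}\rho_{is}(\vx_{is})-(\delta_s+\delta_{2s})$, showing $v_{it}=\sum_{s\le t} u_{is}$ is a supermartingale with bounded increments, and applying Azuma once; everything downstream (Cauchy--Schwarz, Lemma~\ref{lem:TL4}, the $\log\det$ telescoping, and the $m$-bookkeeping) is identical to what you outline.
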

\begin{proof}
First, define for $s > 1$, 
\[
u_{is} = \ell_{is} - \frac{5\beta_{3s}}{q_0} \rho_{is}(\vx_{is}) - (\delta_{s} + \delta_{2s}),
\]
and $v_{it} = \sum_{s=1}^t u_{is}$, with $v_{i0} = 0$ and $u_{i0} = 0$. We show that $\{v_{it}\}, t
\geq 0$ is a super-martingale with respect to the filtration $(\cF_t)_{t \geq 0}$.

First, 
\[
\E_t[u_{it}] = \E_t[\ell_{it}] - \frac{5\beta_{3t}}{q_0}\E_t[\rho_{it}(\vx_{it})] - (\delta_{t} + \delta_{2t}) \leq 0.
\]
Moreover, 
\begin{align*}
    |v_{it} - v_{i, t-1}| &\leq |\ell_{it}| + \frac{5\beta_{3t}}{q_0} |\rho_{it}(\vx_{it})| + (\delta_{t} + \delta_{2t})\\
    &\leq 1 + \frac{5\beta_{3t}}{q_0} \frac{\|\phi(1)\|_2}{\sqrt{M}} + 1 \\
    &\leq \frac{7\beta_{3t}}{q_0} \frac{\|\phi(\vone)\|_2}{\sqrt{M}} \triangleq D_t.
\end{align*}
Therefore, by Lemma~\ref{lem:TL3}, with probability at least $1 - \delta'$, we have that
\[
v_{iT} - v_{i,0} \leq \sqrt{s\log \left(\frac{1}{\delta'}\sum_{t-1}^T D_t^2 \right)} \leq \frac{7\beta_{3t}}{q_0} \|\phi(1)\|_2 \sqrt{\frac{2\log(\frac{1}{\delta'})}{M}T}.
\]
Therefore, we have
\begin{equation}\label{eq:lem7-eq1}
    L_{iT} = \sum_{s=1}^T \ell_{is} \leq \sum_{t=1}^T (\delta_{t} + \delta_{2t}) + \frac{5\beta_{3t}}{q_0}\sum_{t=1}^T \rho_{it}(\vx_t) +  \frac{7\beta_{3t}}{q_0} \|\phi(\vone)\|_2 \sqrt{\frac{2\log(\frac{1}{\delta'})}{M}T}.
\end{equation}

Now it remains to bound $\sum_{t=1}^T \rho_{it}(\vx_t)$. Since $Q_{it} \succeq M I$, by the definition of $\rho_{it}(\vx_{it})$, we have
\[
\rho_{it}x_{it} \leq \frac{\|\phi(\vone)\|_2}{\sqrt{M}}.
\]
Hence, by Lemma~\ref{lem:TL4} and rearranging terms, we have
\begin{equation}\label{eq:lem7-eq-last}
    \sum_{t=t_0}^T \rho_{it}^2(\vx_{it}) \leq \frac{\|\phi(\vone)\|^2_2}{M} \frac{1}{\log \left(1 + \frac{\|\phi(\vone)\|_2}{\sqrt{M}} \right)} \sum_{t=t_0}^T \log \left(1 + \phi^\top(\vx_{it}) Q_{it}^{-1} \phi(\vx_{it}) \right).
\end{equation}

Also notice that, 
\[
\sum_{t=t_0}^T \log \left(1 + \phi^\top(\vx_{it}) Q_{it}^{-1} \phi(\vx_{it})\right) = \log \Pi_{t=t_0}^T \left(1 + \|\phi(\vx_{it})\|^2_{Q_{it}^{-1}} \right) = \log \frac{\det(Q_{iT})}{\det(Q_{i,t_0})}.
\]

Note that the trace of $Q_{i,t+1}$ is upper-bounded by $t\cdot \|\phi(\vone)\|_2$, then given that the trace of the positive definite
matrix $Q_{iT}$ is equal to the sum of its eigenvalues, we have that $\det(Q_{iT}) \leq \left (t\left \|\phi(\vone)\right \|_2^2\right)^m$. Moreover, $\det(Q_{i,t_0}) \geq (M)^m$, therefore, 
\[
\sum_{t=t_0}^T \log \left(1 + \phi^\top(\vx_{it}) Q_{it}^{-1} \phi(\vx_{it})\right) \leq m \log \left(\frac{\|\phi(\vone)\|_2^2T}{M}\right).
\]

Combining with Eq~\eqref{eq:lem7-eq-last}, and applying Cauchy-Schwartz inequality, we have
\begin{equation}\label{eq:lem7-eq2}
    \sum_{t=t_0}^T \rho_{it}(\vx_{it}) \leq \sqrt{T \sum_{t=t_0}^T \rho_{it}^2(\vx_{it})} \leq \sqrt{T}\sqrt{\frac{\|\phi(\vone)\|^2_2}{M} \frac{m}{\log \left(1 + \frac{\|\phi(\vone)\|_2}{\sqrt{M}} \right)}   \log \left(\frac{\|\phi(\vone)\|_2^2T}{M}\right) }.
\end{equation}

Putting together Eq~\eqref{eq:lem7-eq1} and Eq~\eqref{eq:lem7-eq2}, with the fact that $\|\phi(\vone)\|_2 = O(\sqrt{m})$, e.g. $\|\phi(\vone)\|_2 \leq c_{\phi}\sqrt{m}$,  we have that with probability at least $1-\delta'$,
\begin{align*}
L_{iT} &\leq \sum_{t=1}^T (\delta_{t} + \delta_{2t}) + \frac{\beta_{3t}}{q_0}\frac{\sqrt{T}}{\sqrt{M}}\left( 5c_{\phi}m \sqrt{ \frac{1}{\log \left(1 + \frac{\|\phi(\vone)\|_2}{m} \right)}  \log \left(\frac{\|\phi(\vone)\|_2^2T}{M}\right) }  + 7c_{\phi} \sqrt{m} \sqrt{2\log(\frac{1}{\delta'})} \right)\\
&= \sum_{t=1}^T (\delta_{t} + \delta_{2t}) + \tilde{O} \bigg(  \frac{m^2}{\sqrt{M}}\sqrt{T} \left( \log(T) + \log \left(\frac{1}{\delta'}\right) \right) \bigg),
\end{align*}
where the last step comes from the fact that $\beta_{3t} = \tilde O(m)$. This completes the proof.
\end{proof}

\subsection{Proof of Theorem~\ref{thm:TS-anytime} for $\LCE$}\label{app:proof-TS-anytime-LCE}

% \kk{Can we simply rearrange this to have to appendices: one for $\LFD$ and one for $\LCE$?
% And the final proofs will go into subsections?} \wg{I like this arrangement, much cleaner!}

% \kk{Also, I think it would be nice to collect all the constants at the end. If we don't have enough
% time to do that, then we can at least refer to the relevant parts of the proof.}

\begin{corollary}\label{corr:CE-loss} With probability at least $1-\delta'$, 
\[
\LCE_T  \leq n\sum_{t=1}^T (\delta_{1t} + \delta_{2t}) + \tilde{O} \left( n\frac{m^2}{\sqrt{M}} \sqrt{T} \left( \log(T) + \log (\frac{1}{\delta'})\right) \right)
\]
\end{corollary}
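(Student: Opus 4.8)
The plan is to reduce the claimed bound to the per-agent guarantee of Lemma~\ref{lem:lemma7}. First I would unfold the definition of the CE loss: for a fixed round $t$,
\[
\lCE(\vx_t,\vp_t) \;=\; \sum_{i=1}^n \Big(\max_{\vx_i':\, \vp_t^\top \vx_i' \le \vp_t^\top \ve_i} u_i(\vx_i') - u_i(\vx_{it})\Big)^{+} \;=\; \sum_{i=1}^n \ell_{it},
\]
where $\ell_{it} = (u_i(\vx_{it}^\ast) - u_i(\vx_{it}))^{+}$ is exactly the instantaneous per-agent loss controlled in Lemmas~\ref{lem:lemma5}--\ref{lem:lemma7}, and $\vx_{it}^\ast$ is the true best affordable bundle for agent $i$ under the price vector $\vp_t$. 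Summing over $t$ gives $\LCE_T = \sum_{i=1}^n L_{iT}$ with $L_{iT} = \sum_{t=1}^T \ell_{it}$, so it suffices to bound each $L_{iT}$ and add up.

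Next I would apply Lemma~\ref{lem:lemma7} once for each agent $i \in [n]$, each time with failure probability $\delta'/n$ in place of $\delta'$. This yields, for every $i$, an event of probability at least $1 - \delta'/n$ on which
\[
L_{iT} \;\le\; \sum_{t=1}^T (\delta_{1t} + \delta_{2t}) \;+\; \widetilde{O}\!\left(m\sqrt{T}\big(\log T + \log(n/\delta')\big)\right).
\]
Taking a union bound over the $n$ agents, all of these inequalities hold simultaneously with probability at least $1-\delta'$. On that event, summing over $i$ and writing $\log(n/\delta') = \log n + \log(1/\delta')$ (the $\log n$ being absorbed into $\widetilde{O}(\cdot)$, which already hides polylogarithmic factors in $m,n,T$) gives
\[
\LCE_T \;=\; \sum_{i=1}^n L_{iT} \;\le\; n\sum_{t=1}^T (\delta_{1t}+\delta_{2t}) \;+\; \widetilde{O}\!\left(nm\sqrt{T}\big(\log T + \log(1/\delta')\big)\right),
\]
which is the statement.

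There is no real obstacle here; the substance lives in Lemmas~\ref{lem:lemma1}--\ref{lem:lemma7}. The points worth checking are: (i) that the maximizer $\vx_{it}^\ast$ fixed in the ambient setup is exactly the argument of the $\max$ inside $\lCE$, so the identity $\lCE(\vx_t,\vp_t) = \sum_i \ell_{it}$ holds with the $(\cdot)^{+}$ truncation intact; (ii) that Lemma~\ref{lem:lemma7} may be invoked for all agents against the single common filtration $\{\cF_t\}$, which is fine since the super-martingale it constructs is $\{\cF_t\}$-adapted for each $i$ and the conditional-expectation estimates feeding it (Lemmas~\ref{lem:lemma5}--\ref{lem:lemma6}) are already stated per agent conditioned on $\cF_t$; and (iii) that rescaling $\delta' \mapsto \delta'/n$ only inflates the additive $\log(1/\delta')$ term by $\log n$, which is harmless inside $\widetilde{O}$. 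If one wanted an explicit (non-$\widetilde{O}$) constant, one would simply propagate $\beta_{3t} = \widetilde{O}(m)$ and the $\sum_t(\delta_{1t}+\delta_{2t})$ term verbatim from Lemma~\ref{lem:lemma7}; that bookkeeping is the only tedious part and is naturally deferred to the proof of Theorem~\ref{thm:TS-anytime}.
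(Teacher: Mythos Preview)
Your proposal is correct and follows essentially the same route as the paper: decompose $\LCE_T = \sum_{i=1}^n L_{iT}$ via the definition of $\lCE$, then invoke Lemma~\ref{lem:lemma7} for each agent and sum. The paper's proof is terser and does not spell out the union bound; your version, which applies Lemma~\ref{lem:lemma7} with $\delta'/n$ per agent and absorbs the resulting $\log n$ into the $\widetilde{O}$, is the more careful of the two but not materially different.
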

\begin{proof}
This a direct result from Lemma~\ref{lem:lemma7} and the definition of $\LCE_T$: With probability at least $1-\delta'$, 
\begin{align*}
    \LCE_T &= \sum_{i=1}^n \sum_{t=1}^T \left(\max_{\vy: p(\vy) \leq p(\ve_i)} u_i(\vy) - u_i(\vx_{it})\right)^{+} \\
    &\leq n\sum_{t=1}^T (\delta_{t} + \delta_{2t}) + \tilde{O} \left( n\frac{m^2}{\sqrt{M}}\sqrt{T} \left( \log(T) + \log \left(\frac{1}{\delta'}\right)\right) \right).
\end{align*}
\end{proof}

% \begin{reptheorem}{thm:TS-anytime} 
% Let $\delta > 0$ be given. Choose $\delta_{1t} = \frac{2\delta}{n\pi^2 t^2}$. Then for any rounds $T$, Algorithm~\ref{Alg:TS} guarantees that:
% \begin{align*}
%  \max\{\LFD_T, \LCE_T\} = O\left(nm\sqrt{T}\left(\log\left(\frac{\delta}{3}\right) + \log(T))\right)\right),
% \end{align*}
% with probability at least $1-\delta$.
% \end{reptheorem}

\begin{proof}
Choose $\delta_{t} =\delta_{2t} = \frac{2\delta}{n\pi^2 t^2}$. Then,
\[
\sum_{t=1}^T (\delta_{t} + \delta_{2t}) \leq \frac{2}{3}\delta.
\]
Also choose $\delta' = \frac{\delta}{3}$, then by Corollary~\ref{corr:CE-loss}, with probability at least $1-\delta$,
\[
 \LCE_T = O\left(n\frac{m^2}{\sqrt{M}}\sqrt{T}\left(\log\left(\frac{\delta}{3}\right) + \log(T))\right)\right).
\]

% Compared to $\LCE_T$, $\LFD_T$ has the advantage of being more interpretable. Recall that the definition of $\lFD$ is directly based on fair share and Pareto efficiency: $\lFD(X) \defeq \max \left(\lPE(X), \lSI(X)\right)$, where $\lPE(X)$ is the loss for Pareto efficiency: $\lPE(X) = \min_{X' \in \PE} \sum_{i=1}^n \left(u_i(\vx_i') - u_i(\vx_i)\right)^{+}$; and $\lSI(X)$ is the loss for violations of fair share: $\lSI(X) = \sum_{i=1}^n \left(u_i(\ve_i) - u_i(\vx_i)\right)^{+}$. 

% Therefore, by Proposition~\ref{prop:FD-CE}, we immediately obtain the same bound of $\LFD_T$, which completes the proof.
\end{proof}

\subsection{Proof of Theorem~\ref{thm:TS-finitehorizon} for $\LCE$}\label{app:proof-TS-finite-time-CE}

% \begin{theorem}\label{thm:TS-finite-time-app}
% Let $T > t_0 = \max(m^3, nm^2)$ be given. Choose $\delta_{t} = \frac{1}{T}$. Then, Algorithm~\ref{Alg:TS} guarantees that:
% \begin{align*}
%  \E[\max\{\LCE_T, \LFD_T\}] = O\left(nm\sqrt{T}\left(\log(T)\right)\right).
% \end{align*}
% \end{theorem}

% \begin{reptheorem}{thm:TS-finite-time} 
% Let $T > t_0 = \max(m^3, nm^2)$ be given. Choose $\delta_{1t} = \frac{1}{T}$. Then, Algorithm~\ref{Alg:TS} guarantees that:
% \begin{align*}
%  \E[\LCE_T] = O\left(nm^2\sqrt{T}\left(\log(T)\right)\right).
% \end{align*}
% \end{reptheorem}
\begin{proof}
Choose $\delta_{1t} =\delta_{2t} = \delta' = \frac{1}{T}$. Denote the event where $ \LCE_T = O\left(n\frac{m^2}{\sqrt{M}}\sqrt{T}\left(\log\left(\frac{\delta}{3}\right) + \log(T))\right)\right)$ holds as $\mathcal E$.

Then, by Lemma~\ref{lem:lemma7},
\[
\E[\LCE_T] = \E[\LCE_T|\cE] + \E[\LCE_T|\cE^c] P(\cE^c) \leq 2 + O\left(n\frac{m^2}{\sqrt{M}}\sqrt{T}\left(\log\left(\frac{\delta}{3}\right) + \log(T))\right)\right).
\]

% Furthermore, by Proposition~\ref{prop:FD-CE}, we immediately obtain the same bound of $\E[\LFD_T]$
where $M \geq m$. This completes the proof.
\end{proof}

% \section{Proof of Theorem~\ref{thm:TS-anytime} for $\LFD$}\label{app:proof-TS-anytime-LFD}
\section{Bounding $\LFD$}\label{sec:proof-TS-anytime-LFD}

Recall that the definition of $\lFD$ is directly based on the requirements of Pareto efficiency and fair share: $\lFD(x) \defeq \max \left(\lPE(x), \lSI(x)\right)$, where $\lPE(x) = \min_{x' \in \PE} \sum_{i=1}^n \left(u_i(\vx_i') - u_i(\vx_i)\right)^{+}$; and $\lSI(x) = \sum_{i=1}^n \left(u_i(\ve_i) - u_i(\vx_i)\right)^{+}$. 

To bound $\LFD$, we first provide a useful lemma which shows that $\lSI$ is a weaker notion than $\lCE$.
\begin{lemma}\label{lem:lSI-lCE}
For any allocation $x$ and price $p$,  $\lSI(x) \leq \lCE (x, p)$.
\end{lemma}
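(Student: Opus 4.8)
The plan is to compare the two losses agent-by-agent, since both $\lSI$ and $\lCE$ are sums of nonnegative terms indexed by $i \in [n]$. Concretely, I would show that for each agent $i$,
\[
\left(u_i(\ve_i) - u_i(\vx_i)\right)^{+} \;\leq\; \left(\max_{x_i': p^\top x_i' \leq p^\top \ve_i} u_i(x_i') - u_i(\vx_i)\right)^{+},
\]
and then sum over $i$ to obtain the claim. So the whole argument reduces to a single observation about one agent.

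The key step is that $\ve_i$ itself is a feasible point in the budget-constrained maximization defining the CE loss: trivially $p^\top \ve_i \leq p^\top \ve_i$, so $\ve_i$ lies in the constraint set $\{x_i' : p^\top x_i' \leq p^\top \ve_i\}$. (I should also note $\ve_i \in [0,1]^m$, which holds since endowments are shares of the total resources.) Hence $\max_{x_i': p^\top x_i' \leq p^\top \ve_i} u_i(x_i') \geq u_i(\ve_i)$, and therefore
\[
\max_{x_i': p^\top x_i' \leq p^\top \ve_i} u_i(x_i') - u_i(\vx_i) \;\geq\; u_i(\ve_i) - u_i(\vx_i).
\]
Applying $(\cdot)^+$ to both sides preserves the inequality (since $a \leq b \implies a^+ \leq b^+$), which gives the per-agent bound above. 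Summing over all $i \in [n]$ yields $\lSI(x) \leq \lCE(x,p)$.

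There is essentially no main obstacle here — the only thing to be slightly careful about is making sure the endowment genuinely satisfies the constraints of the inner maximization (both the budget inequality, which is an equality and hence holds, and membership in $[0,1]^m$, which follows from the normalization $\sum_i \ve_{i1} = 1$ and nonnegativity of endowments). Everything else is monotonicity of $a \mapsto a^+$ and term-by-term comparison of two finite sums of nonnegative quantities.
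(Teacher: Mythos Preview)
Your proposal is correct and follows essentially the same argument as the paper: both rely on the single observation that the endowment $\ve_i$ is always feasible for the budget constraint $p^\top x_i' \leq p^\top \ve_i$, so the per-agent CE term dominates the per-agent SI term, and summing gives the result. Your write-up is in fact slightly more careful than the paper's, explicitly noting the monotonicity of $(\cdot)^+$ and the fact that $\ve_i \in [0,1]^m$.
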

\begin{proof}
This simply uses the fact that an agent's endowment is always affordable under any price vector $p$.
Therefore,
\begingroup
\allowdisplaybreaks
\begin{align*}
     \lSI(x) &=\sum_{i=1}^n \left(u_i(e_i) - u_i(\vx_i)\right)^{+} \\
     & \leq  \sum_{i=1}^n \left(\max_{y: p\top y \leq p^\top \ve_i} u_i(y) - u_i(x)\right)^{+}, \forall p \\
     &= \lCE (x, p),
\end{align*}
\endgroup
\end{proof}

Having lemma~\ref{lem:lSI-lCE} at hand, the key remaining task is to bound $\lPE$. We will show that
this can be achieved by an analogous analysis as in Section~\ref{app:proof-TS-anytime-LCE}, but with some key differences.

First, we define $\tilde \cS_{it}$ (in comparison to $\cS_{it}$ used in Section~\ref{app:proof-TS-anytime-LCE}):
\[
\tilde \cS_{it} \defeq \left\{\vx \in \cX: u_i( \vx_{i}^\ast) - u_i(\vx) \geq \beta_{3t} \rho_{it} (\vx) \right\},\]
where $x^\ast \in \R^{n\times m}$ is the unique equilibrium allocation. Note that $\tilde \cS_{it}$ shares a similar spirit as $\cS_{it}$, which is used in Section~\ref{app:proof-TS-anytime-LCE}, but with a different referencing point $x^\ast$.

We show a key lemma which provides a lower bound on $P(x \not \in \tilde \cS_{it})$.

\begin{lemma}\label{lem:lemma5-FD}
For any round $t > t_0$, $P_{t} (\vx_{it} \notin \tilde \cS_{it}) \geq q_0 (1 - \delta_{t}) - \delta_{2t}$. 
\end{lemma}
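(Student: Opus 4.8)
The plan is to follow the proof of \lemmaref{lemma5} essentially line for line, the only change being that the reference bundle in the definition of the bad set is now agent $i$'s true competitive-equilibrium bundle $\vx_i^\ast$ rather than the best $\vp_t$-affordable bundle $\vx_{it}^\ast$. First I would condition on $A_{it}\cap B_{it}$, which by \lemmaref{lemma1} and \lemmaref{lemma2} fails with probability at most $\delta_{t}+\delta_{2t}$ and on which \lemmaref{lemma4} gives $|u_{it}(\vx)-u_i(\vx)|\le\beta_{3t}\rho_{it}(\vx)$ for every $\vx\in\cX$ (combine the bound under $A_{it}$, the bound under $B_{it}$, and $\beta_{3t}=L_\mu(\beta_{1t}+\beta_{2t})$). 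Then I would bring in the optimism event $\{u_{it}(\vx_i^\ast)>u_i(\vx_i^\ast)\}$; since \lemmaref{lemma3} already applies to an arbitrary point of $\cX$ and $\vx_i^\ast\in\cX$, this event has conditional probability at least $q_0\approx 0.075$ given $A_{it}$.

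On $A_{it}\cap B_{it}\cap\{u_{it}(\vx_i^\ast)>u_i(\vx_i^\ast)\}$, the same computation as in \lemmaref{lemma5} shows $u_{it}(\vx_i^\ast)>u_{it}(\vx)$ for every $\vx\in\tilde\cS_{it}$: for such $\vx$,
\[
u_{it}(\vx_i^\ast)-u_{it}(\vx)\;>\;u_i(\vx_i^\ast)-u_{it}(\vx)\;=\;\big(u_i(\vx_i^\ast)-u_i(\vx)\big)+\big(u_i(\vx)-u_{it}(\vx)\big)\;\ge\;\beta_{3t}\rho_{it}(\vx)-\beta_{3t}\rho_{it}(\vx)\;=\;0,
\]
using the optimism event, the definition of $\tilde\cS_{it}$, and the pointwise bound above. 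In particular, if $\vx_{it}\in\tilde\cS_{it}$ then $u_{it}(\vx_i^\ast)>u_{it}(\vx_{it})$, and to derive a contradiction I would invoke the defining property of $\vx_{it}$ --- that it maximizes $u_{it}$ over the $\vp_t$-budget set $\{\vx\in\cX:\vp_t^\top\vx\le\vp_t^\top\ve_i\}$ --- which forces $u_{it}(\vx_{it})\ge u_{it}(\vx_i^\ast)$ provided $\vx_i^\ast$ itself lies in that budget set. Collecting the probabilities exactly as in \lemmaref{lemma5}, $P_t(\vx_{it}\notin\tilde\cS_{it})\ge P_t(A_{it}\cap B_{it}\cap\{u_{it}(\vx_i^\ast)>u_i(\vx_i^\ast)\})\ge q_0(1-\delta_{t})-\delta_{2t}$.

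The genuinely new point, and the main obstacle, is the last affordability requirement: unlike $\vx_{it}^\ast$, which is $\vp_t$-affordable by construction, the true-equilibrium bundle $\vx_i^\ast$ need not satisfy $\vp_t^\top\vx_i^\ast\le\vp_t^\top\ve_i$, because $\vp_t$ is the equilibrium price for the \emph{sampled} utilities, not the true ones. I would handle this by leaning on the equilibrium structure rather than on the optimality of $\vx_{it}$ alone. Two ingredients are available. First, Walras' law: since both the true and the sampled equilibria clear the market, $\sum_{j}\vx_j^\ast=\vone=\sum_j\ve_j$, so $\sum_{j}\vp_t^\top(\vx_j^\ast-\ve_j)=0$, i.e. $\vx^\ast$ is affordable under $\vp_t$ in aggregate, with one agent's surplus being another's deficit. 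Second, the gross-substitutes / uniqueness assumption makes the equilibrium price a well-behaved function of the utilities, so that on the good events $\vp_t$ is close to the true price $\vp^\ast$ --- under which $\vx_i^\ast$ is exactly affordable --- and the residual infeasibility of $\vx_i^\ast$ under $\vp_t$ is of the same order as $\beta_{3t}\rho_{it}(\vx_{it})$. Concretely I would run the argument with the scaled surrogate $\lambda_{it}\vx_i^\ast$, $\lambda_{it}=\min\{1,\,(\vp_t^\top\ve_i)/(\vp_t^\top\vx_i^\ast)\}$, which is $\vp_t$-affordable, and then bound the utility gap $u_i(\vx_i^\ast)-u_i(\lambda_{it}\vx_i^\ast)$ using monotonicity of $u_i$ and the bound on $1-\lambda_{it}$, absorbing any such slack into the constant multiplying $\rho_{it}$ in the definition of $\tilde\cS_{it}$. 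Everything else is a transcription of the argument in the proof of \lemmaref{lemma5}.
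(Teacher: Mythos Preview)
Your argument tracks the paper's proof of \lemmaref{lemma5-FD} essentially step for step: condition on $A_{it}\cap B_{it}$, invoke the optimism event $\{u_{it}(\vx_i^\ast)>u_i(\vx_i^\ast)\}$ via \lemmaref{lemma3}, show that on the intersection $u_{it}(\vx_i^\ast)>u_{it}(\vx)$ for every $\vx\in\tilde\cS_{it}$, then conclude $\vx_{it}\notin\tilde\cS_{it}$ from $u_{it}(\vx_{it})\ge u_{it}(\vx_i^\ast)$, and finally assemble the probability bound exactly as in \lemmaref{lemma5}. The decomposition, the events, and the probability bookkeeping are identical to the paper's.

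The one place you diverge is precisely where you spend your last paragraph: the paper simply writes ``recall that $\vx_{it}$ maximizes $u_{it}$ under the budget, thus $u_{it}(\vx_{it})\ge u_{it}(\vx_i^\ast)$'' and moves on, without addressing whether $\vx_i^\ast$ lies in the $\vp_t$-budget set. You are right that this is not automatic --- $\vx_i^\ast$ is only guaranteed to be $\vp^\ast$-affordable, not $\vp_t$-affordable --- and the paper offers no justification for it. Your Walras-law observation and your scaled-surrogate $\lambda_{it}\vx_i^\ast$ are genuine attempts to patch this; the paper contains neither. So your proposal is not merely a transcription of the paper's proof: it reproduces the paper's argument and then goes further in trying to close a gap that the paper leaves open. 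Whether your fix can be carried out cleanly without altering the constant in the definition of $\tilde\cS_{it}$ (and hence the downstream lemmas) is a separate question, but the issue you flag is real and the paper does not resolve it.
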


\begin{proof}
First, when event $B_{it}$ holds, by lemma~\ref{lem:lemma4}, we have that for all $\vx$, 
\[
|u_{it}(\vx) - \bar u_{it}(\vx)| \leq L_\mu \beta_{2t} \rho_{it}(\vx).
\]
Note that by definition, $u_{it}(\vx)  = \mu\left((\theta_{it})^\top \phi(\vx)\right)$, and $\bar u_{it}(\vx)  = \mu\left((\bar \theta_{it})^\top \phi(\vx)\right)$. Therefore, 
\begin{equation}\label{eq:lem5-eq1-FD}
    \bar u_{it}(\vx) - u_{it}(\vx)  > -L_\mu \beta_{2t}  \rho_{it}(\vx).
\end{equation}

On the other side, under event $A_{it}$, by lemma~\ref{lem:lemma4}, we have that for all $\vx$, 
\begin{equation}\label{eq:lem5-eq2-FD}
    | u_{i}(\vx) - \bar u_{it}(\vx)| \leq L_\mu \beta_{1t} \rho_{it}(\vx).
\end{equation}

Moreover, recall that by definition for any $\vx \in \tilde \cS_{it}$, 
\begin{equation}\label{eq:lem5-eq3-FD}
    u_i( \vx_{i}^\ast) - u_i(\vx) \geq \beta_{3t} \rho_{it} (\vx).
\end{equation}

Therefore, consider any $\vx \in \tilde \cS_{it}$, and under the condition that $A_{it}\cap B_{it} \cap  \{u_{it}( \vx_{i}^\ast) > u_i(\vx^\ast_i)\}$, we have
\begin{align}\label{eq:lem5-eq-num1-FD}
\begin{split}
u_{it}( \vx_{i}^\ast) - u_{it}(\vx) &> u_i( \vx_{i}^\ast) - u_{it}(\vx) \\
&=   \left(u_i( \vx_{i}^\ast) - u_{i}(\vx) \right) + \left(  u_{i}(\vx) - \bar u_{it}(\vx)  \right) + \left( \bar u_{it}(\vx) - u_{it}(\vx)  \right) \\
&>0,
\end{split}
\end{align}
where the last inequality follows from combining equations Eq~\eqref{eq:lem5-eq1-FD}, Eq~\eqref{eq:lem5-eq2-FD},  Eq~\eqref{eq:lem5-eq3-FD} and the definition of $\beta_{3t}$. Moreover, recall that $\vx_{it}$ maximizes $u_{it}$ under the budget, thus 
\[
u_{it}(\vx_{it}) \geq u_{it}(\vx_i^\ast),
\]
Therefore, Eq~\eqref{eq:lem5-eq-num1-FD} implies that, $\vx_{it} \notin \tilde \cS_{it}$.
% \wg{@KK: Please have a double check for this step.}
% \kk{LGTM.}
This further implies,
\begin{align*}
    P_t({x_{it} \notin \tilde \cS_{it}}) &\geq P_t\left (u_{it}(\vx_{i}^\ast) > u_{it}(x), \forall x \in \tilde \cS_{it}\right)\\
    &\geq P_t\left (u_{it}(\vx_{i}^\ast) > u_{it}(x), \forall x \in \tilde \cS_{it} | A_{it}\cap B_{it} \{u_i( \vx_{it}^\ast) > u_i(\vx)\} \right)\\
    & \quad \cdot P(A_{it}\cap B_{it} \cap  \{u_i( \vx_{it}^\ast) > u_i(\vx)\})\\
    &= P(A_{it}\cap B_{it} \cap \{u_i( \vx_{i}^\ast) > u_i(\vx)\}) \\
    &\geq P(A_{it}\cap \{u_i( \vx_{i}^\ast) > u_i(\vx)\}) - P( B^c_{it})\\
    &= P(\{u_i( \vx_{i}^\ast) > u_i(\vx)\}|A_{it})P(A_{it}) - P( B^c_{it})\\
    &\geq q_0 (1-\delta_{t}) + \delta_{2t}.
\end{align*}
Here, the second and third inequality both from the law of total probability and rearranging terms, and the last inequality follows from Lemma~\ref{lem:lemma1}, Lemma~\ref{lem:lemma2} and Lemma~\ref{lem:lemma3}, which completes the proof.
\end{proof}

\begin{lemma}\label{lem:lPE-ub}
At any round $t>t_0$, define $\vx''_{it} \defeq \argmin_{x \not \in \tilde \cS_{it}, \vp_t^\top \vy \leq \vp_t^\top \ve_i} \rho_{it}(x_{it})$, then we have
\[
\lPE(x_t) \leq  \sum_{i \in [n]} \bPE_{it},
\]
% \kk{It seems like you have defined $x''_{it}$ later in the proof. You should define it here since
% the theorem statement depends on it.}
 with probability at least $1 - \delta_t - \delta_{2t}$, and $\bPE_{it} = 2\beta_{3t}\rho_{it}(x''_{it}) + \beta_{3t}\rho_{it}(x_{it})$.
\end{lemma}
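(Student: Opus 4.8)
The plan is to reduce $\lPE(\xt)$ to a sum of per-agent utility gaps against the \emph{true} competitive-equilibrium allocation $\vx^\ast$, and then control each gap by essentially the same interpolation argument used for $\LCE$ in Lemma~\ref{lem:lemma6}, with $\vx_i^\ast$ replacing the per-agent optimum $\vx_{it}^\ast$ and $\tilde\cS_{it}$ replacing $\cS_{it}$. First I would invoke the first welfare theorem: since the true utilities are strictly increasing, the unique competitive equilibrium allocation $\vx^\ast$ is Pareto-efficient, i.e. $\vx^\ast\in\PE$. Plugging $x'=\vx^\ast$ into the infimum defining $\lPE$ gives $\lPE(\xt)\le\sum_{i=1}^n\big(u_i(\vx_i^\ast)-u_i(\vx_{it})\big)^{+}$, so it suffices to show that on a high-probability event $\big(u_i(\vx_i^\ast)-u_i(\vx_{it})\big)^{+}\le\bPE_{it}$ for every $i$.

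Next I would condition on the good event $\cE_t\defeq\bigcap_{i\in[n]}(A_{it}\cap B_{it})$; by Lemmas~\ref{lem:lemma1}, \ref{lem:lemma2} and a union bound over the $n$ agents, $P_t(\cE_t)\ge 1-n(\delta_{t}+\delta_{2t})$ (the agent count is absorbed into the stated failure probability, exactly as elsewhere in this section). On $A_{it}\cap B_{it}$ the triangle inequality gives $\|\theta_i^\ast-\theta_{it}\|_{Q_{it}}\le\beta_{1t}+\beta_{2t}$, so Lemma~\ref{lem:lemma4} yields, for every $\vx\in\cX$,
\[
\big|u_{it}(\vx)-u_i(\vx)\big|\le L_\mu(\beta_{1t}+\beta_{2t})\,\rho_{it}(\vx)=\beta_{3t}\,\rho_{it}(\vx),
\]
where $u_{it}(\cdot)=\mu(\theta_{it}^\top\phi(\cdot))$ is the sampled utility from which $(\xt,\pt)$ is computed; this is the same uniform approximation bound already established inside the proof of Lemma~\ref{lem:lemma6}.

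Then I would interpolate through $\vx''_{it}$: decompose $u_i(\vx_i^\ast)-u_i(\vx_{it})=\big(u_i(\vx_i^\ast)-u_i(\vx''_{it})\big)+\big(u_i(\vx''_{it})-u_i(\vx_{it})\big)$. For the first term, $\vx''_{it}\notin\tilde\cS_{it}$ by construction, so by the definition of $\tilde\cS_{it}$, $u_i(\vx_i^\ast)-u_i(\vx''_{it})<\beta_{3t}\rho_{it}(\vx''_{it})$. For the second term, $\vx''_{it}$ is affordable under $\pt$ while $\vx_{it}$ maximizes $u_{it}$ under the same budget $\pt^\top\ve_i$ (it is agent $i$'s bundle at the CE of the sampled utilities), hence $u_{it}(\vx_{it})\ge u_{it}(\vx''_{it})$; combining this with the displayed bound at $\vx''_{it}$ and at $\vx_{it}$ gives $u_i(\vx''_{it})-u_i(\vx_{it})\le\beta_{3t}\rho_{it}(\vx''_{it})+\beta_{3t}\rho_{it}(\vx_{it})$. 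Adding the two estimates, $u_i(\vx_i^\ast)-u_i(\vx_{it})\le 2\beta_{3t}\rho_{it}(\vx''_{it})+\beta_{3t}\rho_{it}(\vx_{it})=\bPE_{it}$, and since $\bPE_{it}\ge 0$ the same bound holds for the positive part. Summing over $i\in[n]$ on $\cE_t$ and combining with Step~1 yields $\lPE(\xt)\le\sum_{i\in[n]}\bPE_{it}$.

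The delicate points, rather than the calculations, are: (i) the appeal to the first welfare theorem, which relies on the standing assumption that a unique CE exists and utilities are strictly increasing (so feasibility is tight and $\vx^\ast$ is PE); and (ii) well-definedness of $\vx''_{it}$, i.e. nonemptiness of the constrained set $\{\vx\in\cX:\pt^\top\vx\le\pt^\top\ve_i,\ \vx\notin\tilde\cS_{it}\}$. Whenever the event $\{\vx_{it}\notin\tilde\cS_{it}\}$ of Lemma~\ref{lem:lemma5-FD} holds, $\vx_{it}$ itself certifies nonemptiness and moreover $\rho_{it}(\vx''_{it})\le\rho_{it}(\vx_{it})$ — which is the form in which $\bPE_{it}$ is used downstream (the $\lPE$ analogue of Lemma~\ref{lem:lemma6}, bounding $\E_t[\rho_{it}(\vx''_{it})]$ by $\tfrac{1}{q_0(1-\delta_t)-\delta_{2t}}\E_t[\rho_{it}(\vx_{it})]$); the remaining realizations contribute only to the failure probability. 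I expect step (ii), and the bookkeeping of the per-round failure probability across agents and rounds, to be the main nuisance, while steps 1–3 are a routine adaptation of the $\LCE$ argument.
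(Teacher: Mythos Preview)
Your proposal is correct and follows essentially the same approach as the paper: bound $\lPE(\xt)$ by $\sum_i\big(u_i(\vx_i^\ast)-u_i(\vx_{it})\big)^+$ via $\vx^\ast\in\PE$, interpolate through $\vx''_{it}$, control the first gap by $\vx''_{it}\notin\tilde\cS_{it}$ and the second by the uniform approximation $|u_i-u_{it}|\le\beta_{3t}\rho_{it}$ on $A_{it}\cap B_{it}$ together with $u_{it}(\vx_{it})\ge u_{it}(\vx''_{it})$. You are in fact more explicit than the paper on the first welfare theorem, on the affordability/optimality step, on handling the positive part, and on the union bound over agents and the well-definedness of $\vx''_{it}$.
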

\begin{proof}
Begin with the definition of $\lPE$, we have
\begin{align*}
    \lPE(x_t) &= \min_{x \in \PE} \sum_{i\in[n]} \left(u_i(x_i) - u_i(x_{it})\right)\\
    &\leq  \sum_{i\in[n]} \left(u_i(x_i^\ast) - u_i(x_{it})\right)\\
    &=  \sum_{i\in[n]} \left(u_i(x_i^\ast) - u_i(x''_{it})\right) +  \sum_{i\in[n]} \left(u_i(x''_i) - u_i(x_{it})\right) \\
    &\leq \beta_{3t} \sum_{i\in[n]} \rho_{it}(x''_{it}) + \sum_{i\in[n]} \left(u_i(x''_i) - u_i(x_{it})\right),
\end{align*}
where the last inequality follows from this definition. Moreover, we have
\[
u_i(x''_{it}) \leq u_{it}(x''_{it}) + \beta_{3t}\rho_{it}(x''_{it}),
\]
and 
\[
u_i(x_{it}) \geq u_{it}(x_{it}) - \beta_{3t}\rho_{it}(x_{it}).
\]
under the event $A_{it} \cap B_{it}$, by Eq~\eqref{eq:lem5-eq1-FD} and Eq~\eqref{eq:lem5-eq2-FD}. Putting these together yields
\[
 \lPE(x_t) \leq 2\beta_{3t}\sum_{i\in[n]}\rho_{it}(x''_{it}) + \beta_{3t}\sum_{i\in [n]}\rho_{it}(x_{it}) \defeq \sum_{i \in [n]} \bPE_{it},
\]
which completes the proof.
\end{proof}

Now we show that the above result leads to the lemma below, which shows a analogous guarantee as we obtained in lemma~\ref{lem:lemma6}.

\begin{lemma}\label{lem:lemma6-FD} 
For $t \geq \max(t_0, t_0')$, 
\[
\mathbb{E}_t[\sum_{i\in [n]}\lFD_{it}] \leq \frac{5}{q_0}\beta_{3t} \mathbb{E}_t[\sum_{i\in [n]} \rho_{it}(\vx_{it})] + n(\delta_{t} + \delta_{2t}).
\]
Here, $t_0'$ is chosen such that, $\forall t > t_0'$, $\delta_{t} < \frac{1}{4}$, $\delta_{2t} < \frac{q_0}{4}$.
\end{lemma}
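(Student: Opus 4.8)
The plan is to prove the per-agent bound $\mathbb{E}_t[\lFD_{it}] \leq \frac{5}{q_0}\beta_{3t}\,\mathbb{E}_t[\rho_{it}(\vx_{it})] + \delta_t + \delta_{2t}$ and then sum over $i\in[n]$; so fix an agent $i$ and a round $t\geq\max(t_0,t_0')$. We use $\lFD_{it}=\max(\ell_{it},\bPE_{it})$ (this is the natural per-agent surrogate, since $\sum_i\ell_{it}=\lCE(\vx_t,\vp_t)\geq\lSI(\vx_t)$ by Lemma~\ref{lem:lSI-lCE} and $\sum_i\bPE_{it}\geq\lPE(\vx_t)$ on the good events by Lemma~\ref{lem:lPE-ub}). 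It therefore suffices to bound both $\ell_{it}$ and $\bPE_{it}$ by the same quantity on the high-probability event $A_{it}\cap B_{it}$, and to control the complement crudely.

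First I would bound $\bPE_{it}$ on $A_{it}\cap B_{it}$. With $\vx''_{it}$ as in Lemma~\ref{lem:lPE-ub} and using that $\vx_{it}$ is affordable, $\mathbb{E}_t[\rho_{it}(\vx_{it})]\geq\rho_{it}(\vx''_{it})\,P_t(\vx_{it}\notin\tilde{\cS}_{it})$; the exploration bound $P_t(\vx_{it}\notin\tilde{\cS}_{it})\geq q_0(1-\delta_t)-\delta_{2t}$ of Lemma~\ref{lem:lemma5-FD}, together with $\delta_t<\tfrac14$ and $\delta_{2t}<\tfrac{q_0}{4}$ for $t>t_0'$ (so the right side exceeds $q_0/2$), gives $\rho_{it}(\vx''_{it})\leq\tfrac{2}{q_0}\mathbb{E}_t[\rho_{it}(\vx_{it})]$. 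Substituting into $\bPE_{it}=2\beta_{3t}\rho_{it}(\vx''_{it})+\beta_{3t}\rho_{it}(\vx_{it})$ from Lemma~\ref{lem:lPE-ub} yields $\bPE_{it}\leq\tfrac{4}{q_0}\beta_{3t}\mathbb{E}_t[\rho_{it}(\vx_{it})]+\beta_{3t}\rho_{it}(\vx_{it})$ on $A_{it}\cap B_{it}$. The same computation with $\cS_{it}$ and $\vx'_{it}$ in place of $\tilde{\cS}_{it}$ and $\vx''_{it}$ is precisely the first half of the proof of Lemma~\ref{lem:lemma6} and produces the identical upper bound for $\ell_{it}$; taking the maximum, $\lFD_{it}\leq\tfrac{4}{q_0}\beta_{3t}\mathbb{E}_t[\rho_{it}(\vx_{it})]+\beta_{3t}\rho_{it}(\vx_{it})$ on $A_{it}\cap B_{it}$.

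Next I would absorb the failure event. Since all utilities lie in $[0,1]$ and $\bPE_{it}$ is an upper bound on $u_i(\vx_i^\ast)-u_i(\vx_{it})\in[0,1]$, we may take $\lFD_{it}\leq 1$ unconditionally, while $P_t(A_{it}^c\cup B_{it}^c)\leq\delta_t+\delta_{2t}$ by Lemmas~\ref{lem:lemma1} and~\ref{lem:lemma2}. Splitting $\mathbb{E}_t[\lFD_{it}]$ over $A_{it}\cap B_{it}$ and its complement, and using $q_0<1$ to absorb the extra $\beta_{3t}\mathbb{E}_t[\rho_{it}(\vx_{it})]$ into the $\tfrac{5}{q_0}$ coefficient, gives the per-agent bound; summing over $i\in[n]$ completes the proof.

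I expect the main obstacle to be conceptual rather than computational: unlike $\lSI$, which is dominated by $\lCE$ essentially for free because the endowment is always affordable, $\lPE$ is measured against the \emph{unknown} equilibrium allocation $\vx^\ast$, which is why one must introduce the separate comparison set $\tilde{\cS}_{it}$ and re-derive its exploration guarantee (Lemma~\ref{lem:lemma5-FD}) and the surrogate bound (Lemma~\ref{lem:lPE-ub}); once those are in hand the present lemma is a near-verbatim replay of Lemma~\ref{lem:lemma6}. The one bookkeeping point that needs care is that the crude $[0,1]$ bound on $\lFD_{it}$ on the failure event must be multiplied only by the per-agent failure probability $\delta_t+\delta_{2t}$, not by $n$ times it, so that summing over agents yields exactly $n(\delta_t+\delta_{2t})$.
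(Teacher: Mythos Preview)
Your approach is essentially the paper's own: bound $\rho_{it}(\vx''_{it})\le \tfrac{2}{q_0}\E_t[\rho_{it}(\vx_{it})]$ via Lemma~\ref{lem:lemma5-FD} and the choice of $t_0'$, plug into $\bPE_{it}$, observe that the CE-loss bound from Lemma~\ref{lem:lemma6} already controls the SI side, take the max, and absorb the failure event with probability $\delta_t+\delta_{2t}$ per agent. The only wrinkle is your identification $\lFD_{it}=\max(\ell_{it},\bPE_{it})$: since $\bPE_{it}=2\beta_{3t}\rho_{it}(\vx''_{it})+\beta_{3t}\rho_{it}(\vx_{it})$ is already the \emph{upper bound} and need not lie in $[0,1]$, you cannot literally take $\lFD_{it}\le 1$ on the bad event with that definition; the fix (implicit in the paper) is to let $\lFD_{it}$ be the genuine per-agent loss $\max\big(\ell_{it},(u_i(\vx_i^\ast)-u_i(\vx_{it}))^+\big)\in[0,1]$ and use $\bPE_{it}$ only as its bound on $A_{it}\cap B_{it}$.
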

\begin{proof}
First, note that, by the definition of $\vx_{it}''$,
\[
\E_t[\rho_t(\vx_{it})] \geq \E_t[\rho_t(\vx_{it})|\vx_{it} \notin \cS_{it}] P(\vx \notin \cS_{it}) \geq \rho_{it} (\vx_{it}'') P(\vx \notin \cS_{it}).
\]
Moreover, combining the above with Lemma~\ref{lem:lemma5-FD}, we have
\[
\rho_{it} (\vx_{it}'') \leq \frac{\E_t[\rho_{it}(\vx_{it})]}{q_0(1-\delta_{t}) - \delta_{2t}}.
\]
Select $t_0'$ such that, $\forall t \geq t_0'$, $\delta_{t} = \frac{1}{4}$, and $\delta_{2t} \leq \frac{q_0}{4}$, then we have:
\[
\rho_{it}(\vx_{it}'') \leq \frac{2}{q_0} \E_t[\rho_{it}(\vx_{it})].
\]
Also, under $A_{it} \cap B_{it}$,
\[
\|\theta_{i}^\ast - \theta_{it}\|_{Q_{it}} \leq \|\theta_i^\ast - \bar \theta_{it}\|_{Q_{it}} + \|\bar \theta_{it} - \theta_{it}\|_{Q_{it}} \leq \beta_{1t} + \beta_{2t},
\]
where the first inequality follows from triangle inequality, and the second one follows from the definitions of $A_{it}$ and $B_{it}$. Hence,
\[
|\mu_{it}(\vx) - \mu_i(\vx) \leq L_\mu (\beta_{1t} + \beta_{2t})\rho_{it}(\vx) = \beta_{3t} \rho_{it}(\vx).
\]
Therefore, we have 
\begin{align*}
    \sum_{i\in [n]}\bPE_{it} &\leq2\beta_{3t}\sum_{i\in [n]}\rho_{it}(\vx_{it}'') + \beta_{3t}\sum_{i\in [n]}\rho_{it}(\vx_{it})\\
    &\leq \frac{4}{q_0} \beta_{3t} \E[\sum_{i\in [n]}\rho_{it}(\vx_{it})] + \beta_{3t} \sum_{i\in [n]}\rho_{it}(\vx_{it}).
\end{align*}

Moreover, by Lemma~\ref{lem:lSI-lCE} and eq~\eqref{lCE-bound} which holds under the same condition of $A_{it} \cap B_{it}$, we have
\begin{align*}
\sum_{i\in [n]}\lFD_{it}  &\leq \sum_{i\in [n]}\max\{\lPE_{it}, \lSI_{it}\} \\
&\leq2\beta_{3t}\sum_{i\in [n]}\rho_{it}(\vx_{it}'') + \beta_{3t}\sum_{i\in [n]}\rho_{it}(\vx_{it})\\
    &\leq \frac{4}{q_0} \beta_{3t} \E[\sum_{i\in [n]}\rho_{it}(\vx_{it})] + \beta_{3t} \sum_{i\in [n]}\rho_{it}(\vx_{it}).
\end{align*}

Therefore, we have 
\begin{align*}
    \mathbb{E}_t[\sum_{i\in [n]}\lFD_{it}] \leq \mathbb{E}_t[\sum_{i\in [n]}\bPE_{it}] & \leq \frac{4}{q_0} \beta_{3t} \E[\sum_{i\in [n]}\rho_{it}(\vx_{it})] + \beta_{3t}\E_t[\sum_{i\in [n]}\rho_{it}(\vx_{it})] + n(\delta_{t} + \delta_{2t})\\
    &\leq \frac{5}{q_0}\beta_{3t} \mathbb{E}_t[\sum_{i\in [n]}\rho_{it}(\vx_{it})] + n(\delta_{t} + \delta_{2t}),
\end{align*}
which completes the proof.
\end{proof}

With the above lemmas at hand, we are now ready to provide a proof of Theorem~\ref{thm:TS-anytime} for $\LFD_T$.

\subsection{Proof of Theorem~\ref{thm:TS-anytime} for $\LFD$}\label{app:proof-TS-anytime-LFD}

\begin{proof}
Lemma~\ref{lem:lemma6-FD} shows a analog guarantee as we obtained in lemma~\ref{lem:lemma6} for the $\LFD$ loss function. Therefore, following the same steps in lemma~\ref{lem:lemma7}, we have that with probability at least $1-\delta'$ where $\delta'$ will be specified momentarily,
\begin{align}\label{eq:LPE-bound}
\begin{split}
\LFD_T &= \sum_{t=t_0}^T \sum_{i=1}^n \lFD_{it} \\
&\leq n\sum_{t=1}^T (\delta_{t} + \delta_{2t}) + \frac{n\beta_{3t}}{q_0}\frac{\sqrt{T}}{\sqrt{M}}\bigg( 5c_{\phi}m \sqrt{ \frac{1}{\log \left(1 + \frac{\|\phi(\vone)\|_2}{\sqrt{M}} \right)}  \log \left(\frac{\|\phi(\vone)\|_2^2T}{M}\right) }  \\
&\quad +7c_{\phi}^2 \sqrt{m} \sqrt{2\log(\frac{1}{\delta'})} \bigg)\\
&= n\sum_{t=1}^T (\delta_{t} + \delta_{2t}) + \tilde{O} \bigg( n\frac{m^2}{\sqrt{M}}\sqrt{T} \left( \log(T) + \log \left(\frac{1}{\delta'}\right) \right) \bigg),
\end{split}
\end{align}
Choose $\delta_{t} =\delta_{2t} = \frac{2\delta}{n\pi^2 t^2}$. Then,
\[
\sum_{t=1}^T (\delta_{t} + \delta_{2t}) \leq \frac{2}{3}\delta.
\]
Also choose $\delta' = \frac{\delta}{3}$, then by Eq~\eqref{eq:LPE-bound}, with probability at least $1-\delta$,
\[
 \LFD_T = O\left(\frac{nm^2}{\sqrt{M}}\sqrt{T}\left(\log\left(\frac{\delta}{3}\right) + \log(T))\right)\right).
\]
\end{proof}

\subsection{Proof of Theorem~\ref{thm:TS-finitehorizon} for $\LFD$}\label{app:proof-TS-finite-time-LFD}
\begin{proof}
The theorem results follow from eq~\eqref{eq:LPE-bound}. Choose $\delta_{1t} =\delta_{2t} = \delta' = \frac{1}{T}$ and denote the event where $ \LFD_T = O\left(\frac{nm^2}{\sqrt{M}}\sqrt{T}\left(\log\left(\frac{\delta}{3}\right) + \log(T))\right)\right)$ holds as $\mathcal E$. Then,
\[
\E[\LFD_T] = \E[\LPE_T|\cE] + \E[\LFD_T|\cE^c] P(\cE^c) \leq O\left(\frac{nm^2}{\sqrt{M}}\sqrt{T}\left(\log\left(\frac{\delta}{3}\right) + \log(T))\right)\right),
\]

% Furthermore, by Proposition~\ref{prop:FD-CE}, we immediately obtain the same bound of $\E[\LFD_T]$
where $M \geq m$. This completes the proof.

\end{proof}

\section{Additional Experimental Details and Results}\label{app:exp-additional}

In this section, we present the simulation results with the Amdahl utilities, as described in Section~\ref{sec:experiments}, and additional implementation details. 

Figure~\ref{fig:results-amd} presents the simulation results for agents with the Amdahl's utilities.

\begin{figure*}[!ht]


\centering
\begin{tabular}{ccc} 
\includegraphics[width=0.27\textwidth]{figures/AMD_f02_82.pdf} &
\includegraphics[width=0.27\textwidth]{figures/AMD_f03_82.pdf} &
\includegraphics[width=0.27\textwidth]{figures/AMD_f05_82.pdf}
\end{tabular}
\caption{
The CE loss $\LCE_T$ vs the number of rounds $T$. evaluated with $m=2$ resource types and $n=8$ agents and Amdahl's utilities.
The three figures correspond to $f_i=0.2$,
$f_i=0.3$, and $f_i=0.5$ (as in Section~\ref{sec:experiments}).
All figures show results which are averaged over 10 runs, and the shaded region shows the standard
error at each time $T$.
\label{fig:results-amd}
} 
\end{figure*}

Empirically, we compute $\LCE$ by maximizing each agent's utility subject to the budget constraint. We approximate this by randomly sampling feasible allocations $y$ from a simplex, accept those that cost no more price than the agent's endowment, lastly take the maximum. We sampled up to 50 accepted samples in each round. All experiments are run on a AWS EC2 p3.2xlarge machine.
\section{Further Discussions}\label{app:sec:additional-discussion}
\subsection{Further Background on Fair Division and Exchange Economies} \label{app:EE-background}

Since the seminal work of~\citet{varian1973equity}, fair division of multiple resource types has
received significant attention in the game theory, economics, and computer systems literature. We provide more background on the related works in the fair division literature and their applications in this section.

Among the theoretical works in fair division, one of the most common perspectives on this problem is as an exchange economy (or as a Fisher market,
which is a special case of an EE)~\citep{varian1973equity,mas1995microeconomic,
gutman2012, crockett2008learning,
tiwari2009competitive, budish2017course, babaioff2019fair, babaioff2021competitive}.
% market, which is a special case of an exchange economy), where uses begin with their endowments, exchange the allocations based on their preference, and find a competitive equilibrium~\citep{varian1973equity,mas1995microeconomic, gutman2012, crockett2008learning,
% tiwari2009competitive, budish2017course, babaioff2019fair, babaioff2021competitive}.
% 

Fair allocation mechanisms have been deployed in many
practical resource allocation tasks when compute resources are shared by multiple users~\citep{zahedi2018amdahl}. There have also been applications of other market-based resource allocation schemes for data centers and power grids~\citep{lai2005tycoon,wolski2001analyzing, hindman2011mesos, kubernetes, yarn}. 
One line of work in this setting studied fair division when the resources in question are
perfect complements; some examples include dominant resource fairness
and its variants~\citep{ghodsi2011dominant,parkes2015beyond, gutman2012,ghodsi2013choosy, li2013egalitarian,dolev2012no}.
Although the assumption of perfect complement resource types leads to computationally simple mechanisms, in many practical applications, there is ample substitutability between resources, and hence the above mechanisms can be inappropriate.
For example, in compute clusters, CPUs and GPUs are often interchangeable for many jobs, albeit
with different performance characteristics. Indeed, in this work, we in particular focused on the applications of EE and fair division mechanisms for computing resource allocation.

\subsection{Computation of the FD loss}

We note that one main challenge for computing the FD loss is that we need to approximate the Pareto Front and then take a minimum over it. To approximate it, even in the simplest two agent two resource set up, this requires  grid search on a 4D space which can be computationally prohibitive. In our experiments, the dimensions are 15 (3 resources by 5 agents) and 16 (2 resources by 8 agents), for which grid search is not feasible. Given that efficient computations over the Pareto frontier have remain a technical challenge, we focused the evaluations on the CE loss in this work.

\subsection{On the loss functions}
\label{sec:apploss}

We provide an example to demonstrate that the FD loss~\eqref{eqn:fdloss}, while
more interpretable than the CE loss~\eqref{eqn:celoss}, may not capture all properties of
an equilibrium.

For this consider the following example with $n=3$ agents and $m=2$ resources where
the endowments of agent 1, agent 2, and agent 3 are $e_1=(0.45, 0.05)$, $e_2=(0.45, 0.05)$, and
$e_3=(0.1, 0.9)$ respectively.
Their utilities are:
\[
\util_1(x_1) = 0.1 x_{11} + x_{12},
\hspace{0.3in}
\util_2(x_2) = 0.2 x_{21} + x_{22},
\hspace{0.3in}
\util_3(x_3) = x_{31} + 0.1 x_{32}.
\]
The utilities of the three users if they were to simply use their endowment is,
$\util_1(e_1) = 0.1 \times 0.45 + 0.05 = 0.095$,
$\util_2(e_2) = 0.14$,
and
$\util_3(e_3) = 0.19$.
We find that while agents 1 and 2 benefit more from the second resource, they have more of the first
resource in their endowments and vice versa for agent 3.
By exchanging resources, we can obtain a more efficient allocation.

The unique equilibrium prices for the two goods are $\peq = (1/2, 1/2)$ and
the allocations are
$\xeq_1 = ( 0, 0.5)$ for agent 1,
$\xeq_2 = ( 0, 0.5)$ for agent 2,
and
$\xeq_3 = ( 1.0, 0.0)$ for agent 3.
The utilities of the agents under the equilibrium allocations are
$\util_1(x_1) = 0.5$,
$\util_2(x_2) = 0.5$,
and
$\util_3(x_3) = 1.0$.
Here, we find that by the definition of CE, $\lPE(\xeq, \peq) = 0$.
It can also be verified that $\lFD(\xeq, \peq) = 0$.
% In particular, we find that the price for the second good is high since there is a large demand for
% the item.
% Therefore, user 3, who brought more of this resource, is able to bargain more of the first good
% and obtain a large share of the first resource.

In contrast, consider the following allocation for the 3 users:
$x_1 = ( 0.35, 0.49)$ for agent 1,
$x_2 = ( 0.35, 0.49)$ for agent 2,
and
$x_3 = (0.3, 0.02)$ for agent 3.
Here, the utilities are
$\util_1(x_1) = 0.1 \times 0.35 + 0.49 = 0.525$,
$\util_2(x_2) = 0.56$,
and
$\util_3(x_3) = 0.3002$.
This allocation is both PE (as the utility of one user can only be increased by taking resources
from someone else),
and SI (as all three users are better off than having their endowments).
Therefore, $\lFD((x_1, x_2, x_3)) = 0$.
However, user 3 might complain that their contribution of resource 2 (which was useful for users 1
and 2) 
has not been properly accounted for in the allocation.
Specifically, there do not exist a set of prices $p$ for which $\lPE(x, p) = 0$.
This example illustrates the role of prices in this economy: it allows us to value the resources
relative to each other based on the demand.

\end{document}